	\pgfplotsset{compat=newest}
	\newcommand{\tr}{\mathop{\rm tr}}
	\newcommand{\E}{\mathop{\mathbb{E}}}
	\newcommand{\diag}{\mathop{\rm diag}}
	\newcommand{\asto}{\to_{a.s.}}
	\newcommand{\probto}{\to_{prob.}}
	\newtheorem{assumption}{{\bf Assumption.}}
	\newtheorem{theorem}{{\bf Theorem.}}
	\newtheorem{remark}{{\bf Remark.}}
	\newtheorem{corollary}{{\bf Corollary.}}
	\newtheorem{proposition}{{\bf Proposition.}}
	\newtheorem{lemma}{{\bf Lemma.}}
\let\l@ENGLISH\l@english
\begin{document}%
\title{A Large Dimensional Study of Regularized Discriminant Analysis}
\author{Khalil Elkhalil,~\IEEEmembership{Student Member,~IEEE,} Abla Kammoun,~\IEEEmembership{Member,~IEEE}, Romain Couillet,~\IEEEmembership{Senior Member,~IEEE},  Tareq~Y.~Al-Naffouri,~\IEEEmembership{Member,~IEEE}, and Mohamed-Slim Alouini,~\IEEEmembership{Fellow,~IEEE}

\thanks{
	Part of this work related to the derivation of an asymptotic equivalent for the R-QDA probability of misclassification has been accepted for publication in the IEEE MLSP workshop 2017 \cite{khalil_mlsp}. \par K. Elkhalil, A. Kammoun, T. Y. Al-Naffouri and M.-S. Alouini are with the Electrical Engineering Program, King Abdullah University of Science and Technology, Thuwal, Saudi Arabia; e-mails: \{khalil.elkhalil, abla.kammoun, tareq.alnaffouri, slim.alouini\}@kaust.edu.sa.\par  R. Couillet is with the CentraleSupélec, Paris, 91190 Châtenay-Malabry,
	France; e-mail: romain.couillet@centralesupelec.fr.
}

}

\maketitle
\vspace{-15mm}
\begin{abstract}
    In this paper, we conduct a large dimensional study of regularized discriminant analysis classifiers with its two popular variants known as regularized LDA and regularized QDA. The analysis is based on the assumption that the data samples are drawn from a Gaussian mixture model with different means and covariances and relies on tools from random matrix theory (RMT). We consider the regime in which both the data dimension and training size within each class
    tends to infinity with fixed ratio. Under mild assumptions, we show that the probability of misclassification converges to a deterministic quantity that describes in closed form the performance of these classifiers in terms of the class statistics as well as the problem dimension. The result allows for a better understanding of the underlying classification algorithms in terms of their performances in practical large
    but finite dimensions. Further exploitation of the results permits to optimally tune the regularization parameter with the aim of minimizing the probability of misclassification. The analysis is validated with numerical results involving synthetic as well as real data from the USPS dataset yielding a high accuracy in predicting the performances and hence making an interesting connection between theory and practice. 
\end{abstract}
\begin{IEEEkeywords}
	Linear discriminant analysis, quadratic discriminant analysis, classification, random matrix theory, consistent estimator.
\end{IEEEkeywords}
\section{Introduction}
\label{introduction}

Linear Discriminant analysis (LDA) is an old concept that dates back to Fisher that generalizes the Fisher discriminant \cite{stat_patern_recog, pattern_classification}. Given two statistically defined datasets, or classes, the Fisher discriminant analysis is designed to maximize the ratio of the variance between classes to the variance within classes and is useful for both classification and dimensionality reduction \cite{bishop,tibshirani}. LDA, on the other hand,
relying merely on the concept of model based
classification  \cite{bishop}, is conceived so that the misclassification rate is minimized under a Gaussian assumption for the data. Interestingly, both ideas lead to the same
classifier when the data of both classes share the same covariance matrix. Maintaining the Gaussian assumption but considering the general case of distinct covariance matrices, quadratic discriminant analysis (QDA) becomes the optimal classifier in terms of the minimization of the misclassification rate when both statistical means and covariances of the classes are known. \par 
In practice,  these parameters are rarely given and only estimated based on training data.
%
Assuming the number of training samples is high enough, QDA and LDA should remain asymptotically optimal.
It is however often the case in practice that the data dimension is large, if not larger, than the number of observations. In such circumstances, the covariance matrix estimate becomes ill-conditioned or even non invertible, which leads to poor classification performance. 

To overcome this difficulty, many techniques can be considered. One can resort to dimensionality reduction so as to embed the data in a low-dimensional space that retains most of the useful information from a classification point of view \cite{ghodsi_tuto, survey_dim_reduction}. This ensures a higher number of training samples than the effective data size. Questions as to which dimensions
to be selected or to what extent dimension should be reduced remain open. Another alternative involves the regularized versions of LDA and QDA denoted, throughout this paper, by R-LDA and R-QDA \cite{tibshirani,zollanvari}. Both approaches constitute the main focus of the article.  
\par 
There exist many works on the performance analysis of discriminant analysis classifiers. In \cite{qda_stoachastic1}, an exact analysis of QDA is made by relying on properties of Wishart matrices. This allows for exact expressions of the
probability misclassification rate for all sample size $n$ and dimension $p$. 
This analysis is however only valid as long as $n \geq p$ . Generalizing this analysis to regularized versions is however beyond analytical reach. This motivated further studies to consider asymptotic regimes. In \cite{slda, sqda} the authors consider the large $p$ asymptotics and observe that LDA and QDA fall short even when the exact covariance matrix is known. \cite{slda} thus proposed improved LDA and PCA that exploit sparsity assumptions on the statistical means and covariances, however not necessarily met in practice. This leads us to consider in the present work the \emph{double asymptotic regime} in which both $p$ and $n$ tend to infinity with fixed ratio. 
This regime leverages results from random matrix theory \cite{girko, silverstein, walid_new, couillet, gram_mixture}. For LDA analysis, this regime was first considered in \cite{raudys} under the assumption of equal covariance matrices. It was  extended  to the analysis of R-LDA in \cite{zollanvari} and to the Euclidean distance discriminant rule in \cite{euclidean_DA}. To the best of the authors' knowledge, the general case in which the covariances across
classes are different was never treated. As shown in the course of the paper, a major difficulty for the analysis resides in choosing the   assumptions governing the growth rate of means and covariances to avoid nontrivial asymptotic classification performances. 
\par 
This motivates the present work. Particularly, we  propose a large dimensional analysis of both R-LDA and R-QDA in the double asymptotic regime for general Gaussian assumptions. Precisely, under technical, yet mild, assumptions controlling the distances between the class means and covariances, we prove that the probability of misclassification converges to a non-trivial deterministic quantity that only depends on the class statistics as well as the ratio $p/n$. 
Interestingly, R-LDA and R-QDA require different growth regimes, reflecting a fundamental difference in the way they leverage the information about the means and covariances. Notably, R-QDA requires a minimal distance between class means of order $O(\sqrt{p})$ while R-LDA necessitates a difference in means of  order $O(1)$. However, R-LDA does not seem to leverage the information about the distance in covariance matrices. The results of \cite{zollanvari} are in particular recovered when the spectral norm of the difference of covariance matrices is small. These findings lead to  insights into when LDA or QDA should be preferred in practical scenarios. 
\par
To sum up, our main results are as follows: 
\begin{itemize}
	\item Under mild assumptions, we establish the
	convergence of the misclassification rate for both R-LDA and R-QDA classifiers to a
	deterministic error as a function of the statistical parameters associated with each class. 
	\item  We design a consistent estimator for the misclassification rate for both R-LDA and R-QDA classifiers allowing for a pre-estimate of the optimal regularization parameter.
	\item We validate our theoretical findings on both synthetic and real data drawn
	from the USPS dataset and illustrate the good accuracy of our results in both
	settings.
\end{itemize}
The remainder is organized as follows.  We give an overview of discriminant analysis for binary classification in Section \ref{section:binary_classification}. The main results are presented in Section \ref{results}, the proofs of which are deferred to the Appendix. In Section \ref{section:G_estimator}, we design a consistent estimator of the misclassification error rate. We validate our analysis for real data in Section \ref{section:experiments} and conclude the article in Section \ref{section:conclusion}. \\ 
\textbf{Notations:} 
Scalars, vectors and matrices are respectively denoted by non-boldface, boldface lowercase and boldface uppercase characters. $\mathbf{0}_{p\times n}$ and $\mathbf{1}_{p\times n}$ are respectively the matrix of zeros and ones of size $p \times n$, $\mathbf{I}_p$ denotes the $p\times p$ identity matrix. The notation $\|.\|$ stands for the Euclidean norm for vectors and the spectral norm for matrices. $\left(.\right)^T$, $\tr\left(.\right)$ and $|.|$ stands for the transpose, the trace and the determinant of a matrix respectively. For two functionals $f$ and $g$, we say that $f = O\left(g\right)$, if $\exists \:0<M < \infty$ such that $|f| \leq M g$. $\mathbb{P}\left(.\right)$, $\to_d$, $\to_{prob.}$ and $\to_{a.s.}$ respectively denote the probability measure, the convergence in distribution, the convergence in probability and the almost sure convergence of random variables. $\Phi\left(.\right)$ denotes the cumulative density function (CDF) of the standard normal distribution, i.e. $\Phi\left(x\right) = \int_{-\infty}^{x} \frac{1}{\sqrt{2\pi}}e^{-\frac{t^2}{2}}dt$.
%



\section{Discriminant Analysis for Binary Classification}
\label{section:binary_classification}
This paper studies  binary discriminant analysis techniques which employs a discriminant rule to assign for an input data vector the class to which it most likely belongs. The discriminant rule is designed based on $n$ available training data with known class labels. In this paper, we consider the case in which a bayesian discriminant rule is employed. Hence, we assume that observations from class $\mathcal{C}_i$, $i \in \{0,1\}$ are independent and are sampled from a multivariate Gaussian distribution with mean
$\bm{\mu}_i \in \mathbb{R}^{p \times 1}$ and non-negative covariance matrix $\mathbf{\Sigma}_i \in \mathbb{R}^{p \times p}$.  Formally speaking, an observation vector  $\mathbf{x} \in \mathbb{R}^{p \times p}$ is classified to $\mathcal{C}_i$, $i \in \{0,1\}$, if 
\begin{equation}
\label{gaussian_model}
\mathbf{x} = \bm{\mu}_i + \bm{\Sigma}_i^{1/2} \bm{z}, \quad \bm{z}\sim \mathcal{N}\left(\mathbf{0},\mathbf{I}_p\right).
\end{equation}
Let \begin{equation} 
\begin{split}
\label{discriminant_qda}
& W^{QDA}\left(\mathbf{x}\right)   = -\frac{1}{2}\log \frac{|\mathbf{\Sigma}_0|}{|\mathbf{\Sigma}_1|}-\frac{1}{2} {\bf x}^{T}\left(\mathbf{\Sigma}_0^{-1}-\mathbf{\Sigma}_1^{-1}\right){\bf x} \\ & +{\bf x}^{T}\mathbf{\Sigma}_0^{-1}\boldsymbol{\mu}_0  -{\bf
	x}^{T}\mathbf{\Sigma}_1^{-1}\boldsymbol{\mu}_1
-\frac{1}{2}\boldsymbol{\mu}_0^{T}\mathbf{\Sigma}_0^{-1}\boldsymbol{\mu}_0+\frac{1}{2}\boldsymbol{\mu}_1^{T}\mathbf{\Sigma}_1^{-1}\boldsymbol{\mu}_1 \\ & -\log\frac{\pi_1}{\pi_0}.
\end{split}
\end{equation}
As stated in \cite{mclachlan}, for distinct covariance matrices $\mathbf{
	\Sigma}_0$ and $\mathbf{\Sigma}_1$, the discriminant rule is summarized as follows 
\begin{align}
\label{qda_classification_rule}
\left\{\begin{matrix}
\mathbf{x} \in \mathcal{C}_0 & \: \quad\quad\quad\quad \textnormal{if} \quad W^{QDA} \left(\mathbf{x}\right)>0. \\
\mathbf{x} \in \mathcal{C}_1& \textnormal{otherwise}.
\end{matrix}\right.
\end{align}
When the considered classes have the same covariance matrix, i.e., $\mathbf{\Sigma}_0=\mathbf{\Sigma}_1$, the discriminant  function simplifies to \cite{tibshirani,bishop,zollanvari}
\begin{align}
\label{discriminant_lda}
W^{LDA}\left(\mathbf{x}\right) = \left(\mathbf{x}-\frac{\bm{\mu}_0+\bm{\mu}_1}{2}\right)^T \mathbf{\Sigma}^{-1}\left(\bm{\mu}_0-\bm{\mu}_1\right) - \log\frac{\pi_1}{\pi_0}.
\end{align}
Classification obeys hence the following rule:
\begin{align}
\label{lda_classification_rule}
\left\{\begin{matrix}
\mathbf{x} \in \mathcal{C}_0 & \: \quad\quad\quad\quad \text{if} \quad  W^{LDA}\left(\mathbf{x}\right) > 0 \\
\mathbf{x} \in \mathcal{C}_1& \textnormal{otherwise}.
\end{matrix}\right.
\end{align}
Since $W^{LDA}$ is linear in ${\bf x}$, the corresponding classification method is referred to as linear discriminant analysis.   
As can be seen from  (\ref{qda_classification_rule}) and (\ref{lda_classification_rule}), the classification rules assume  the knowedge of the class statistics, namely their associated covariance matrices and  mean vectors. In practice, these statistics can be estimated using the available training data.   
As such,  we assume that $n_i$, $i \in \{0,1\}$ independent training samples $\mathcal{T}_0=\left \{ \mathbf{x}_l \in \mathcal{C}_0 \right \}_{l=1}^{n_0}$ and $\mathcal{T}_1=\left \{ \mathbf{x}_l \in \mathcal{C}_1 \right \}_{l=n_0+1}^{n_0+n_1=n}$
are respectively available to estimate the mean and the covariance matrix of each class $i$\footnote{We assume that $\frac{n_i}{n_0+n_1} \to \pi_i$ for $i \in \{0,1\}$ which is valid under random sapmling. Therefore, we do not consider the problem of separate sampling when $n_0$ and $n_1$ are not chosen following the priors $\pi_0$ and $\pi_1$.}. For that, we consider the following sample estimates
\begin{align*}
\hat{\boldsymbol{\mu}}_i&  = \frac{1}{n_i} \sum_{l \in \mathcal{T}_i} \mathbf{x}_l,   \quad i \in \{0,1\}\\
\widehat{\mathbf{\Sigma}}_i & = \frac{1}{n_i-1} \sum_{l \in \mathcal{T}_i} \left(\mathbf{x}_l-\hat{\boldsymbol{\mu}}_i\right)\left(\mathbf{x}_l-\hat{\boldsymbol{\mu}}_i\right)^T,   \quad i \in \{0,1\}\\
\widehat{\mathbf{\Sigma}}&= \frac{\left(n_0-1\right)\widehat{\mathbf{\Sigma}}_0 +\left(n_1-1\right)\widehat{\mathbf{\Sigma}}_1 }{n-2},
\end{align*}
where $\widehat{\mathbf{\Sigma}}$ is the pooled sample covariance matrix for both classes.
To avoid singularity issues when $n_i < p$, we use the  ridge estimator of the inverse of the covariance matrix  \cite{tibshirani}   
\begin{align}
\mathbf{H} & = \left(\mathbf{I}_p + \gamma \widehat{\mathbf{\Sigma}}\right)^{-1} \label{eq:H} ,\\
\mathbf{H}_i & = \left(\mathbf{I}_p + \gamma \widehat{\mathbf{\Sigma}}_i\right)^{-1},\label{eq:Hi} \quad i \in \{0,1\}
\end{align}
where $\gamma > 0$ is a regularization parameter. Replacing $\boldsymbol{\Sigma}^{-1}$ and $\boldsymbol{\Sigma}_i$ for $i\in\{0,1\}$ by  \eqref{eq:H} and \eqref{eq:Hi} into \eqref{discriminant_lda} and \eqref{qda_classification_rule}, we obtain the following discriminant rules
\begin{equation}
\label{est_discriminant_lda}
\widehat{W}^{R-LDA}\left(\mathbf{x}\right) = \left(\mathbf{x}-\frac{\hat{\boldsymbol{\mu}}_0+\hat{\boldsymbol{\mu}}_1}{2}\right)^T \mathbf{H}\left(\hat{\boldsymbol{\mu}}_0-\hat{\boldsymbol{\mu}}_1\right) - \log\frac{\pi_1}{\pi_0}.
\end{equation}
\begin{equation}
\label{est_discriminant_qda}
\begin{split}
\widehat{W}^{R-QDA}\left(\mathbf{x}\right) & = \frac{1}{2}\log \frac{|{\bf H}_0|}{|{\bf H}_1|}-\frac{1}{2}\left({\bf x}-\hat{\boldsymbol{\mu}}_0\right)^{T}{\bf H}_0\left({\bf x}-\hat{\boldsymbol{\mu}}_0\right) \\ & +\frac{1}{2}\left({\bf x}-\hat{\boldsymbol{\mu}}_1\right)^{T}{\bf H}_1\left({\bf x}-\hat{\boldsymbol{\mu}}_1\right)-\log\frac{\pi_1}{\pi_0}.
\end{split}
\end{equation}
The corresponding classification methods will be denoted respectively by R-LDA and R-QDA.   
Conditioned on the training samples $\mathcal{T}_i$, $i \in \{0,1\}$, the classification errors associated with R-LDA and R-QDA when ${\bf x}$ belongs to class  $\mathcal{C}_i$ are given by 
\begin{align}
\label{lda_conditional_error}
\epsilon_i^{R-LDA}& = \mathbb{P}\left[\left(-1\right)^i\widehat{W}^{R-LDA}\left(\mathbf{x}\right) < 0 \: | \mathbf{x} \in \mathcal{C}_i, \mathcal{T}_0,\mathcal{T}_1\right]. \\
\label{qda_conditional_error}
\epsilon_i^{R-QDA} &= \mathbb{P}\left[\left(-1\right)^i\widehat{W}^{R-QDA}({\bf x})<0 \:| \mathbf{x} \in \mathcal{C}_i,\mathcal{T}_0,\mathcal{T}_1 \right].
\end{align}
The total classification errors are respectively given by
\begin{align*}
\epsilon^{R-LDA}&=\pi_0\epsilon_0^{R-LDA}+\pi_1\epsilon_1^{R-LDA}.\\
\epsilon^{R-QDA}&=\pi_0\epsilon_0^{R-QDA}+\pi_1\epsilon_1^{R-QDA}.
\end{align*}
In the following, we propose to analyze the asymptotic classication errors of both R-LDA and R-QDA  when $p,n_i$ grow large at the same rate. For R-LDA, our results cover a more general setting than the one studied in \cite{zollanvari}, in that they apply to the case where both classes have distinct covariance matrices.
\section{Main Results}
\label{results}
The main contributions of the present work are two fold. First, we carry out an asymptotic analysis of the classification error rate for both R-LDA and R-QDA, showing that they converge to some deterministic quantities that depend solely on the observations statistics associated with each class. Such a result allows a better understanding of the impact of these parameters on the performances. Second, we build consistent estimates of the asymptotic misclassification error rates for both
estimators. An estimator of the misclassification error rate has been provided in \cite{zollanvari} but for the R-LDA when the classes are assumed to have identical covariance matrices. Our results regarding R-LDA in this respect extends the one in \cite{zollanvari} when the covariance matrices are not equal. The treatment of R-QDA is however new and constitute the main contribution of the present work.         

\subsection{Asymptotic Performance of R-LDA with Distinct Covariance Matrices}
\label{LDA_distinct}
In this section, we present an asymptotic analysis of the R-LDA classifier. Our analysis is mainly based on recent results from RMT concerning some properties of Gram matrices of mixture models \cite{gram_mixture}. We recall that \cite{zollanvari} made a similar analysis of R-LDA in the double asymptotic regime when both classes have a common covariance matrix, thereby not requiring these advanced tools. As such, our results can be viewed as a generalization of
\cite{zollanvari} when both classes have distinct covariance matrices. This permits to evaluate the performance of R-LDA in practical scenarios when the assumption of common covariance matrices cannot always be guaranteed. To allow derivations, we shall consider the following growth rate assumptions
\begin{assumption}[Data scaling]
	\label{As:1L}
	$\frac{p}{n} \to c \in \left(0,\infty\right)$.
\end{assumption}
\begin{assumption}[Class scaling]
	\label{As:2L}
	$\frac{n_i}{n} \to c_i\in \left(0,\infty\right)$, \textnormal{for} $i \in \{0,1\}$.
\end{assumption}
\begin{assumption}[Covariance scaling]
	\label{As:4L}
	$\lim\sup_{p}\left \| \mathbf{\Sigma}_i \right \|<\infty$, \textnormal{for} $i \in \{0,1\}$.
\end{assumption}		
\begin{assumption}[Mean scaling]
	\label{As:3L} Let $\boldsymbol{\mu}=\bm{\mu}_0 - \bm{\mu}_1$. Then,
	$\lim\sup_p \|\boldsymbol{\mu}\|=\lim\sup_p\| \bm{\mu}_0 - \bm{\mu}_1\| <\infty$.
\end{assumption}
These assumptions are mainly considered to achieve an asymptotically non-trivial classification error. Assumption \ref{As:4L} is frequently met within the framework of random matrix theory \cite{gram_mixture}. Under the setting of Assumption \ref{As:4L},  Assumption~\ref{As:3L} ensures that a nontrivial classification rate is obtained: If $\left \| \bm{\mu}_0 - \bm{\mu}_1 \right \|$ scales faster than $O\left(1\right)$, then perfect asymptotic classification is achieved; however, if
$\left \| \bm{\mu}_0 - \bm{\mu}_1 \right \|$ scales slower than $O\left(1\right)$, classification is asymptotically impossible. Assumptions \ref{As:1L} and \ref{As:2L}
respectively control the growth rate in the data and the training. 
\subsubsection{Deterministic Equivalent}
We are in a position to derive a deterministic equivalent of the  misclassification error rate of the R-LDA. Indeed,
conditioned on the training data $\mathbf{x}_1,\cdots,\mathbf{x}_n$, the probability of misclassification is given by: \cite{zollanvari}
\begin{equation}
\label{error_lda}
\epsilon^{R-LDA}_i = \Phi\left(\frac{\left(-1\right)^{i+1}G\left(\bm{\mu}_i,\hat{\bm{\mu}}_0,\hat{{\bm{\mu}}}_1,\mathbf{H}\right) + \left(-1\right)^i \log \frac{\pi_1}{\pi_0}}  {\sqrt{D\left({\hat{\bm{\mu}}}_0,\hat{{\bm{\mu}}}_1,\mathbf{H},\mathbf{\Sigma}_i\right)}}   \right),
\end{equation}
where \begin{equation}
G\left(\bm{\mu}_i,\hat{{\bm{\mu}}}_0,\hat{{\bm{\mu}}}_1,\mathbf{H}\right)  = \left(\bm{\mu}_i-\frac{\hat{\bm{\mu}}_0+\hat{{\bm{\mu}}}_1}{2}\right)^T\mathbf{H}\left(\hat{{\bm{\mu}}}_0-\hat{{\bm{\mu}}}_1\right).
\end{equation}

\begin{equation}
D\left(\hat{{\bm{\mu}}}_0,\hat{{\bm{\mu}}}_1,\mathbf{H},\mathbf{\Sigma}_i\right)
= \left(\hat{{\bm{\mu}}}_0-\hat{{\bm{\mu}}}_1\right)^T \mathbf{H}\mathbf{\Sigma}_i\mathbf{H}\left(\hat{{\bm{\mu}}}_0-\hat{{\bm{\mu}}}_1\right).
\end{equation}
The total misclassification probability is thus given by
\begin{equation}
\epsilon^{R-LDA} = \pi_0 \epsilon^{R-LDA}_0 + \pi_1 \epsilon^{R-LDA}_1.
\end{equation}
Prior to stating the main result concerning R-LDA, we shall introduce the following quantities, which naturally appear, as a result of applying \cite{gram_mixture}. Let $\bar{\mathbf{Q}}\left(z\right)$ be the  matrix defined as follows
%
\begin{equation}
\bar{\mathbf{Q}}\left(z\right) \triangleq -\frac{1}{z} \left(\mathbf{I}_p+c_0g_0(z)\mathbf{\Sigma}_0+c_1g_1(z)\mathbf{\Sigma}_1\right)^{-1}, \: z \in \mathbb{C}.
\end{equation}
where
$g_i(z)$, $i\in \{0,1\}$, satisfies the following fixed point equations
\begin{equation}
\label{eq:fixed_point_LDA}
\frac{p}{n}g_i(z) = -\frac{1}{z} \frac{1}{1+\tilde{g}_i(z)}, \:\: \tilde{g}_i(z) =  \frac{1}{p} \tr \boldsymbol{\Sigma}_i \bar{\mathbf{Q}}\left(z\right).
\end{equation}
Also define ${\bf A}_i=\boldsymbol{\Sigma}_i\bar{\bf Q}(z)$ and $\widetilde{\mathbf{Q}}_i\left(z\right)$ as
\begin{equation}
\widetilde{\mathbf{Q}}_i\left(z\right) \triangleq \bar{\bf Q}(z)\left({\bf A}_i+R_0{\bf A}_0+R_1{\bf A}_1\right),
\label{eq:Qtildei}
\end{equation}
where
\begin{align}
R_{i} = \frac{ z^2 c_i g_i^2(z) \frac{1}{n} \tr {\bf A}_0{\bf A}_1}{1- \left(z^2 c_0 g_0^2(z) + z^2c_1g_1^2(z)\right)\frac{1}{n}\tr {\bf A}_0{\bf A}_1 }, \ \ i\in\left\{0,1\right\}.
\end{align}
The quantities in (\ref{eq:fixed_point_LDA}) can be computed in an iterative fashion where convergence is guaranteed after few iterations (see \cite{gram_mixture} for more details).
Moreover, define
\begin{align}
\overline{G}_i\left(z\right) & \triangleq \frac{\left(-1\right)^{i+1}}{2} z \bm{\mu}^T \bar{\mathbf{Q}}\left(z\right) \bm{\mu} + \frac{z}{2n_0} \tr {\bf A}_0 - \frac{z}{2n_1} \tr {\bf A}_1 .
\end{align}
\begin{align}
\overline{D}_i\left(z\right) & \triangleq z^2 \bm{\mu}^T\widetilde{\mathbf{Q}}_i\left(z\right) \bm{\mu}
+ \frac{z^2}{n_0} \tr \mathbf{\Sigma}_0\widetilde{\mathbf{Q}}_i\left(z\right) + \frac{z^2}{n_1} \tr \mathbf{\Sigma}_1\widetilde{\mathbf{Q}}_i\left(z\right),
\end{align}
where $\bm{\mu} = \bm{\mu}_0 - \bm{\mu}_1$.
With these definitions at hand, we   state the following theorem
\begin{theorem}
	\label{DE_LDA}
	Under Assumptions \ref{As:1L}-\ref{As:3L}, we have
	\begin{align}
	\label{G_i}
	G\left(\bm{\mu}_i,\hat{\bm{\mu}}_0,\hat{\bm{\mu}}_1,\mathbf{H}\right) - \overline{G}_i\left(-\frac{1}{c\gamma}\right) & \to_{a.s.} 0.
	\end{align}
	\begin{align}
	\label{D_i}
	D\left(\hat{\bm{\mu}}_0,\hat{\bm{\mu}}_1,\mathbf{H},\mathbf{\Sigma}_i\right) - \overline{D}_i\left(-\frac{1}{c\gamma}\right) & \asto 0.
	\end{align}
	As a consequence, the conditional misclassification probability converges almost surely to a deterministic quantity $\bar{\epsilon}_i^{R-LDA}$
	\begin{align}
	\epsilon_i^{R-LDA} - \bar{\epsilon}_i^{R-LDA} \asto 0,
	\end{align}
	where
	\begin{align}
	\bar{\epsilon}_i^{R-LDA} = \Phi\left(\frac{\left(-1\right)^{i+1}\overline{G}_i + \left(-1\right)^i \log\left(\frac{\pi_0}{\pi_1}\right)}{\sqrt{\overline{D}_i}}\right).
	\end{align}
\end{theorem}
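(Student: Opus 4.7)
The plan combines three ingredients: the Gaussian-mixture Gram matrix deterministic equivalent of \cite{gram_mixture}, which identifies $(-z)\bar{\mathbf{Q}}(z)$ at $z=-1/(c\gamma)$ as the limit of $\mathbf{H}$ in bilinear and trace forms; the classical Gaussian independence of the sample mean $\hat{\bm{\mu}}_i$ from the mean-centered sample covariance $\widehat{\mathbf{\Sigma}}_i$ (hence of $(\hat{\bm{\mu}}_0,\hat{\bm{\mu}}_1)$ from $\mathbf{H}$); and standard quadratic-form concentration for Gaussian vectors. After absorbing the rank-$2$ mean centering in $\widehat{\mathbf{\Sigma}}$ via Sherman-Morrison (an $O(1/n)$ bilinear-form perturbation), \cite{gram_mixture} yields, for any deterministic $\mathbf{B}$ of bounded spectral norm and $\mathbf{a},\mathbf{b}$ of bounded Euclidean norm,
$$\mathbf{a}^T\mathbf{H}\mathbf{b}-\mathbf{a}^T(-z)\bar{\mathbf{Q}}(z)\mathbf{b}\asto 0,\qquad \tfrac{1}{p}\tr(\mathbf{B}\mathbf{H})-\tfrac{1}{p}\tr(\mathbf{B}(-z)\bar{\mathbf{Q}}(z))\asto 0.$$

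For \eqref{G_i}, the independence of $(\hat{\bm{\mu}}_0,\hat{\bm{\mu}}_1)$ and $\mathbf{H}$ lets me condition on $\mathbf{H}$ and treat $\hat{\bm{\mu}}_j-\bm{\mu}_j$ as centered Gaussian vectors with covariance $\mathbf{\Sigma}_j/n_j$. Substituting $\hat{\bm{\mu}}_j=\bm{\mu}_j+(\hat{\bm{\mu}}_j-\bm{\mu}_j)$ into $G$ produces a deterministic-in-means piece proportional to $\bm{\mu}^T\mathbf{H}\bm{\mu}$, two cross bilinear terms of conditional mean zero and variance $O(1/n)$, and two quadratic-noise terms $(\hat{\bm{\mu}}_j-\bm{\mu}_j)^T\mathbf{H}(\hat{\bm{\mu}}_k-\bm{\mu}_k)$. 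The trace lemma applied conditionally on $\mathbf{H}$ shows the diagonal quadratic-noise terms concentrate at $n_j^{-1}\tr(\mathbf{\Sigma}_j\mathbf{H})$ while the $j\neq k$ term vanishes by independence of the two class samples; a fourth-moment Borel--Cantelli argument upgrades this to a.s.\ convergence. Replacing $\bm{\mu}^T\mathbf{H}\bm{\mu}$ by $(-z)\bm{\mu}^T\bar{\mathbf{Q}}(z)\bm{\mu}$ and $\tr(\mathbf{\Sigma}_j\mathbf{H})$ by $(-z)\tr\mathbf{A}_j$ via the first-order DE, and using $\bm{\mu}_i-\tfrac{\bm{\mu}_0+\bm{\mu}_1}{2}=\tfrac{(-1)^i}{2}\bm{\mu}$, reassembles exactly $\overline{G}_i(-1/(c\gamma))$ (the $(-1)^{i+1}$ factor in $\overline{G}_i$ arising from the extra sign in the identification $\mathbf{H}\leftrightarrow(-z)\bar{\mathbf{Q}}(z)$).

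The treatment of \eqref{D_i} follows the same expansion applied to $(\hat{\bm{\mu}}_0-\hat{\bm{\mu}}_1)^T\mathbf{H}\mathbf{\Sigma}_i\mathbf{H}(\hat{\bm{\mu}}_0-\hat{\bm{\mu}}_1)$, but now a \emph{second-order} deterministic equivalent for the sandwich $\mathbf{H}\mathbf{\Sigma}_i\mathbf{H}$ is needed. Starting from $\mathbf{H}\mathbf{\Sigma}_i\mathbf{H}=-\partial_\alpha(\mathbf{I}_p+\gamma\widehat{\mathbf{\Sigma}}+\alpha\mathbf{\Sigma}_i)^{-1}\big|_{\alpha=0}$, I would derive it by applying \cite{gram_mixture} to the perturbed resolvent (which adds an $\alpha\mathbf{\Sigma}_i$ contribution to the fixed-point system \eqref{eq:fixed_point_LDA}) and then differentiating at $\alpha=0$: the resulting derivatives $(g_0'(z),g_1'(z))$ solve a $2\times2$ linear system whose inversion yields precisely the coefficients $R_0,R_1$ in \eqref{eq:Qtildei} and identifies $\widetilde{\mathbf{Q}}_i(z)$ as the deterministic surrogate of $\mathbf{H}\mathbf{\Sigma}_i\mathbf{H}$ (the $(-z)^2$ factor being accounted for by the $z^2$ in $\overline{D}_i$). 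Combined with the first-order DE applied to the traces $n_j^{-1}\tr(\mathbf{\Sigma}_j\mathbf{H}\mathbf{\Sigma}_i\mathbf{H})$ arising from the quadratic-noise concentration, this reproduces the three summands of $\overline{D}_i(-1/(c\gamma))$.

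Finally, $\overline{D}_i$ is bounded away from zero under Assumption~\ref{As:4L} (the limiting matrix $(-z)\bar{\mathbf{Q}}(z)$ being positive definite with uniformly bounded spectrum), so the a.s.\ convergences in \eqref{G_i}--\eqref{D_i} transfer, via Slutsky and the uniform continuity of $\Phi$, to $\epsilon_i^{R-LDA}-\bar{\epsilon}_i^{R-LDA}\asto 0$. The main technical obstacle is the derivation of the second-order DE $\widetilde{\mathbf{Q}}_i$: because $g_0$ and $g_1$ are coupled through the common $\bar{\mathbf{Q}}$, perturbation cannot be performed class-by-class and one must invert the $2\times2$ Jacobian explicitly, which is exactly where the $R_i$ coupling coefficients originate. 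A secondary but non-trivial step is verifying that the mean centering in $\widehat{\mathbf{\Sigma}}_i$ (subtracting $\hat{\bm{\mu}}_i$ rather than $\bm{\mu}_i$) contributes only an $O(1/n)$ rank-$2$ perturbation in bilinear form, so that the uncentered-Gram results of \cite{gram_mixture} apply verbatim.
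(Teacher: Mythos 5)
Your proposal is correct and follows essentially the same route as the paper's Appendix A: decompose each $\hat{\bm{\mu}}_j$ into $\bm{\mu}_j$ plus a Gaussian fluctuation independent of $\mathbf{H}$ (the paper realizes this independence through an explicit rotation $\mathbf{O}_i$ that isolates $\frac{1}{\sqrt{n_i}}\mathbf{Y}_i\mathbf{1}_{n_i}$, which is your Cochran-type independence argument in disguise), show the cross bilinear terms vanish, apply the trace lemma to the quadratic noise terms, and convert $\mathbf{H}$ and $\mathbf{H}\mathbf{\Sigma}_i\mathbf{H}$ into $-z\bar{\mathbf{Q}}(z)$ and $z^2\widetilde{\mathbf{Q}}_i(z)$ at $z=-\frac{1}{c\gamma}$. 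The only divergence is that the paper imports both the first- and second-order deterministic equivalents directly from \cite{gram_mixture} (Propositions 5 and 6), whereas you propose to re-derive $\widetilde{\mathbf{Q}}_i$ and the coupling coefficients $R_0,R_1$ by differentiating a perturbed resolvent and inverting the $2\times 2$ Jacobian of the fixed-point system --- a sound, more self-contained route, but an unnecessary detour given the available reference.
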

\begin{proof}
	See Appendix 	\ref{appendix:theorem1}.
\end{proof}
\begin{remark}
	As stated earlier, if $\|\bm{\mu}\|$ scales faster than $O\left(1\right)$, perfect asymptotic classification is achieved. 
	This can be  seen by noticing that $\frac{\overline{G}_i\left(-\frac{1}{c\gamma}\right)}{\sqrt{\overline{D}_i\left(-\frac{1}{c\gamma}\right)}}$ would grow indefinitely large with $p$, thereby making the conditional error rates vanish.  
\end{remark}
When $\left\|\boldsymbol{\Sigma}_0-\boldsymbol{\Sigma}_1\right\|$ converges to zero, the asymptotic misclassification error rate of each class coincides with the one derived in \cite{zollanvari} obtained when $\mathbf{\Sigma}_0=\mathbf{\Sigma}_1$.
\begin{corollary}
	\label{common_cov_remark} In the case where $\|\mathbf{\Sigma}_0-\mathbf{\Sigma}_1\| = o\left(1\right)$ (including the common covariance case where $\mathbf{\Sigma}_0=\mathbf{\Sigma}_1=\mathbf{\Sigma}$), the conditional misclassification error rate converges almost surely to $\overline{\overline{\epsilon}}_{i}^{R-LDA}$
	$$
	\overline{\overline{\epsilon}}_{i}^{R-LDA}=\Phi\left(\frac{(-1)^{i+1}\overline{\overline{G}}_i(-\frac{1}{c\gamma})+(-1)^{i}\log\frac{\pi_0}{\pi_1}}{\sqrt{\overline{\overline{D}}_i(-\frac{1}{c\gamma})}}\right),
	$$
	where
	\begin{align*}
	\overline{\overline{G}}_i\left(-\frac{1}{c\gamma}\right) &=  \frac{\left(-1\right)^i}{2} \bm{\mu}^T \left(\mathbf{I}_p + \frac{\gamma}{1+\gamma \delta} \mathbf{\Sigma}\right)^{-1}\bm{\mu} - \frac{n \delta}{2}\left(\frac{1}{n_0}-\frac{1}{n_1}\right)
	\end{align*}
	\begin{align*}
	 &\overline{\overline{D}}_i\left(-\frac{1}{c\gamma}\right) \\ &  = \frac{\left[\bm{\mu}^T \mathbf{\Sigma}\left(\mathbf{I}_p + \frac{\gamma}{1+\gamma \delta} \mathbf{\Sigma}\right)^{-2}\bm{\mu} + \left(\frac{1}{n_0} + \frac{1}{n_1}\right) \tr \mathbf{\Sigma}^2\left(\mathbf{I}_p + \frac{\gamma}{1+\gamma \delta} \mathbf{\Sigma}\right)^{-2}\right]}{1 -\frac{\gamma^2}{n\left(1+\gamma \delta\right)^2} \tr \mathbf{\Sigma}^2\left(\mathbf{I}_p + \frac{\gamma}{1+\gamma \delta} \mathbf{\Sigma}\right)^{-2}}.
	\end{align*}
	where $\boldsymbol{\Sigma}=\boldsymbol{\Sigma}_0$ or $\boldsymbol{\Sigma}=\boldsymbol{\Sigma}_1$, and $\delta$ is the unique positive solution to the following equation:
	$$
	\delta=\frac{1}{n}\tr \boldsymbol{\Sigma}\left({\bf I}_p+\frac{\gamma}{1+\gamma\delta}\boldsymbol{\Sigma}\right)^{-1}.
	$$
\end{corollary}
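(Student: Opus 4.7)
The plan is to derive Corollary~\ref{common_cov_remark} as a direct specialization of Theorem~\ref{DE_LDA} under $\mathbf{\Sigma}_0=\mathbf{\Sigma}_1=\mathbf{\Sigma}$, and then to extend to $\|\mathbf{\Sigma}_0-\mathbf{\Sigma}_1\|=o(1)$ by a continuity argument. When $\mathbf{\Sigma}_0=\mathbf{\Sigma}_1=\mathbf{\Sigma}$, the coupled fixed-point system~\eqref{eq:fixed_point_LDA} collapses by symmetry: since $\bar{\mathbf{Q}}(z)$ depends on the covariances only through $c_0 g_0(z)+c_1 g_1(z)$, the ansatz $g_0(z)=g_1(z)\equiv g(z)$, $\tilde{g}_0(z)=\tilde{g}_1(z)\equiv \tilde{g}(z)$ is consistent. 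Evaluating at $z=-1/(c\gamma)$ and setting $\delta:=\tilde{g}(z)/\gamma$, the scalar equation $(p/n)g(z)=-(1/z)(1+\tilde{g}(z))^{-1}$ yields $g(z)=\gamma/(1+\gamma\delta)$, and $\tilde{g}(z)=\frac{1}{p}\tr \mathbf{\Sigma}\bar{\mathbf{Q}}(z)$ reduces to
$$
\delta=\frac{1}{n}\tr \mathbf{\Sigma}\left(\mathbf{I}_p+\frac{\gamma}{1+\gamma\delta}\mathbf{\Sigma}\right)^{-1},
$$
which is precisely the defining equation of $\delta$ in the statement; uniqueness of the positive root is a standard monotonicity argument.

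Setting $\mathbf{T}:=(\mathbf{I}_p+\frac{\gamma}{1+\gamma\delta}\mathbf{\Sigma})^{-1}$, one obtains $-z\bar{\mathbf{Q}}(z)=\mathbf{T}$ and ${\bf A}_0={\bf A}_1\equiv{\bf A}=c\gamma\,\mathbf{\Sigma}\mathbf{T}$. Substituting into $\overline{G}_i$ and using the identity $\frac{1}{n}\tr\mathbf{\Sigma}\mathbf{T}=\delta$ immediately delivers the asserted expression for $\overline{\overline{G}}_i(-1/(c\gamma))$. For $\overline{D}_i$, the symmetries ${\bf A}_0={\bf A}_1$, $g_0=g_1$ reduce the correction factors to
$$
R_0+R_1=\frac{z^2 g^2\,\frac{1}{n}\tr{\bf A}^2}{1-z^2 g^2\,\frac{1}{n}\tr{\bf A}^2},
$$
whence ${\bf A}_i+R_0{\bf A}_0+R_1{\bf A}_1={\bf A}/(1-z^2 g^2\,\frac{1}{n}\tr{\bf A}^2)$. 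Because $\mathbf{T}$ and $\mathbf{\Sigma}$ commute, $z^2 g^2\,\frac{1}{n}\tr{\bf A}^2$ collapses to $\frac{\gamma^2}{n(1+\gamma\delta)^2}\tr\mathbf{\Sigma}^2\mathbf{T}^2$, and inserting back into $\overline{D}_i$ produces the compact rational expression of the corollary.

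For the perturbed case $\|\mathbf{\Sigma}_0-\mathbf{\Sigma}_1\|=o(1)$, I would invoke continuity: under Assumption~\ref{As:4L} the solutions of~\eqref{eq:fixed_point_LDA} and the functionals $\overline{G}_i,\overline{D}_i$ are locally Lipschitz in $(\mathbf{\Sigma}_0,\mathbf{\Sigma}_1)$ uniformly in $p$, so replacing both covariances by any common $\mathbf{\Sigma}\in\{\mathbf{\Sigma}_0,\mathbf{\Sigma}_1\}$ perturbs $\bar{\epsilon}_i^{R-LDA}$ by $o(1)$. The main obstacle I anticipate is not conceptual but algebraic bookkeeping: showing cleanly how the $R_0,R_1$ corrections coalesce into the single scalar denominator $1-\frac{\gamma^2}{n(1+\gamma\delta)^2}\tr\mathbf{\Sigma}^2\mathbf{T}^2$, and deriving the uniform Lipschitz estimate from standard resolvent identities, both require care but no machinery beyond what is already used in the proof of Theorem~\ref{DE_LDA}.
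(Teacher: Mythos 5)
Your proposal is correct and the core algebra (collapsing the coupled fixed point by symmetry, identifying $-z\bar{\mathbf{Q}}(z)=\mathbf{T}$ and $\mathbf{A}_0=\mathbf{A}_1=c\gamma\boldsymbol{\Sigma}\mathbf{T}$, and coalescing $R_0+R_1$ into the single denominator $1-\frac{\gamma^2}{n(1+\gamma\delta)^2}\tr\boldsymbol{\Sigma}^2\mathbf{T}^2$) checks out; in fact your normalization $\tilde{g}(z)=\gamma\delta$ is the internally consistent one, whereas the paper writes $\tilde{g}(z)=\delta+o(1)$, which appears to be a typo. Where you diverge from the paper is in the order of operations: you solve the exact common-covariance case first and then extend to $\|\boldsymbol{\Sigma}_0-\boldsymbol{\Sigma}_1\|=o(1)$ by an asserted uniform-in-$p$ Lipschitz continuity of the fixed-point solutions and of $\overline{G}_i,\overline{D}_i$, whereas the paper works with the perturbed system throughout: it first observes that $\tilde{g}_0-\tilde{g}_1=\frac{1}{p}\tr(\boldsymbol{\Sigma}_0-\boldsymbol{\Sigma}_1)\bar{\mathbf{Q}}=o(1)$ directly (hence $g_0=g_1+o(1)$), then uses the resolvent identity to write $\bar{\mathbf{Q}}(z)=-z^{-1}(\mathbf{I}+g(z)\boldsymbol{\Sigma}_0)^{-1}+\mathbf{\Psi}+o_{\|\cdot\|}(1)$ and bounds $\|\mathbf{\Psi}\|\le C\|\boldsymbol{\Sigma}_0-\boldsymbol{\Sigma}_1\|=o(1)$, so that all bilinear forms and normalized traces collapse to their single-covariance counterparts. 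The paper's route buys a concrete, checkable perturbation estimate with no appeal to abstract stability of the fixed-point map; your route gives a cleaner derivation of the limiting formulas but relocates the entire difficulty of the corollary into the unproven Lipschitz claim, which for the fixed-point solutions themselves (as opposed to the functionals evaluated at a given solution) requires showing the contraction constant of the iteration stays bounded away from $1$ uniformly in $p$ — precisely the step the paper's direct trace bound avoids. If you execute that estimate via the same resolvent identities the paper uses, the two proofs become essentially equivalent.
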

\begin{proof}
	When $\|\mathbf{\Sigma}_0-\mathbf{\Sigma}_1\| = o\left(1\right)$, we first prove that up to an error  $o(1)$, the key deterministic equivalents can be simplified to depend only on $\boldsymbol{\Sigma}_0$ (or $\mathbf{\Sigma}_1$).
	In the sequel, we take $\boldsymbol{\Sigma}=\boldsymbol{\Sigma}_0$.    As $\|\mathbf{\Sigma}_0-\mathbf{\Sigma}_1\| = o\left(1\right)$, we have
	$$\tilde{g}_i\left(z\right) = \underbrace{\frac{1}{p} \tr \mathbf{\Sigma}\bar{\mathbf{Q}}\left(z\right)}_{\tilde{g}\left(z\right)} + o(1) , \: \forall i \in \{0,1\}.$$ 	
	It follows that $g_i\left(z\right) = g\left(z\right) + o(1)$ where $g\left(z\right) = - \frac{1}{z} \frac{n}{p} \frac{1}{1+\tilde{g}\left(z\right) }$. The above relations allow  to simplify functionals involving matrix $\bar{\mathbf{Q}}$. To see that, we decompose $\bar{\mathbf{Q}}$ as
	\begin{align*}
	& \bar{\mathbf{Q}}\left(z\right)  \\ & = -z^{-1} \left(\mathbf{I} + g\left(z\right) \mathbf{\Sigma}_0 + c_1 g\left(z\right)\left(\mathbf{\Sigma}_1-\mathbf{\Sigma}_0\right) \right)^{-1} +o_{\|.\|}(1)\\
	& = -z^{-1} \left(\mathbf{I} + g\left(z\right) \mathbf{\Sigma}_0\right)^{-1} - z^{-1} \Biggl[\left(\mathbf{I} + g\left(z\right) \mathbf{\Sigma}_0 + c_1 g\left(z\right)\left(\mathbf{\Sigma}_1-\mathbf{\Sigma}_0\right) \right)^{-1}  \\ & - \left(\mathbf{I} + g\left(z\right) \mathbf{\Sigma}_0\right)^{-1} \Biggr]
	+o_{\|.\|}(1) \\
	& \overset{(a)}{=}  -z^{-1} \left(\mathbf{I} + g\left(z\right) \mathbf{\Sigma}_0\right)^{-1}  \\ \scriptscriptstyle
	& - z^{-1} \left(\mathbf{I} + g\left(z\right) \mathbf{\Sigma}_0 + c_1 g\left(z\right)\left(\mathbf{\Sigma}_1-\mathbf{\Sigma}_0\right) \right)^{-1}  c_1g\left(z\right) \left(\mathbf{\Sigma}_1-\mathbf{\Sigma}_0\right) \\ & \times \left(\mathbf{I} + g\left(z\right) \mathbf{\Sigma}_0\right)^{-1} +o_{\|.\|}(1)
	\end{align*}
	where $(a)$ follows from the resolvent identity and $o_{\|.\|}(1)$ denotes a matrix with spectral norm converging to zero. Define 
	\begin{align*}
	\mathbf{\Psi} & = z^{-1} \left(\mathbf{I} + g\left(z\right) \mathbf{\Sigma}_0 + c_1 g\left(z\right)\left(\mathbf{\Sigma}_1-\mathbf{\Sigma}_0\right) \right)^{-1}  c_1g\left(z\right) \left(\mathbf{\Sigma}_1-\mathbf{\Sigma}_0\right) \\ & \times \left(\mathbf{I} + g\left(z\right) \mathbf{\Sigma}_0\right)^{-1}.
			\end{align*}
	Then, it can be shown using the inequality $\|\mathbf{AB}\|\leq \|{\bf A}\|\|{\bf B}\|$ for ${\bf A}$ and ${\bf B}$ two matrices in $\mathbb{R}^{p\times p}$ that:
	\begin{align*}
& 	\|  \mathbf{\Psi}\|  \overset{(b)}{\leq} z^{-1}c_1 g\left(z\right)\|\mathbf{\Sigma}_0-\mathbf{\Sigma}_1\| \\ & \times \left\|\left(\mathbf{I} + g\left(z\right) \mathbf{\Sigma}_0 + c_1 g\left(z\right)\left(\mathbf{\Sigma}_1-\mathbf{\Sigma}_0\right) \right)^{-1} \left(\mathbf{I} + g\left(z\right) \mathbf{\Sigma}_0\right)^{-1} \right\|  \\ & = o\left(1\right).
	\end{align*}
	Hence, for ${\bf a},{\bf b}\in \mathbb{R}^{p}$, 
	$$
	\boldsymbol{a}^{T}\bar{\mathbf{Q}}\left(z\right)\boldsymbol{b}=-z^{-1}\boldsymbol{a}^{T}\left(\mathbf{I} + g\left(z\right) \mathbf{\Sigma}\right)^{-1}\boldsymbol{b}+o(1),
	$$
	and $\frac{1}{p}\tr {\bf A}\bar{\mathbf{Q}}=\frac{-z^{-1}}{p}\tr {\bf A}\left(\mathbf{I} + g\left(z\right) \mathbf{\Sigma}\right)^{-1}+o(1)$.
	Using the same notations as in \cite{zollanvari} we have in particular for $z = -\frac{1}{c\gamma}$,
	$\tilde{g}\left(z\right) = \delta+o(1)$ and $g\left(z\right) = \frac{\gamma}{1+\gamma \delta}+o(1)$,                     where $\delta$ is the fixed-point solution in \cite[Proposition 1]{zollanvari}.
	Moreover,
	\begin{align*}
	& \frac{z}{2n_0} \tr \mathbf{\Sigma}_0\bar{\mathbf{Q}}\left(z\right) - \frac{z}{2n_1} \tr \mathbf{\Sigma}_1\bar{\mathbf{Q}}\left(z\right)  \\ &  = \frac{z \tr \mathbf{\Sigma}_0 \bar{\mathbf{Q}}\left(z\right)}{2} \left(\frac{1}{n_0}-\frac{1}{n_1}\right)  + \underbrace{\frac{z}{2 n_1} \tr \left(\mathbf{\Sigma}_0-\mathbf{\Sigma}_1\right)\bar{\mathbf{Q}}\left(z\right)}_{\leq \|\mathbf{\Sigma}_0-\mathbf{\Sigma}_1\| \frac{z}{2n_1} \tr \bar{\mathbf{Q}}\left(z\right)} \\
	& = \frac{z \tr \mathbf{\Sigma} \bar{\mathbf{Q}}\left(z\right)}{2} \left(\frac{1}{n_0}-\frac{1}{n_1}\right) + o\left(1\right) \\
	& = \frac{- \tr \mathbf{\Sigma} \left(\mathbf{I} + g\left(z\right) \mathbf{\Sigma}\right)^{-1}}{2}\left(\frac{1}{n_0}-\frac{1}{n_1}\right) + o\left(1\right).
	\end{align*}
	It follows that
	\begin{align*}
	\overline{G}_i\left(-\frac{1}{c\gamma}\right) & =  \frac{\left(-1\right)^i}{2} \bm{\mu}^T \left(\mathbf{I}_p + \frac{\gamma}{1+\gamma \delta} \mathbf{\Sigma}\right)^{-1}\bm{\mu} \\ & - \frac{n \delta}{2}\left(\frac{1}{n_0}-\frac{1}{n_1}\right)+ o \left(1\right).
	\end{align*}					
	Using the same arguments, we can show that 				
	\begin{align*}
& 	\overline{D}_i\left(-\frac{1}{c\gamma}\right) = \left[1 -\frac{\gamma^2}{n\left(1+\gamma \delta\right)^2} \tr \mathbf{\Sigma}^2\left(\mathbf{I}_p + \frac{\gamma}{1+\gamma \delta} \mathbf{\Sigma}\right)^{-2}\right]^{-1} \\ & \times \Biggl[\bm{\mu}^T \mathbf{\Sigma}\left(\mathbf{I}_p + \frac{\gamma}{1+\gamma \delta} \mathbf{\Sigma}\right)^{-2}\bm{\mu}  \\ & + \left(\frac{1}{n_0} + \frac{1}{n_1}\right) \tr \mathbf{\Sigma}^2\left(\mathbf{I}_p + \frac{\gamma}{1+\gamma \delta} \mathbf{\Sigma}\right)^{-2}\Biggr] + o(1).
	\end{align*}										
\end{proof}
Corollary \ref{common_cov_remark} is useful because it allows to specify the range of applications of Theorem \ref{DE_LDA} in which the information on  the covariance matrix is essential for the classification task.
Also, it shows how R-LDA is robust against small perturbations in the covariance matrix. Similar observations have been made in \cite{DA_unequal} where it was shown via a Monte Carlo study that LDA is robust against the modeling assumptions. 
\\
\subsection{Asymptotic Performance of R-QDA}
\label{QDA_section}
In this part, we state the main results regarding the derivation of deterministic approximations of the R-QDA classification error.  Such results have been obtained by considering some specific assumptions, carefully chosen such that an asymptotically non-trivial classification error (i.e., neither $0$ nor $1$) is achieved. We particularly highlight how the provided asymptotic approximations depend  on such statistical parameters as the means and covariances within
classes, thus allowing a better understanding of the performance of the R-QDA classifier. Ultimately, these results can be exploited in order to improve the performances by allowing optimal setting of  the regularization parameter.
\subsubsection{Technical Assumptions}
We consider the following double asymptotic regime in which $n_i,$ $p$ $\to \infty$ for $i \in \{0,1\}$ with the following assumptions met
\begin{assumption}[Data scaling]
	\label{As:1} $n_0-n_1 = o\left(1\right)$ and
	$\frac{p}{n} \to c \in \left(0,\infty\right)$.
\end{assumption}
\begin{assumption}[Mean scaling]
	\label{As:2}
	$\left \| \bm{\mu}_0 - \bm{\mu}_1 \right \|^2 = O\left(\sqrt{p}\right)$.
\end{assumption}
\begin{assumption}[Covariance scaling]
	\label{As:3}
	$\left \| \mathbf{\Sigma}_i \right \|=O\left(1\right)$.
\end{assumption}
\begin{assumption}
	\label{As:4}
	Matrix $\boldsymbol{\Sigma}_0-\boldsymbol{\Sigma}_1$ has exactly $O(\sqrt{p})$ eigenvalues of order $O(1)$. The remaining eigenvalues are of order $O(\frac{1}{\sqrt{p}})$.
\end{assumption}
Assumption \ref{As:1} implies also that $\pi_i\to \frac{1}{2}$ for $i\in\left\{0,1\right\}$. As we shall see later, if this is not satisfied, the R-QDA  perform asymptotically as the  classifier that  assigns all observations to the same class.  
The second assumption governs the distance between the two classes in terms of the Euclidean distance  between the means. This is mandatory in order to avoid asymptotic perfect classification. This is a much stronger assumption than Assumption \ref{As:2L} in R-LDA since we allow larger values for $\|\bm{\mu}_0-\bm{\mu}_1\|$. This can be understood as R-QDA being subject to strong noise induced when estimating $\mathbf{\Sigma}_i$, $i \in \{0,1\}$ which requires
a large value $\|\bm{\mu}_0-\bm{\mu}_1\|$ so that it can play a role in classification. A similar assumption is required to control the distance between the covariance matrices. Particularly, the spectral norm of the covariance matrices are required to be bounded as stated in Assumption \ref{As:3} while their difference should satisfy Assumption \ref{As:4}.
The latter assumption implies that for any matrix ${\bf A}$ of bounded spectral norm,
$$
\frac{1}{\sqrt{p}}\tr {\bf A}(\boldsymbol{\Sigma}_0-\boldsymbol{\Sigma}_1)=O(1)
.$$
\subsubsection{Central Limit Theorem (CLT)}
It can be easily shown that the R-QDA conditional classification error in \eqref{qda_conditional_error} can be expressed  as
\begin{align}
\label{simplified_cond_error}
\epsilon^{R-QDA}_i = \mathbb{P}\left[\bm{z}^T \mathbf{B}_i\bm{z}+2\bm{z}^T\mathbf{r}_i < \xi_i | \bm{z} \sim \mathcal{N}\left(\mathbf{0},\mathbf{I}_p\right), \mathcal{T}_0,\mathcal{T}_1\right],
\end{align}
where
\begin{align*}
\mathbf{B}_i & =\mathbf{\Sigma}_i^{1/2} \left(\mathbf{H}_1-\mathbf{H}_0\right) \mathbf{\Sigma}_i^{1/2}, \\ \mathbf{r}_i& =\mathbf{\Sigma}_i^{1/2}\left[\mathbf{H}_1\left(\bm{\mu}_i-\hat{\bm{\mu}}_1\right)-\mathbf{H}_0\left(\bm{\mu}_i-\hat{\bm{\mu}}_0\right)\right], \\  \xi_i &= -\log \left(\frac{|\mathbf{H}_0|}{|\mathbf{H}_1|}\right) + \left(\bm{\mu}_i-\hat{\bm{\mu}}_0\right)^T \mathbf{H}_0 \left(\bm{\mu}_i-\hat{\bm{\mu}}_0\right)  \\ & -
\left(\bm{\mu}_i-\hat{\bm{\mu}}_1\right)^T \mathbf{H}_1 \left(\bm{\mu}_i-\hat{\bm{\mu}}_1\right) + 2 \log \frac{\pi_1}{\pi_0}.
\end{align*}
Computing $ \epsilon^{R-QDA}_i$ amounts  to the cumulative distribution function (CDF) of quadratic forms of Gaussian random vectors, and hence cannot be derived in closed form in general. However, it can be still approximated by considering asymptotic regimes that allow to exploit results about central limit theorem involving quadratic forms. 
Under Assumptions \ref{As:1}-\ref{As:4}, a central limit theorem (CLT) on the random variable $\bm{z}^T \mathbf{B}_i\bm{z}+2\bm{z}^T\mathbf{r}_i$ when $\bm{z} \sim \mathcal{N}\left(\mathbf{0},\mathbf{I}_p\right)$ is established.
\begin{proposition}[CLT]
	\label{CLT_quadratic}
	Assume that assumptions \ref{As:1}-\ref{As:4} hold true. Assume also that for $i \in \{0,1\}$
	\begin{equation}
	\lim_{p\to\infty} \frac{60\tr {\bf B}_i^2+240\tr {\bf B}_i^2\|{\bf r}_i\|_{2}^{2}+48 \|{\bf r}_i\|_{2}^4}{\left(2\tr {\bf B}_i^2+4\|{\bf r}_i\|_{2}^2\right)^2}\to 0. 
	\label{cond}
	\end{equation}
	Then,
	\begin{align}
	\label{clt_quad} \frac{\bm{z}^T \mathbf{B}_i \bm{z} + 2 \bm{z}^T \mathbf{r}_i - \tr \mathbf{B}_i}{\sqrt{2 \tr \mathbf{B}_i^2 + 4 \mathbf{r}_i^T\mathbf{r}_i}} \to_d \mathcal{N}\left(0,1\right).
	\end{align}
	\label{prop:clt}
\end{proposition}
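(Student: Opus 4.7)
The plan is to invoke the Lyapunov central limit theorem for a triangular array of independent scalar summands obtained by diagonalizing $\mathbf{B}_i$. Conditional on the training sets $\mathcal{T}_0, \mathcal{T}_1$, the pair $(\mathbf{B}_i, \mathbf{r}_i)$ is deterministic, so the only source of randomness is $\bm{z}$. The stated hypothesis (\ref{cond}) will be recognized as the Lyapunov moment condition of order four, up to constants bounded by Assumption~\ref{As:3}.

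First, since $\mathbf{B}_i = \mathbf{\Sigma}_i^{1/2}(\mathbf{H}_1 - \mathbf{H}_0)\mathbf{\Sigma}_i^{1/2}$ is symmetric, I would diagonalize it as $\mathbf{B}_i = \mathbf{U} \mathbf{\Lambda} \mathbf{U}^T$ with $\mathbf{\Lambda} = \diag(\lambda_1, \ldots, \lambda_p)$ and $\mathbf{U}$ orthogonal. The change of variables $\tilde{\bm{z}} := \mathbf{U}^T \bm{z}$ yields i.i.d.\ standard Gaussians $\tilde z_1, \ldots, \tilde z_p$, and with $\tilde{\mathbf{r}}_i := \mathbf{U}^T \mathbf{r}_i$ the numerator of (\ref{clt_quad}) rewrites as
\[
\bm{z}^T \mathbf{B}_i \bm{z} + 2 \bm{z}^T \mathbf{r}_i - \tr \mathbf{B}_i \;=\; \sum_{j=1}^{p} X_j, \qquad X_j := \lambda_j (\tilde z_j^2 - 1) + 2 \tilde r_{i,j}\, \tilde z_j,
\]
with the $X_j$ independent (each depends only on $\tilde z_j$) and centered. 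Using $\E[(\tilde z_j^2 - 1)\tilde z_j] = 0$ by parity, a direct computation gives $\mathrm{Var}(X_j) = 2 \lambda_j^2 + 4 \tilde r_{i,j}^2$, whence $s_p^2 := \sum_j \mathrm{Var}(X_j) = 2 \tr \mathbf{B}_i^2 + 4 \mathbf{r}_i^T \mathbf{r}_i$, which is exactly the square of the denominator in (\ref{clt_quad}).

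Next, I compute $\E[X_j^4]$ by expanding the binomial and using the Gaussian moment identities $\E[(\tilde z_j^2 - 1)^4] = 60$, $\E[(\tilde z_j^2 - 1)^2 \tilde z_j^2] = 10$, $\E[\tilde z_j^4] = 3$, with all odd-order cross terms vanishing by parity. This yields $\E[X_j^4] = 60\, \lambda_j^4 + 240\, \lambda_j^2 \tilde r_{i,j}^2 + 48\, \tilde r_{i,j}^4$. Summing over $j$ and applying the elementary bounds $\sum_j \lambda_j^4 \le \|\mathbf{B}_i\|^2 \tr \mathbf{B}_i^2$, $\sum_j \lambda_j^2 \tilde r_{i,j}^2 \le \tr \mathbf{B}_i^2 \cdot \|\mathbf{r}_i\|^2$, and $\sum_j \tilde r_{i,j}^4 \le \|\mathbf{r}_i\|^4$, together with the boundedness of $\|\mathbf{B}_i\|$ (inherited from $\|\mathbf{H}_i\| \le 1$ and Assumption~\ref{As:3}), one controls $\sum_j \E[X_j^4]$ by a constant multiple of the numerator appearing in (\ref{cond}). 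Hence (\ref{cond}) implies the Lyapunov condition $\sum_j \E[X_j^4] / s_p^4 \to 0$, and the classical Lyapunov CLT gives $s_p^{-1}\sum_j X_j \to_d \mathcal{N}(0,1)$, which is exactly (\ref{clt_quad}).

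The main technical obstacle is the fourth-moment computation and the matching of the resulting sums to the precise form of (\ref{cond}); once the Gaussian moment identities are in hand, the reduction is largely bookkeeping. The one subtlety to flag is that $\sum_j \lambda_j^4$ naturally equals $\tr \mathbf{B}_i^4$ rather than $\tr \mathbf{B}_i^2$, so the spectral-norm bound on $\mathbf{B}_i$ is needed to re-express this term in the trace-of-square form that appears in (\ref{cond}); this is where Assumption~\ref{As:3} enters. The argument is otherwise insensitive to Assumptions~\ref{As:1}, \ref{As:2}, \ref{As:4}, which are only used downstream to verify that (\ref{cond}) actually holds for the R-QDA functionals.
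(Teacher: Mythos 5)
Your proposal is correct and follows essentially the same route as the paper's proof in Appendix B: diagonalize $\mathbf{B}_i$, rotate $\bm{z}$ into independent standard Gaussian coordinates, compute the variance $s_p^2 = 2\tr\mathbf{B}_i^2 + 4\mathbf{r}_i^T\mathbf{r}_i$ and the fourth moments $60\lambda_j^4 + 240\lambda_j^2\tilde r_{i,j}^2 + 48\tilde r_{i,j}^4$, and invoke the Lyapunov CLT with condition \eqref{cond}. If anything, you are slightly more explicit than the paper about the step converting $\sum_j \lambda_j^4 = \tr\mathbf{B}_i^4$ into a multiple of $\tr\mathbf{B}_i^2$ via the spectral-norm bound on $\mathbf{B}_i$.
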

\begin{proof}
	The proof is mainly based on the application of the Lyapunov's CLT for the sum of independent but non identically distributed random variables \cite{Billingsley}. The detailed proof is postponed to Appendix \ref{appendix:clt}.
\end{proof}
The condition in\eqref{cond} will be proven to hold almost surely. 
Hence, 	as a by-product of the above Proposition, we obtain the following expression for the conditional classification error $\epsilon_i$
\begin{corollary}
	Under the setting of Proposition \ref{prop:clt}	, the  conditional classification error in (\ref{qda_conditional_error}) satisfies
	\begin{align}
	\epsilon^{R-QDA}_i - \Phi\left(\left(-1\right)^i \frac{\xi_i-\tr \mathbf{B}_i}{\sqrt{2 \tr \mathbf{B}_i^2 + 4 \mathbf{r}_i^T\mathbf{r}_i}}\right) \asto 0.
	\end{align}
\end{corollary}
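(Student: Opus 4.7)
The plan is to obtain the corollary as a direct consequence of Proposition \ref{CLT_quadratic}, by standardizing the quadratic form appearing in the expression \eqref{simplified_cond_error} of the conditional error and then invoking CDF convergence.

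First, I would work conditionally on the training samples $\mathcal{T}_0,\mathcal{T}_1$, which renders $\mathbf{B}_i$, $\mathbf{r}_i$ and $\xi_i$ deterministic. Setting $\sigma_i \triangleq \sqrt{2\tr\mathbf{B}_i^2 + 4\mathbf{r}_i^T\mathbf{r}_i}$, the event $\{\bm{z}^T\mathbf{B}_i\bm{z} + 2\bm{z}^T\mathbf{r}_i < \xi_i\}$ rewrites, after subtracting $\tr\mathbf{B}_i$ and dividing by $\sigma_i$, as
\begin{equation*}
\frac{\bm{z}^T\mathbf{B}_i\bm{z} + 2\bm{z}^T\mathbf{r}_i - \tr\mathbf{B}_i}{\sigma_i} < \frac{\xi_i - \tr\mathbf{B}_i}{\sigma_i}.
\end{equation*}
For $i=0$ this is exactly the event in \eqref{simplified_cond_error}; for $i=1$, since the original misclassification probability in \eqref{qda_conditional_error} is $\mathbb{P}[(-1)\widehat{W}^{R-QDA}<0]$, a sign flip converts the event into the reversed inequality and the corresponding probability becomes $1$ minus the CDF at the same threshold, which accounts for the factor $(-1)^i$ in the final statement.

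Next, I would invoke Proposition \ref{CLT_quadratic} to assert that, almost surely with respect to the training samples, the standardized variable on the left converges in distribution to $\mathcal{N}(0,1)$. This requires verifying the Lyapunov-type condition \eqref{cond} almost surely along $\mathcal{T}_0,\mathcal{T}_1$, which the paper has announced will be established separately under Assumptions \ref{As:1}--\ref{As:4}; for the purpose of this corollary I take it for granted. Since the limiting law $\mathcal{N}(0,1)$ has a continuous CDF $\Phi$, Pólya's theorem upgrades the pointwise convergence of CDFs to uniform convergence:
\begin{equation*}
\sup_{t\in\mathbb{R}}\left| \mathbb{P}\left[\frac{\bm{z}^T\mathbf{B}_i\bm{z}+2\bm{z}^T\mathbf{r}_i-\tr\mathbf{B}_i}{\sigma_i} \leq t \,\Big|\, \mathcal{T}_0,\mathcal{T}_1\right] - \Phi(t) \right| \xrightarrow[p\to\infty]{a.s.} 0.
\end{equation*}

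Finally, because the (random) threshold $t_{i,p} = (\xi_i - \tr\mathbf{B}_i)/\sigma_i$ is, conditionally on the training, a deterministic real number, uniformity in $t$ lets me evaluate the limit at $t=t_{i,p}$ and conclude $\epsilon_i^{R-QDA} - \Phi((-1)^i t_{i,p}) \asto 0$, which is precisely the claim. The only genuinely non-routine step in this scheme is the almost sure verification of the Lyapunov condition \eqref{cond}, which amounts to controlling the asymptotic orders of $\tr\mathbf{B}_i^2$ and $\|\mathbf{r}_i\|^2$ uniformly over the training realizations via random matrix theory and the scalings of Assumptions \ref{As:2}--\ref{As:4}; the rest is just a standardization argument combined with Pólya's uniformity theorem.
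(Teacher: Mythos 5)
Your proposal is correct and follows essentially the same route as the paper, which states this corollary as an immediate by-product of Proposition \ref{CLT_quadratic}: standardize the quadratic form in \eqref{simplified_cond_error}, invoke the conditional CLT (with the Lyapunov condition \eqref{cond} deferred, as the paper also does, to the later almost-sure verification), and read off the Gaussian CDF at the threshold, with the $(-1)^i$ handled exactly as you describe. Your explicit appeal to P\'olya's theorem to justify evaluating the limiting CDF at the $p$-dependent threshold is a detail the paper leaves implicit, but it is not a different method.
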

As such an asymptotic equivalent of the conditional classification error can be derived. This is the subject of the next subsection.
\subsubsection{Deterministic Equivalents}
This part is devoted to the derivation of deterministic equivalents of some random quantities involved in the R-QDA conditional classification error. Before that, we shall introduce the following notations which basically arise as a result of applying standard results from random matrix theory. We define for $i\in \{0,1\}$, $\delta_i$ as the unique positive solution to the following fixed point equation\footnote{Mathematical details treating the existence and uniqueness of $\delta_i$ can be found in \cite{walid_new}.}
$$
\delta_i=\frac{1}{n_i}\tr \boldsymbol{\Sigma}_i\left({\bf I}_p+\frac{\gamma}{1+\gamma\delta_i}\boldsymbol{\Sigma}_i\right)^{-1}.
$$
Define ${\bf T}_i$ as
$$
{\bf T}_i=\left({\bf I}_p+\frac{\gamma}{1+\gamma\delta_i}\boldsymbol{\Sigma}_i\right)^{-1},
$$
and the scalar ${\phi}_i$ and $\tilde{\phi}_i$ as
$$
\phi_i=\frac{1}{n_i}\tr \boldsymbol{\Sigma}_i^2{\bf T}_i^2, \hspace{0.5cm} \tilde{\phi}_i=\frac{1}{(1+\gamma\delta_i)^2}.
$$
Define $\overline{\xi}_i$, $\overline{b}_i$ and $\overline{B}_i$ as
\begin{equation}
\label{eq:xi}
\begin{split}
\overline{\xi}_i & \triangleq  \frac{1}{\sqrt{p}}\Biggl[-  \log \frac{|\mathbf{T}_0|}{|\mathbf{T}_1|} + \log \frac{\left(1+\gamma \delta_0\right)^{n_0}}{\left(1+\gamma \delta_1\right)^{n_1}}  \\ & + \gamma \left(\frac{n_1 \delta_1}{1+\gamma \delta_1}-\frac{n_0 \delta_0}{1+\gamma \delta_0}\right)  +\left(-1\right)^{i+1}\bm{\mu}^T \mathbf{T}_{1-i}\bm{\mu} \biggr].
\end{split}
\end{equation}
\begin{align}
\overline{b}_i  = \frac{1}{\sqrt{p}} \tr \mathbf{\Sigma}_i \left(\mathbf{T}_1-\mathbf{T}_0\right). \label{eq:bi}
\end{align}
\begin{equation}
\label{eq:Bi}
\begin{split}
\overline{B_i} & \triangleq \frac{\phi_i}{1-\gamma^2\phi_i\tilde{\phi}_i}\frac{n_i}{p}+\frac{1}{p}\tr \boldsymbol{\Sigma}_i^2{\bf T}_{1-i}^2 \\ & +\frac{n_{i}}{p}\frac{\gamma^2\tilde{\phi}_{1-i}}{1-\gamma^2{\phi}_{1-i}\tilde{\phi}_{1-i}}\left(\frac{1}{n_i}\tr \mathbf{\Sigma}_i\mathbf{\Sigma}_{1-1}{\bf T}_{1-i}^{2}\right)^2 \\ & -\frac{2}{p}\tr \boldsymbol{\Sigma}_i{\bf T}_1\boldsymbol{\Sigma}_i{\bf T}_0.
\end{split}
\end{equation}
As shall be shown in Appendix \ref{appendix:DE_QDA}, these quantities are deterministic approximations in probability of $\xi$, $b_i$ and $B_i$. We therefore get
\begin{theorem}
	\label{DE_QDA}
	Under assumptions \ref{As:1}-\ref{As:4}, the following convergence holds for $i \in \{0,1\}$
	\begin{align*}
	\epsilon^{R-QDA}_i - \Phi\left(\left(-1\right)^i \frac{\overline{\xi}_i-\overline{b_i}}{\sqrt{2 \overline{B_i}}}\right) \probto 0.
	\end{align*}
\end{theorem}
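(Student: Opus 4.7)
The plan is to begin from the CLT-based representation supplied by the Corollary immediately preceding the theorem, which states
$$\epsilon^{R-QDA}_i - \Phi\!\left((-1)^i\frac{\xi_i - \tr \mathbf{B}_i}{\sqrt{2\tr \mathbf{B}_i^2 + 4\|\mathbf{r}_i\|^2}}\right) \asto 0.$$
Since $\Phi$ is uniformly continuous, it suffices to produce deterministic (in probability) equivalents for the random scalars $\xi_i$, $\tr \mathbf{B}_i$, and $2\tr \mathbf{B}_i^2 + 4\|\mathbf{r}_i\|^2$, with scalings chosen so that the first two remain of order $\sqrt{p}$ and the third of order $p$. Establishing this sharp scaling, rather than the naive order-$p$ one would get for $\xi_i$ and $\tr \mathbf{B}_i$ separately, is precisely where Assumption~\ref{As:4} enters, since it forces the leading order-$p$ contributions in these scalars to cancel between the two classes.

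Next I would invoke standard large-dimensional random matrix tools for the regularized resolvents $\mathbf{H}_i = (\mathbf{I}_p + \gamma \widehat{\boldsymbol{\Sigma}}_i)^{-1}$. Under Assumption~\ref{As:3} the deterministic equivalent is $\mathbf{T}_i$ as defined in the statement, and one has $\frac{1}{p}\tr \mathbf{A}(\mathbf{H}_i - \mathbf{T}_i)\asto 0$ and $\mathbf{a}^T(\mathbf{H}_i - \mathbf{T}_i)\mathbf{b}\asto 0$ for bounded-norm deterministic $\mathbf{A}$, $\mathbf{a}$, $\mathbf{b}$. For the quadratic-in-$\mathbf{H}_i$ traces entering $\tr \mathbf{B}_i^2$ and the self-part of $\|\mathbf{r}_i\|^2$ I would use the second-order equivalent
$$\tfrac{1}{p}\tr \mathbf{A}\mathbf{H}_i\mathbf{B}\mathbf{H}_i \asto \tfrac{1}{p}\tr \mathbf{A}\mathbf{T}_i\mathbf{B}\mathbf{T}_i + \frac{\gamma^2 \tilde{\phi}_i}{1 - \gamma^2 \phi_i \tilde{\phi}_i}\,\tfrac{1}{p}\tr \mathbf{A}\boldsymbol{\Sigma}_i\mathbf{T}_i^2 \cdot \tfrac{1}{n_i}\tr \mathbf{B}\boldsymbol{\Sigma}_i\mathbf{T}_i^2,$$
which is exactly the origin of the $\phi_i/(1-\gamma^2 \phi_i \tilde{\phi}_i)$ factor in $\overline{B_i}$. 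Quadratic forms involving $\hat{\boldsymbol{\mu}}_i$ would be dealt with through the Gaussian independence miracle, $\hat{\boldsymbol{\mu}}_i\perp\widehat{\boldsymbol{\Sigma}}_i$: conditioning on $\widehat{\boldsymbol{\Sigma}}_i$ and on the class-$(1-i)$ data leaves $\hat{\boldsymbol{\mu}}_i\sim\mathcal{N}(\boldsymbol{\mu}_i,\boldsymbol{\Sigma}_i/n_i)$, and standard Gaussian concentration then gives $\hat{\boldsymbol{\mu}}_i^T\mathbf{H}_j\hat{\boldsymbol{\mu}}_i - \boldsymbol{\mu}_i^T\mathbf{H}_j\boldsymbol{\mu}_i - \tfrac{1}{n_i}\tr \boldsymbol{\Sigma}_i\mathbf{H}_j \asto 0$ and the analogous linear statement.

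For the log-determinant part of $\xi_i$ I would use the standard integrated Stieltjes-transform identity, which under Assumption~\ref{As:3} yields
$$\log\!\left|\mathbf{I}_p+\gamma\widehat{\boldsymbol{\Sigma}}_i\right| + \log|\mathbf{T}_i| - n_i\log(1+\gamma\delta_i) + \frac{n_i\gamma\delta_i}{1+\gamma\delta_i} \asto 0.$$
Subtracting across the two classes produces precisely the first three pieces of $\overline{\xi}_i$, while the remaining piece $(-1)^{i+1}\boldsymbol{\mu}^T\mathbf{T}_{1-i}\boldsymbol{\mu}$ comes from the mean cross-terms, combined with Assumption~\ref{As:2} to guarantee that this contribution is $O(\sqrt{p})$.

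The hard part is the Assumption~\ref{As:4} bookkeeping flagged in paragraph one: each individual term inside $\xi_i$ and $\tr \mathbf{B}_i$ is $O(p)$, so the cancellations between the two classes must be tracked at the finer $O(\sqrt{p})$ scale. I would handle this by writing $\mathbf{T}_1-\mathbf{T}_0 = \frac{\gamma}{1+\gamma\delta_0}\mathbf{T}_0(\boldsymbol{\Sigma}_0-\boldsymbol{\Sigma}_1)\mathbf{T}_1 + \gamma\!\left(\frac{1}{1+\gamma\delta_1}-\frac{1}{1+\gamma\delta_0}\right)\mathbf{T}_0\boldsymbol{\Sigma}_1\mathbf{T}_1$, then invoking Assumption~\ref{As:4} via the elementary bound $|\tr \mathbf{A}(\boldsymbol{\Sigma}_0-\boldsymbol{\Sigma}_1)|\le\|\mathbf{A}\|\sum_k|\lambda_k(\boldsymbol{\Sigma}_0-\boldsymbol{\Sigma}_1)| = O(\sqrt{p})$ for bounded-norm $\mathbf{A}$, together with a perturbation argument that upgrades this to $\delta_0-\delta_1 = O(p^{-1/2})$. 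These bounds then control $\tr \boldsymbol{\Sigma}_i(\mathbf{T}_1-\mathbf{T}_0)$ and the log-determinant difference at exactly the required order. Once the deterministic equivalents deliver $2\tr\mathbf{B}_i^2 + 4\|\mathbf{r}_i\|^2 = \Theta(p)$ and $\|\mathbf{r}_i\|^2 = O(p)$ almost surely, the Lyapunov condition \eqref{cond} is automatically satisfied, Proposition~\ref{prop:clt} applies, and continuity of $\Phi$ concludes the proof.
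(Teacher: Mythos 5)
Your proposal follows essentially the same route as the paper's proof: start from the CLT corollary, apply first- and second-order deterministic equivalents of $\mathbf{H}_i$ together with the log-determinant identity to get $\overline{\xi}_i$, $\overline{b}_i$ and $\overline{B}_i$, carry out the Assumption~\ref{As:4} bookkeeping ($\delta_0-\delta_1=O(p^{-1/2})$, $\tr\mathbf{A}(\boldsymbol{\Sigma}_0-\boldsymbol{\Sigma}_1)=O(\sqrt{p})$) that the paper isolates in Proposition~\ref{prop:order}, and verify the Lyapunov condition before invoking continuity of $\Phi$. The only nit is that to land on the stated denominator $\sqrt{2\overline{B}_i}$ you need $\frac{1}{p}\mathbf{r}_i^T\mathbf{r}_i\to 0$, not merely $\|\mathbf{r}_i\|^2=O(p)$ as written in your closing paragraph --- your own Assumption~\ref{As:2} estimate $\frac{1}{p}\boldsymbol{\mu}^T\mathbf{H}_{1-i}\boldsymbol{\Sigma}_i\mathbf{H}_{1-i}\boldsymbol{\mu}=O(p^{-1/2})$ already supplies this, exactly as in the paper.
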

\begin{proof}
	The proof is postponed to Appendix 	\ref{appendix:DE_QDA}.
\end{proof}
At first sight, quantity $\overline{\xi}_i-\overline{b}_i$ appears to be of order $O(\sqrt{p})$, since $\frac{1}{\sqrt{p}}\log |{\bf T}_i|$ and $\frac{1}{\sqrt{p}}\tr \boldsymbol{\Sigma}_i{\bf T}_i$ are $O(\sqrt{p})$. Following this line of thought, the asymptotic misclassification probability error is expected to converge to a trivial misclassification error. This statement is, hopefully false. Assumption \ref{As:4} and \ref{As:1} were                carefully 
designed so that  $\frac{1}{\sqrt{p}}\log |{\bf T}_1|-\frac{1}{\sqrt{p}}\log|{\bf T}_0|$ and $\frac{1}{\sqrt{p}}(n_1\delta_1-n_0\delta_0)$ are of order $O(1)$. In particular, the following is proven in Appendix \ref{app:order}
\begin{proposition}
	Under Assumption \ref{As:1}-\ref{As:4} The deterministic quantities $\overline{\xi}_i$ and $\overline{b}_i$ are uniformly bounded when $p$ grows to infinity.
	\label{prop:order}
\end{proposition}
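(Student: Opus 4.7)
The plan is to control separately the three potentially large contributions that appear in $\overline{\xi}_i$ and $\overline{b}_i$ once divided by $\sqrt{p}$: (i) the quadratic mean term in $\overline{\xi}_i$, (ii) the trace difference $\tr\boldsymbol{\Sigma}_i(\mathbf{T}_1-\mathbf{T}_0)$ defining $\overline{b}_i$, and (iii) the log--determinant block in $\overline{\xi}_i$. The mean term is immediate: since $\mathbf{T}_{1-i}=(\mathbf{I}_p+\alpha_{1-i}\boldsymbol{\Sigma}_{1-i})^{-1}$ with $\alpha_{1-i}=\gamma/(1+\gamma\delta_{1-i})\geq 0$, its spectral norm is at most $1$, and Assumption~\ref{As:2} gives $|\bm{\mu}^{T}\mathbf{T}_{1-i}\bm{\mu}|\leq\|\bm{\mu}\|^{2}=O(\sqrt{p})$.

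For $\overline{b}_i$ I will apply the resolvent identity $\mathbf{T}_1-\mathbf{T}_0=\mathbf{T}_1(\alpha_0\boldsymbol{\Sigma}_0-\alpha_1\boldsymbol{\Sigma}_1)\mathbf{T}_0$ and split $\alpha_0\boldsymbol{\Sigma}_0-\alpha_1\boldsymbol{\Sigma}_1=\alpha_0(\boldsymbol{\Sigma}_0-\boldsymbol{\Sigma}_1)+(\alpha_0-\alpha_1)\boldsymbol{\Sigma}_1$. The first piece is absorbed by the consequence of Assumption~\ref{As:4} highlighted in the text, since $\mathbf{T}_0\boldsymbol{\Sigma}_i\mathbf{T}_1$ has bounded spectral norm. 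The second piece is $O(p)|\alpha_0-\alpha_1|$ and requires $\alpha_0-\alpha_1=O(1/\sqrt{p})$. To establish this I subtract the two fixed-point equations defining $\delta_0$ and $\delta_1$, absorb the $n_0-n_1=o(1)$ mismatch from Assumption~\ref{As:1}, and apply the resolvent identity a second time inside the trace. This produces a linear equation
\begin{equation*}
(\delta_0-\delta_1)\left[n_0-\frac{\gamma^2\,\tr\boldsymbol{\Sigma}_1\mathbf{T}_0\boldsymbol{\Sigma}_1\mathbf{T}_1}{(1+\gamma\delta_0)(1+\gamma\delta_1)}\right]=O(\sqrt{p}),
\end{equation*}
whose bracketed coefficient is $\Theta(n)$ (this is the standard RMT contraction inequality that underlies uniqueness of the fixed point), hence $\delta_0-\delta_1=O(1/\sqrt{p})$ and therefore $\alpha_0-\alpha_1=O(1/\sqrt{p})$.

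The log-determinant block in $\overline{\xi}_i$ I group as $F(\boldsymbol{\Sigma}_0,n_0)-F(\boldsymbol{\Sigma}_1,n_1)$ with
\begin{equation*}
F(\boldsymbol{\Sigma},n)\triangleq\log|\mathbf{I}_p+\alpha\boldsymbol{\Sigma}|+n\log(1+\gamma\delta)-\frac{\gamma n\delta}{1+\gamma\delta},
\end{equation*}
where $\delta=\delta(\boldsymbol{\Sigma},n)$ is the fixed-point solution and $\alpha=\gamma/(1+\gamma\delta)$. The key observation is an envelope identity: direct differentiation yields $\partial_{\delta}F=\gamma^{2}(1+\gamma\delta)^{-2}\bigl(n\delta-\tr\boldsymbol{\Sigma}\mathbf{T}\bigr)$, which vanishes on the fixed-point manifold. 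Hence the total derivative $dF/d\boldsymbol{\Sigma}$ collapses to the explicit partial derivative $\alpha\mathbf{T}$, whose spectral norm is at most $\gamma$. Interpolating along $\boldsymbol{\Sigma}(t)=(1-t)\boldsymbol{\Sigma}_1+t\boldsymbol{\Sigma}_0$ and using $|\tr\mathbf{A}\mathbf{B}|\leq\|\mathbf{A}\|\cdot\|\mathbf{B}\|_{*}$ with the nuclear-norm bound $\|\boldsymbol{\Sigma}_0-\boldsymbol{\Sigma}_1\|_{*}=O(\sqrt{p})$ (an immediate consequence of Assumption~\ref{As:4}: $O(\sqrt{p})$ eigenvalues of size $O(1)$ plus $O(p)$ eigenvalues of size $O(1/\sqrt{p})$) yields $|F(\boldsymbol{\Sigma}_0,n_0)-F(\boldsymbol{\Sigma}_1,n_0)|=O(\sqrt{p})$. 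The residual $F(\boldsymbol{\Sigma}_1,n_0)-F(\boldsymbol{\Sigma}_1,n_1)$ is $o(1)$ by mean-value, since $\partial_{n}F=\log(1+\gamma\delta)-\gamma\delta/(1+\gamma\delta)$ is bounded and $n_0-n_1=o(1)$.

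The main obstacle is precisely the log-determinant piece: each of the three terms defining $F$ is individually $\Theta(p)$, and without the envelope cancellation $\partial_{\delta}F=0$ one would have to track extremely fine interactions among them. Once that observation is made, the argument reduces to a one-line Lipschitz bound driven by the nuclear-norm smallness of $\boldsymbol{\Sigma}_0-\boldsymbol{\Sigma}_1$ from Assumption~\ref{As:4}; the perturbation estimate for $\delta_0-\delta_1$ is secondary and routine once the standard fixed-point contraction is invoked.
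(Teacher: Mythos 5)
Your argument is correct in substance, and for the hardest part it takes a genuinely different route from the paper. The paper proves boundedness term by term: it first derives the linear equation for $\delta_0-\delta_1$ exactly as you do, shows the coefficient is bounded away from zero by an explicit spectral argument on $\|\boldsymbol{\Sigma}_0\mathbf{T}_0\|$, concludes $\delta_0-\delta_1=O(1/\sqrt{p})$, and then controls $\frac{n\delta_0}{1+\gamma\delta_0}-\frac{n\delta_1}{1+\gamma\delta_1}$ and $n\log(1+\gamma\delta_0)-n\log(1+\gamma\delta_1)$ directly from that estimate (via $\log x\leq x-1$), treating the log-determinant difference $\log|\mathbf{T}_0|-\log|\mathbf{T}_1|$ separately with the bound $\log\det(\mathbf{I}+\mathbf{X})\leq\tr\mathbf{X}$ applied to the matrix of absolute eigenvalues of $\boldsymbol{\Sigma}_1-\boldsymbol{\Sigma}_0$. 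Your envelope identity $\partial_\delta F=\gamma^2(1+\gamma\delta)^{-2}(n\delta-\tr\boldsymbol{\Sigma}\mathbf{T})=0$ on the fixed-point manifold packages all three $\Theta(p)$ terms into one functional whose $\boldsymbol{\Sigma}$-derivative is the bounded matrix $\alpha\mathbf{T}$, so a single nuclear-norm Lipschitz bound replaces the paper's three separate estimates; this is cleaner and explains \emph{why} the cancellation happens, at the cost of needing the interpolated fixed point $\delta(t)$ to exist and be differentiable along the path (which follows from the same contraction condition via the implicit function theorem, but you should say so). Two small points deserve tightening: the coefficient in your linear equation involves the \emph{cross} term $\tr\boldsymbol{\Sigma}_1\mathbf{T}_0\boldsymbol{\Sigma}_1\mathbf{T}_1$ mixing the two fixed points, so the ``standard'' single-covariance contraction inequality does not apply verbatim --- the paper closes this with the explicit bound $\gamma\|\boldsymbol{\Sigma}_0\mathbf{T}_0\|/(1+\gamma\delta_0)<1$ using the common eigenbasis of $\boldsymbol{\Sigma}_0$ and $\mathbf{T}_0$, and you need either that or a Cauchy--Schwarz step; and your proof of boundedness of $\overline{b}_i$ (which the paper leaves implicit) is the right argument and correctly identifies that it, unlike the log-determinant block, genuinely requires the $\delta_0-\delta_1=O(1/\sqrt{p})$ estimate rather than the envelope cancellation.
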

\begin{proof}
	The proof is deferred to Appendix \ref{app:order}
\end{proof}
\begin{remark}
	The results of       Theorem \ref{DE_QDA} along with proposition \ref{prop:order} show that the classification error converges to a non-trivial deterministic quantity that  depends only on the  statistical means and covariances within each class.
	The major importance of this result is that it allows to find a good choice of the regularization $\gamma$ as the value that minimizes the asymptotic classification error. While it seems to be elusive for such value to possess a closed-form expression, it can be numerically approximated by using a simple one-dimensional line search algorithm.
\end{remark}
\begin{remark}
	Using Assumption \ref{As:4}, it can be shown that $\overline{B}_i$ can asymptotically simplified to
	\begin{equation}
	\overline{B}_i= \frac{1}{c}\frac{\phi^2\tilde{\phi}}{1-\gamma^2\phi\tilde{\phi}}+o(1).
	\label{eq:BI}
	\end{equation}
	where $\phi=\phi_{0}$ or $\phi=\phi_1$. The above relation comes from the fact that, up to an error of order  $o(1)$, matrices $\boldsymbol{\Sigma}_1$ or $\boldsymbol{\Sigma}_0$ can be used interchangeably in $\phi_0$ or $\phi_1$ and in the terms involved in $\overline{B}_i$. This, in particular, implies that $\overline{B}_0$ and $\overline{B}_1$ are the same up to a vanishing error. It is noteworthy to see that the same artifice could not work for the terms $\overline{\xi}_i$ and
	$\overline{b}_i$ because the normalization, being with $\frac{1}{\sqrt{p}}$, is not sufficient to provide vanishing terms. 
	We should also mention that, although \eqref{eq:BI} takes a simpler form, we chose to work  in the simulations and when computing the consistent estimates of $\overline{B}_i$ with the expression \eqref{eq:Bi} since we found that it provides the highest accuracy.  
\end{remark}

\subsubsection{Some Special cases} 
	a)  It is important to note that we could have considered $\left \| \bm{\mu}_0 - \bm{\mu}_1 \right \|=O(1)$. In this case, the classification error rate would still converge to a non trivial limit but would not asymptotically depend on the difference  $\left \| \bm{\mu}_0 - \bm{\mu}_1 \right \|$. This is because in this case, the difference in covariance matrices dominate that of the means and as such represent the discriminant metric that  asymptotically matters. \\
	b) Another interesting case to highlight is the one in which $\left\|\mathbf{\Sigma}_0-\mathbf{\Sigma}_1\right\|=o\left(p^{-\frac{1}{2}} \right)$. From Theorem \ref{DE_QDA} and using \eqref{eq:BI}, it is easy to show that the total classification error converges as
	\begin{align}
	\label{RQDA_equal_cov}
	\epsilon^{R-QDA} - \Phi\left( - \frac{\bm{\mu}^T \mathbf{T}\bm{\mu}}{2\sqrt{p}}    \sqrt{\frac{c(1-\gamma^2 \phi \tilde{\phi})}{\gamma^2 \phi^2\tilde{\phi} }}\right) \probto 0,
	\end{align}
	where $\phi$, $\tilde{\phi}$ and $\mathbf{T}$ have respectively the same definitions as $\phi_i$, $\tilde{\phi}_i$ and $\mathbf{T}_i$ upon dropping the class index $i$, since quantities associated with class $0$ or class $1$ can be used interchangeably in the asymptotic regime.
	It is easy to see that in this case if  $\left \| \bm{\mu}_0 - \bm{\mu}_1 \right \|^2$ scales slower than $O\left(\sqrt{p}\right)$, classification is asymptotically impossible. This must be contrasted with the results of R-LDA, which provides non-vanishing misclassification rates for $\left \| \bm{\mu}_0 - \bm{\mu}_1 \right \|=O(1)$. This means that in this particular setting, R-QDA is asymptotically beaten by R-LDA which achieves perfect classification. \\ 
	c) When  $\left\|\mathbf{\Sigma}_0-\mathbf{\Sigma}_1\right\|_{F}=O(1)$ occurring for instance when  $\left\|\mathbf{\Sigma}_0-\mathbf{\Sigma}_1\right\|_1=O(p^{-\frac{1}{2}})$ or $\mathbf{\Sigma}_0-\mathbf{\Sigma}_1$ is of finite rank,  and $\left \| \bm{\mu}_0 - \bm{\mu}_1 \right \|^2=O(1)$,  then $\overline{b}_i\to b$ where $b$ does not depend on $i$ and as such the misclassification error probability associated with both classes converge  respectively to $1-\eta$ and $\eta$ with $\eta$ some
	probability depending solely on the statistics. The total misclassification error associated with R-QDA converges to $0.5$. \\  
	%
	d) When $n_1-n_0\to\infty$, quantities $\overline{\xi}_i$ and $\overline{b}_i$  grow unboundedly as the dimension increases. This unveils that asymptotically, the discriminant score of R-QDA will keep the same sign for all observations. The classifier would thus return the same class regardless of the observation under consideration. \par 
The above remarks should help to draw some hints on when R-LDA or R-QDA should be used.  Particularly, if the Frobenius norm of $\boldsymbol{\Sigma}_0-\boldsymbol{\Sigma}_1$ is  $O(1)$, using the information on the difference between the class covariance matrices is not recommended. We should rather rely on using the information on the difference between the classes' means, or in other words favoring the use of R-LDA against  R-QDA.
\section{General Consistent Estimator of the Testing Error}
\label{section:G_estimator}
In the machine learning field,  evaluating the performances of algorithms is a crucial step that not only serves to ensure their efficacy but also  to properly set the parameters involved in the design thereof, a process known in the machine learning parlance as model selection. The traditional way to evaluate performances consists in devoting a part of the training data to the design of the underlying method whereas performances are tested on the remaining data called testing
data, treated as unseen data since they do not intervene in the design step. Among the many existing computational methods that are built on these ideas are the   cross-validation \cite{Geisser,Lachenbruch}  and the bootstrap \cite{efron_83, efron_tibshirani} techniques. Despite being widely used in the machine learning community, these methods have the drawback of being computationally expensive and most importantly of relying on mere computations, which does not
lead to gain a better understanding  of the performances of the underlying algorithm. As far as LDA and QDA classifiers are considered, the results of the previous section allow to gain a deeper understanding of the classification performances with respect to the covariances and  means associated with both classes. However, as these results are expressed in terms of the unknown covariances and means, they could not be relied upon to assess the classification performances. In this section, we address this
question and provide consistent estimators of the classification performances for  both R-LDA and R-QDA classifiers that approximate in probability their asymptotic expressions.      
\subsection{R-LDA}
The following theorem provides the expression of the class-conditional true error estimator $\epsilon_i^{R-LDA}$, for $i \in \{0,1\}$.
\begin{theorem}
	\label{LDA_estimator} Under Assumptions \ref{As:1L}-\ref{As:3L}, denote
	\begin{equation}
	\widehat{\epsilon}_i^{R-LDA}= \Phi\left(\frac{\left(-1\right)^{i+1}G\left(\hat{\bm{\mu}}_i,\hat{\bm{\mu}}_0,\hat{\bm{\mu}}_1,\mathbf{H}\right) +  \widehat{\theta}_i + \left(-1\right)^i \log \frac{\pi_1}{\pi_0}}  {\widehat{\psi}_i\sqrt{ D\left({\hat{\bm{\mu}}}_0,\hat{\bm{\mu}}_1,\mathbf{H},\widehat{\mathbf{\Sigma}}_i\right)}}   \right),
	\end{equation}
	where
	\begin{align}
	\widehat{\theta}_i & = \frac{\frac{1}{n_i} \tr \widehat{\mathbf{\Sigma}}_i \mathbf{H} }{1-\frac{\gamma}{n-2}\tr \widehat{\mathbf{\Sigma}}_i \mathbf{H} }, \\
	\widehat{\psi}_i &  = \frac{1}{1-\frac{\gamma}{n-2} \tr \widehat{\mathbf{\Sigma}}_i \mathbf{H}}.
	\end{align}
	Then,
	\begin{align*}
	\epsilon_i^{R-LDA} - \widehat{\epsilon}_i^{R-LDA} \to_{a.s.}0.
	\end{align*}
\end{theorem}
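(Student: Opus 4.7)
The plan is to invoke Theorem \ref{DE_LDA}, which already yields $\epsilon_i^{R-LDA}\asto\overline{\epsilon}_i^{R-LDA}$, and to show that $\widehat{\epsilon}_i^{R-LDA}$ converges almost surely to the same deterministic limit. Because $\Phi$ is Lipschitz and the asymptotic denominator $\sqrt{\overline{D}_i}$ is bounded below under Assumptions \ref{As:4L}--\ref{As:3L}, the continuous mapping theorem reduces the claim to proving the two asymptotic identities
\begin{align*}
(\mathrm{i})\quad & (-1)^{i+1}\bigl[G(\hat{\bm{\mu}}_i,\hat{\bm{\mu}}_0,\hat{\bm{\mu}}_1,\mathbf{H}) - G(\bm{\mu}_i,\hat{\bm{\mu}}_0,\hat{\bm{\mu}}_1,\mathbf{H})\bigr] + \widehat{\theta}_i \asto 0, \\
(\mathrm{ii})\quad & \widehat{\psi}_i^{\,2}\, D(\hat{\bm{\mu}}_0,\hat{\bm{\mu}}_1,\mathbf{H},\widehat{\mathbf{\Sigma}}_i) - D(\hat{\bm{\mu}}_0,\hat{\bm{\mu}}_1,\mathbf{H},\mathbf{\Sigma}_i) \asto 0.
\end{align*}
The final result then follows by the triangle inequality, using Theorem \ref{DE_LDA} on the ``true'' side.

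For (i), introduce $\tilde{\bm{\mu}}_j \triangleq \hat{\bm{\mu}}_j - \bm{\mu}_j$. A direct expansion gives $G(\hat{\bm{\mu}}_i,\hat{\bm{\mu}}_0,\hat{\bm{\mu}}_1,\mathbf{H}) - G(\bm{\mu}_i,\hat{\bm{\mu}}_0,\hat{\bm{\mu}}_1,\mathbf{H}) = \tilde{\bm{\mu}}_i^T \mathbf{H}\bm{\mu} + (-1)^i \tilde{\bm{\mu}}_i^T \mathbf{H}\tilde{\bm{\mu}}_i - (-1)^i \tilde{\bm{\mu}}_i^T\mathbf{H}\tilde{\bm{\mu}}_{1-i}$, in which the linear term $\tilde{\bm{\mu}}_i^T\mathbf{H}\bm{\mu}$ and the bilinear cross term $\tilde{\bm{\mu}}_i^T\mathbf{H}\tilde{\bm{\mu}}_{1-i}$ vanish almost surely, since $\|\tilde{\bm{\mu}}_j\| = O(n^{-1/2})$ and a standard leave-one-out conditioning shows their variances are $O(n^{-1})$. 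After multiplication by $(-1)^{i+1}$, (i) therefore reduces to showing $\tilde{\bm{\mu}}_i^T \mathbf{H}\tilde{\bm{\mu}}_i - \widehat{\theta}_i \asto 0$. The key algebraic device is the decomposition $\widehat{\mathbf{\Sigma}} = \mathbf{\Sigma}^{\star} - \tfrac{n_i}{n-2}\tilde{\bm{\mu}}_i\tilde{\bm{\mu}}_i^T$, where $\mathbf{\Sigma}^{\star}$ is the pooled sample covariance formed with the true mean $\bm{\mu}_i$ used for class $i$ and $\hat{\bm{\mu}}_{1-i}$ for class $1-i$. Applying Sherman--Morrison to $\mathbf{H}$ and letting $\mathbf{H}^{\star} \triangleq (\mathbf{I}_p + \gamma \mathbf{\Sigma}^{\star})^{-1}$ yields
\begin{equation*}
\tilde{\bm{\mu}}_i^T \mathbf{H} \tilde{\bm{\mu}}_i \;=\; \frac{\tilde{\bm{\mu}}_i^T \mathbf{H}^{\star}\tilde{\bm{\mu}}_i}{1 - \tfrac{\gamma n_i}{n-2}\tilde{\bm{\mu}}_i^T \mathbf{H}^{\star}\tilde{\bm{\mu}}_i}.
\end{equation*}
Because $\tilde{\bm{\mu}}_i$ is asymptotically independent of $\mathbf{H}^{\star}$ (the residual dependence being peelable by iterated rank-one updates, each of size $O(n^{-1})$), the Gaussian trace lemma delivers $\tilde{\bm{\mu}}_i^T \mathbf{H}^{\star}\tilde{\bm{\mu}}_i - \tfrac{1}{n_i}\tr \bm{\Sigma}_i \mathbf{H}^{\star} \asto 0$. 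A resolvent identity together with a second Sherman--Morrison step then transports the trace back to its empirical counterpart, $\tfrac{1}{n_i}\tr \bm{\Sigma}_i \mathbf{H}^{\star} - \tfrac{1}{n_i}\tr \widehat{\bm{\Sigma}}_i \mathbf{H} \asto 0$; substituting into the displayed ratio reconstructs $\widehat{\theta}_i$ exactly.

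For (ii), write $D(\hat{\bm{\mu}}_0,\hat{\bm{\mu}}_1,\mathbf{H},\widehat{\mathbf{\Sigma}}_i) = (\hat{\bm{\mu}}_0-\hat{\bm{\mu}}_1)^T \mathbf{H}\widehat{\mathbf{\Sigma}}_i\mathbf{H}(\hat{\bm{\mu}}_0-\hat{\bm{\mu}}_1)$ and apply the same Sherman--Morrison expansion to each occurrence of $\mathbf{H}$, which transfers the bilinear form onto $\mathbf{H}^{\star}\widehat{\mathbf{\Sigma}}_i\mathbf{H}^{\star}$ modulo correction terms of the form $\tilde{\bm{\mu}}_i^T\mathbf{H}^{\star}(\cdots)$ whose contribution is tracked explicitly. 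Collecting these corrections and combining them with the deterministic-equivalent identity $\tfrac{1}{n_i}\tr \widehat{\bm{\Sigma}}_i\mathbf{H} \cdot (1-\tfrac{\gamma}{n-2}\tr\widehat{\bm{\Sigma}}_i\mathbf{H})^{-1} - \tfrac{1}{n_i}\tr \bm{\Sigma}_i\mathbf{H}^{\star}\asto 0$ established in the course of (i), the net effect of substituting $\widehat{\mathbf{\Sigma}}_i$ for $\mathbf{\Sigma}_i$ inside $\mathbf{H}(\cdot)\mathbf{H}$ is a multiplicative bias equal to $(1-\tfrac{\gamma}{n-2}\tr \widehat{\mathbf{\Sigma}}_i\mathbf{H})^{-2} = \widehat{\psi}_i^{\,2}$, which is precisely compensated by the $\widehat{\psi}_i^{\,2}$ prefactor in the estimator. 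The main obstacle throughout is the residual dependence between the class-$i$ samples---which appear simultaneously in $\tilde{\bm{\mu}}_i$, $\widehat{\mathbf{\Sigma}}_i$ and $\mathbf{H}$---and the bulk of the technical work goes into verifying that the accumulated error from the iterated leave-one-sample-out peeling is $o(1)$ almost surely, as opposed to merely in probability, which is where the Borel--Cantelli control on Gaussian concentration moments will be required.
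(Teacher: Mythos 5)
Your overall architecture matches the paper's: reduce to the two convergences (i) and (ii) via Theorem \ref{DE_LDA} and continuity of $\Phi$, and your treatment of (ii) --- the sample-wise inversion-lemma/trace-lemma expansion producing the multiplicative bias $\widehat{\psi}_i^{-2}$ --- is exactly the "trick" the paper uses. However, your proof of (i) contains a genuine error in its two intermediate claims, even though the final formula happens to come out right. First, a small but telling point: in the regime $p/n\to c>0$ one has $\|\tilde{\bm{\mu}}_i\|^2\approx\frac{1}{n_i}\tr\mathbf{\Sigma}_i=O(1)$, not $O(n^{-1/2})$ for the norm as you assert; the linear and cross terms vanish because $\tilde{\bm{\mu}}_i$ is mean-zero and independent of the other factors (so the conditional variance is $O(1/n)$), not because $\tilde{\bm{\mu}}_i$ is small --- indeed if it were small the diagonal term $\tilde{\bm{\mu}}_i^T\mathbf{H}\tilde{\bm{\mu}}_i$ would also vanish and no correction $\widehat{\theta}_i$ would be needed.

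The more serious problem is the step $\tilde{\bm{\mu}}_i^T\mathbf{H}^{\star}\tilde{\bm{\mu}}_i-\frac{1}{n_i}\tr\bm{\Sigma}_i\mathbf{H}^{\star}\asto 0$. For Gaussian data the centered pooled covariance $\widehat{\mathbf{\Sigma}}$ is \emph{exactly} independent of $\tilde{\bm{\mu}}_i$ (Cochran's theorem; the paper realizes this through the rotation $\mathbf{O}_i$), so $\mathbf{\Sigma}^{\star}=\widehat{\mathbf{\Sigma}}+\frac{n_i}{n-2}\tilde{\bm{\mu}}_i\tilde{\bm{\mu}}_i^T$ depends on $\tilde{\bm{\mu}}_i$ only through the explicit rank-one term, and applying Sherman--Morrison in the reverse direction gives the exact identity
\begin{equation*}
\tilde{\bm{\mu}}_i^T\mathbf{H}^{\star}\tilde{\bm{\mu}}_i=\frac{\tilde{\bm{\mu}}_i^T\mathbf{H}\tilde{\bm{\mu}}_i}{1+\frac{\gamma n_i}{n-2}\tilde{\bm{\mu}}_i^T\mathbf{H}\tilde{\bm{\mu}}_i}\;\longrightarrow\;\frac{\frac{1}{n_i}\tr\bm{\Sigma}_i\mathbf{H}}{1+\frac{\gamma}{n-2}\tr\bm{\Sigma}_i\mathbf{H}},
\end{equation*}
which differs from $\frac{1}{n_i}\tr\bm{\Sigma}_i\mathbf{H}^{\star}=\frac{1}{n_i}\tr\bm{\Sigma}_i\mathbf{H}+O(n^{-1})$ by a non-vanishing multiplicative factor: the dependence you propose to "peel" is a single rank-one update of macroscopic effect, not an accumulation of negligible ones. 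Your second claim, $\frac{1}{n_i}\tr\bm{\Sigma}_i\mathbf{H}^{\star}-\frac{1}{n_i}\tr\widehat{\bm{\Sigma}}_i\mathbf{H}\asto 0$, is false by exactly the reciprocal factor (the paper proves $\frac{1}{n_i}\tr\widehat{\mathbf{\Sigma}}_i\mathbf{H}\to\frac{1}{n_i}\tr\mathbf{\Sigma}_i\mathbf{H}\cdot(1+\frac{\gamma}{n-2}\tr\mathbf{\Sigma}_i\mathbf{H})^{-1}$, which is the whole point of the $\widehat{\theta}_i$ construction). The two errors cancel when substituted into your Sherman--Morrison ratio, which is why you land on $\widehat{\theta}_i$, but as written neither intermediate convergence is true. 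The clean route --- the paper's --- is to note that $\mathbf{H}$ itself is independent of $\tilde{\bm{\mu}}_i$, apply the trace lemma directly to get $\tilde{\bm{\mu}}_i^T\mathbf{H}\tilde{\bm{\mu}}_i-\frac{1}{n_i}\tr\mathbf{\Sigma}_i\mathbf{H}\asto 0$, and then separately establish $\widehat{\theta}_i-\frac{1}{n_i}\tr\mathbf{\Sigma}_i\mathbf{H}\asto 0$ by expanding $\tr\widehat{\mathbf{\Sigma}}_i\mathbf{H}$ sample by sample with the inversion and trace lemmas and inverting the resulting scalar relation.
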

\begin{proof}
	The proof is postponed to Appendix \ref{appendix:GE_LDA}.
\end{proof}
\begin{remark}
	From Theorem \ref{LDA_estimator}, it is easy to recover the general consistent estimator of the  conditional classification error constructed in \cite{zollanvari}. In particular, in the case where $\mathbf{\Sigma}_0=\mathbf{\Sigma}_1 = \mathbf{\Sigma}$, we have the following
	\begin{align*}
	\frac{1}{n_i} \tr \widehat{\mathbf{\Sigma}} \mathbf{H} = \frac{1}{\gamma} \left(\frac{p}{n_i}-\frac{1}{n_i}\tr \mathbf{H}\right).
	\end{align*}
	Thus, upon dropping the class index $i$, $\widehat{\theta}$ is equivalent to $\widehat{\delta}$ used in \cite{zollanvari}.
\end{remark}
\subsection{R-QDA}
Based on the deterministic equivalent of the conditional classification error derived in Theorem \ref{DE_QDA} , we construct a general consistent estimator of $\epsilon_i^{R-QDA}$ denoted by $\widehat{\epsilon}_i^{R-QDA}$. The general consistent estimator of the R-QDA misclassification error is given by the following Theorem. 
\begin{theorem}
	\label{QDA_estimator}
	Under Assumptions \ref{As:1}-\ref{As:4}, define
	\begin{equation}
	\widehat{\epsilon}_i^{R-QDA} = \Phi \left(\left(-1\right)^i \frac{\widehat{\xi}_i - \widehat{b}_i}{\sqrt{2 \widehat{B}_i}}\right),
	\end{equation}
	Then,
	\begin{align*}
	\widehat{\epsilon}_i^{R-QDA} - \epsilon_i^{R-QDA}  \probto 0.
	\end{align*}
	where
	\begin{align*}
	\widehat{\xi}_i & = -\frac{1}{\sqrt{p}}\log \frac{|\mathbf{H}_0|}{|\mathbf{H}_1|} + \frac{\left(-1\right)^{i+1}}{\sqrt{p}} \left(\hat{\bm{\mu}}_0-\hat{\bm{\mu}}_1\right)^T \mathbf{H}_{1-i} \left(\hat{\bm{\mu}}_0-\hat{\bm{\mu}}_1\right) .\\
	\widehat{\delta}_i & = \frac{1}{\gamma} \frac{\frac{p}{n_i}-\frac{1}{n_i}\tr \mathbf{H}_i}{1-\frac{p}{n_i}+\frac{1}{n_i}\tr \mathbf{H}_i}. \\ 
	\widehat{b}_i & = \frac{\left(-1\right)^i}{\sqrt{p}} \tr \widehat{\mathbf{\Sigma}}_i\mathbf{H}_{1-i} + \frac{\left(-1\right)^{i+1}n_i}{\sqrt{p}} \widehat{\delta}_i. \\
	\widehat{B}_i & = \left(1+\gamma \widehat{\delta}_i\right)^4 \frac{1}{p}\tr \widehat{\mathbf{\Sigma}}_i \mathbf{H}_i\widehat{\mathbf{\Sigma}}_i \mathbf{H}_i - \frac{n_i}{p}\widehat{\delta}_i^2 \left(1+\gamma \widehat{\delta}_i\right)^2  \\ & + \frac{1}{p}\tr \widehat{\mathbf{\Sigma}}_i \mathbf{H}_{1-i}\widehat{\mathbf{\Sigma}}_i \mathbf{H}_{1-i} - \frac{n_i}{p} \left(\frac{1}{n_i}\tr \widehat{\mathbf{\Sigma}}_i \mathbf{H}_{1-i}\right)^2 \\
	& -2 \left(1+\gamma \widehat{\delta}_i\right)^2 \frac{1}{p} \tr \widehat{\mathbf{\Sigma}}_i \mathbf{H}_i \widehat{\mathbf{\Sigma}}_i \mathbf{H}_{1-i} +  \widehat{\delta}_i \left(1+\gamma \widehat{\delta}_i\right) \frac{2}{p} \tr \widehat{\mathbf{\Sigma}}_i \mathbf{H}_{1-i}.
	\end{align*}
\end{theorem}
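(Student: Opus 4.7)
The goal is to show $\widehat{\epsilon}_i^{R-QDA}-\epsilon_i^{R-QDA}\probto 0$. Combined with Theorem~\ref{DE_QDA} and the Lipschitz property of $\Phi$, this reduces to proving the three scalar convergences $\widehat{\xi}_i-\overline{\xi}_i\probto 0$, $\widehat{b}_i-\overline{b}_i\probto 0$, and $\widehat{B}_i-\overline{B}_i\probto 0$, together with a uniform lower bound on $\overline{B}_i$ so that the denominator stays away from zero. The subtle point is that $\overline{\xi}_i$ and $\overline{b}_i$ are only $O(1)$ by Proposition~\ref{prop:order} but are built from $1/\sqrt{p}$-normalized traces and log-determinants whose natural RMT equivalents come with error $o(n)$; naive first-order approximations would leave $O(\sqrt{p})$ residuals and must therefore be sharpened or exploited in pairs.

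For $\widehat{\delta}_i$, the estimator is the exact sample analogue of the fixed-point identity $\delta_i=\gamma^{-1}(p/n_i-\tr\mathbf{T}_i/n_i)/(1-p/n_i+\tr\mathbf{T}_i/n_i)$, so $\widehat{\delta}_i-\delta_i\asto 0$ follows from $\frac{1}{n_i}\tr\mathbf{H}_i-\frac{1}{n_i}\tr\mathbf{T}_i\asto 0$ and the smoothness of the inversion. For $\widehat{b}_i$ and $\widehat{B}_i$, each term has the form $\frac{1}{p}\tr\widehat{\boldsymbol{\Sigma}}_i\mathbf{M}$ or $\frac{1}{p}\tr\widehat{\boldsymbol{\Sigma}}_i\mathbf{H}_j\widehat{\boldsymbol{\Sigma}}_i\mathbf{H}_k$ with $\mathbf{M}$ deterministic or, when the index pattern allows, random but independent of $\widehat{\boldsymbol{\Sigma}}_i$. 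The standard deterministic equivalent $\frac{1}{n}\tr\mathbf{A}\mathbf{H}_j-\frac{1}{n}\tr\mathbf{A}\mathbf{T}_j\asto 0$ together with the resolvent identity lets us substitute $\widehat{\boldsymbol{\Sigma}}_i$ for $\boldsymbol{\Sigma}_i$ at the cost of the prefactors $(1+\gamma\widehat{\delta}_i)^k$ that appear in the definition of $\widehat{B}_i$. The cross-class traces $\tr\widehat{\boldsymbol{\Sigma}}_i\mathbf{H}_{1-i}$ are especially easy because the training sets $\mathcal{T}_0$ and $\mathcal{T}_1$ are independent: conditioning on $\mathbf{H}_{1-i}$ reduces them to a trace-concentration argument.

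For $\widehat{\xi}_i$, the new ingredient is the log-determinant. Writing $-\log|\mathbf{H}_i|=\log|\mathbf{I}_p+\gamma\widehat{\boldsymbol{\Sigma}}_i|$, the required identity is
$$\log|\mathbf{I}_p+\gamma\widehat{\boldsymbol{\Sigma}}_i|-\log\bigl|\mathbf{I}_p+\tfrac{\gamma}{1+\gamma\delta_i}\boldsymbol{\Sigma}_i\bigr|-n_i\log(1+\gamma\delta_i)+\tfrac{n_i\gamma\delta_i}{1+\gamma\delta_i}\probto 0,$$
obtained by integrating the Stieltjes-transform fixed point in $\gamma$ from $0$. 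Taking the difference over $i=0,1$ and dividing by $\sqrt{p}$ exactly matches the first three bracketed terms of $\overline{\xi}_i$ in \eqref{eq:xi}. The quadratic form $(\hat{\bm{\mu}}_0-\hat{\bm{\mu}}_1)^T\mathbf{H}_{1-i}(\hat{\bm{\mu}}_0-\hat{\bm{\mu}}_1)$ is handled by writing $\hat{\bm{\mu}}_j=\bm{\mu}_j+n_j^{-1/2}\boldsymbol{\Sigma}_j^{1/2}\bar{\bm{z}}_j$, expanding, and applying the trace lemma to the Gaussian contributions (using independence when $j\neq 1-i$, and a rank-one perturbation argument otherwise). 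The $\bm{\mu}^T\mathbf{H}_{1-i}\bm{\mu}$ piece is replaced by $\bm{\mu}^T\mathbf{T}_{1-i}\bm{\mu}$ via the standard quadratic-form equivalent, with error $o(\|\bm{\mu}\|^2)=o(\sqrt{p})$; all Gaussian-mean corrections are $O(1)$ before the $1/\sqrt{p}$ normalization and hence vanish after it.

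The main obstacle is precisely the sharpness of the log-determinant identity above. Standard first-order RMT gives it only with error $o(n)$, but after division by $\sqrt{p}$ that still leaves an $O(\sqrt{p})$ residual, hopelessly inadequate when $\overline{\xi}_i$ is only $O(1)$. Achieving the required $o(\sqrt{p})$ control calls either for a Gaussian concentration inequality (applicable here since the data are Gaussian) directly on the normalized quantity, or for a careful integration of the Bai--Silverstein log-det equivalent combined with Assumption~\ref{As:4} to force cancellation between the $i=0$ and $i=1$ contributions. Once that single estimate is secured, the remainder of the theorem is routine term-by-term matching with the resolvent identity and the trace lemma, closely paralleling the steps already used in the proof of Theorem~\ref{DE_QDA}.
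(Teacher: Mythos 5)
Your decomposition differs from the paper's in a way that manufactures the very obstacle you then leave unresolved. You propose to prove $\widehat{\xi}_i-\overline{\xi}_i\probto 0$, $\widehat{b}_i-\overline{b}_i\probto 0$, $\widehat{B}_i-\overline{B}_i\probto 0$, i.e.\ to compare the estimators to the \emph{deterministic} equivalents; this forces you to control $\frac{1}{\sqrt{p}}\log|\mathbf{H}_i|$ against its deterministic approximation with error $o(1)$ after the $1/\sqrt{p}$ normalization, and you correctly identify this sharp log-determinant estimate as "the main obstacle" --- but you only gesture at two possible ways to obtain it (Gaussian concentration, or integration of the Bai--Silverstein equivalent with cancellation between classes) without carrying either out. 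That is a genuine gap in your argument as written. The paper sidesteps it entirely by comparing the estimators to the \emph{random} quantities instead: it proves $\widehat{\xi}_i-\frac{1}{\sqrt{p}}\xi_i\probto 0$, $\widehat{b}_i-\frac{1}{\sqrt{p}}\tr\mathbf{B}_i\probto 0$, $\widehat{B}_i-\frac{1}{p}\tr\mathbf{B}_i^2\probto 0$, and then invokes the CLT corollary for $\epsilon_i^{R-QDA}$. Since $\widehat{\xi}_i$ and $\frac{1}{\sqrt{p}}\xi_i$ contain the \emph{identical} observable term $-\frac{1}{\sqrt{p}}\log\frac{|\mathbf{H}_0|}{|\mathbf{H}_1|}$, the log-determinant cancels exactly in their difference and no concentration for it is needed at all; the only work for $\widehat{\xi}_i$ is replacing the quadratic forms $(\bm{\mu}_i-\hat{\bm{\mu}}_j)^T\mathbf{H}_j(\bm{\mu}_i-\hat{\bm{\mu}}_j)$ by $(\hat{\bm{\mu}}_0-\hat{\bm{\mu}}_1)^T\mathbf{H}_{1-i}(\hat{\bm{\mu}}_0-\hat{\bm{\mu}}_1)$, which is routine. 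Your route could be repaired by a triangle inequality through $\frac{1}{\sqrt{p}}\xi_i$ (borrowing the log-det convergence already asserted in the proof of Theorem~\ref{DE_QDA}), but as proposed it is incomplete.

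Two smaller points. First, for $\widehat{b}_i$ you claim $\widehat{\delta}_i-\delta_i\asto 0$ from consistency of $\frac{1}{n_i}\tr\mathbf{H}_i$; mere consistency is not enough, because the term $\frac{n_i}{\sqrt{p}}\widehat{\delta}_i$ multiplies the error by $n_i/\sqrt{p}=O(\sqrt{p})$. You need the fluctuation rate $\widehat{\delta}_i-\delta_i=O_P(p^{-1})$, which the paper obtains by noting the variance of $\widehat{\delta}_i$ is $O(p^{-2})$. Second, your treatment of $\widehat{B}_i$ (inversion lemma plus trace lemma to convert $\frac{1}{p}\tr\widehat{\boldsymbol{\Sigma}}_i\mathbf{H}_j\widehat{\boldsymbol{\Sigma}}_i\mathbf{H}_k$ into the population traces with $(1+\gamma\widehat{\delta}_i)^k$ prefactors, and independence for the cross-class terms) does match the paper's computation and is sound at the $1/p$ normalization, where first-order errors suffice.
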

\begin{proof}
	See Appendix \ref{appendix:QDA_estimator}.
\end{proof}
\subsection{Validation with synthetic data}	
We validate the results of Theorems \ref{LDA_estimator} and \ref{QDA_estimator} by examining the accuracy of the proposed general consistent estimators in terms of the RMS defined as follows\footnote{Since both synthetic and real data are of finite dimensions, we keep vanishing parts of the estimator when implementing the code in our simulations.}
\begin{equation}
\text{RMS} \left(\widehat{\epsilon}\right) = \sqrt{\text{Bias}\left(\widehat{\epsilon}\right)^2 + \text{var}\left(\widehat{\epsilon}-\epsilon\right)},
\end{equation}
where 
\begin{equation}
\label{bias_rms}
\text{Bias}\left(\widehat{\epsilon}\right) = \mathbb{E} \left[\widehat{\epsilon}-\epsilon \right].
\end{equation}

We also compare the proposed general consistent estimator (that we denote by the G-estimator) for both R-LDA and R-QDA with the following benchmark estimation techniques fully described in \cite{Dougherty}
\begin{itemize}
	\item 5-fold cross-validation with 5 repetitions (5-CV).
	\item 0.632 bootstrap (B632).
	\item 0.632+ bootstrap (B632+).
	\item Plugin estimator consisting of replacing the statistics in the deterministic equivalents by their corresponding sample estimates.
\end{itemize}
\begin{figure}
	\centering
		\includegraphics[scale=.4]{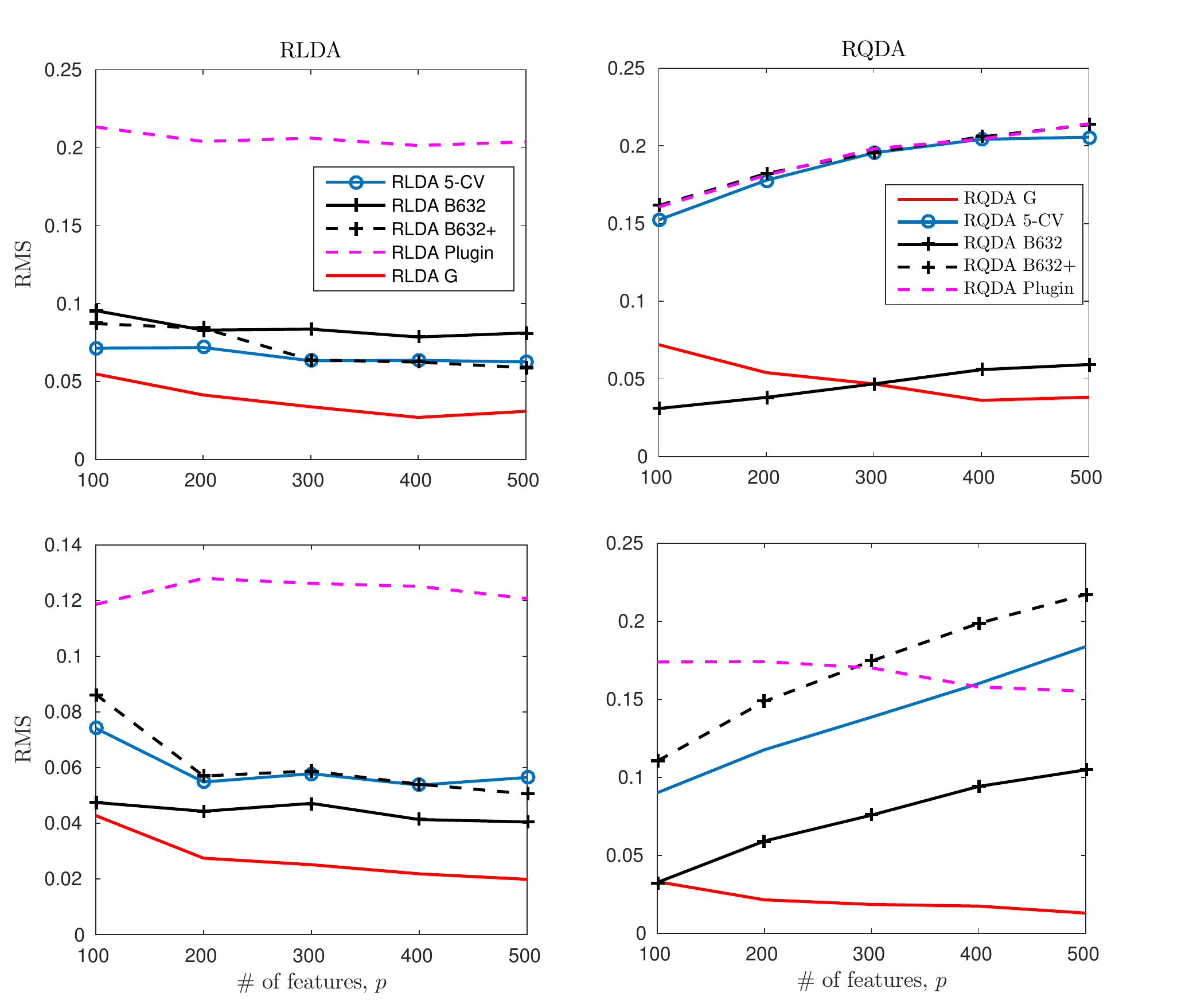}
	\caption{RMS performance of the proposed general consistent estimators (RLDA G and RQDA G) compared with the benchmark estimation techniques. We consider equal training size ($n_0=n_1$), $\gamma=1$ and $\left[\mathbf{\Sigma}_0\right]_{i,j} = 0.6^{|i-j|}$, $\mathbf{\Sigma}_1= \mathbf{\Sigma}_0 + 3 \mathbf{S}_p$, $\bm{\mu}_0 = \left[1,\mathbf{0}_{1\times \left(p-1\right)}\right]^T$ and $\bm{\mu}_1 = \bm{\mu}_0 + \frac{0.8}{\sqrt{p}} \mathbf{1}_{p \times 1}$. The first row treats the
		case where $n_0 = p/2$ whereas the second row treats the case $n_0 = p$. The testing error is evaluated over a testing set of size $1000$ samples for both classes and averaged over $1000$ realizations.}
	\label{fig:RDA_RMS}
\end{figure}
\begin{figure}
	\centering
	\includegraphics[scale=.45]{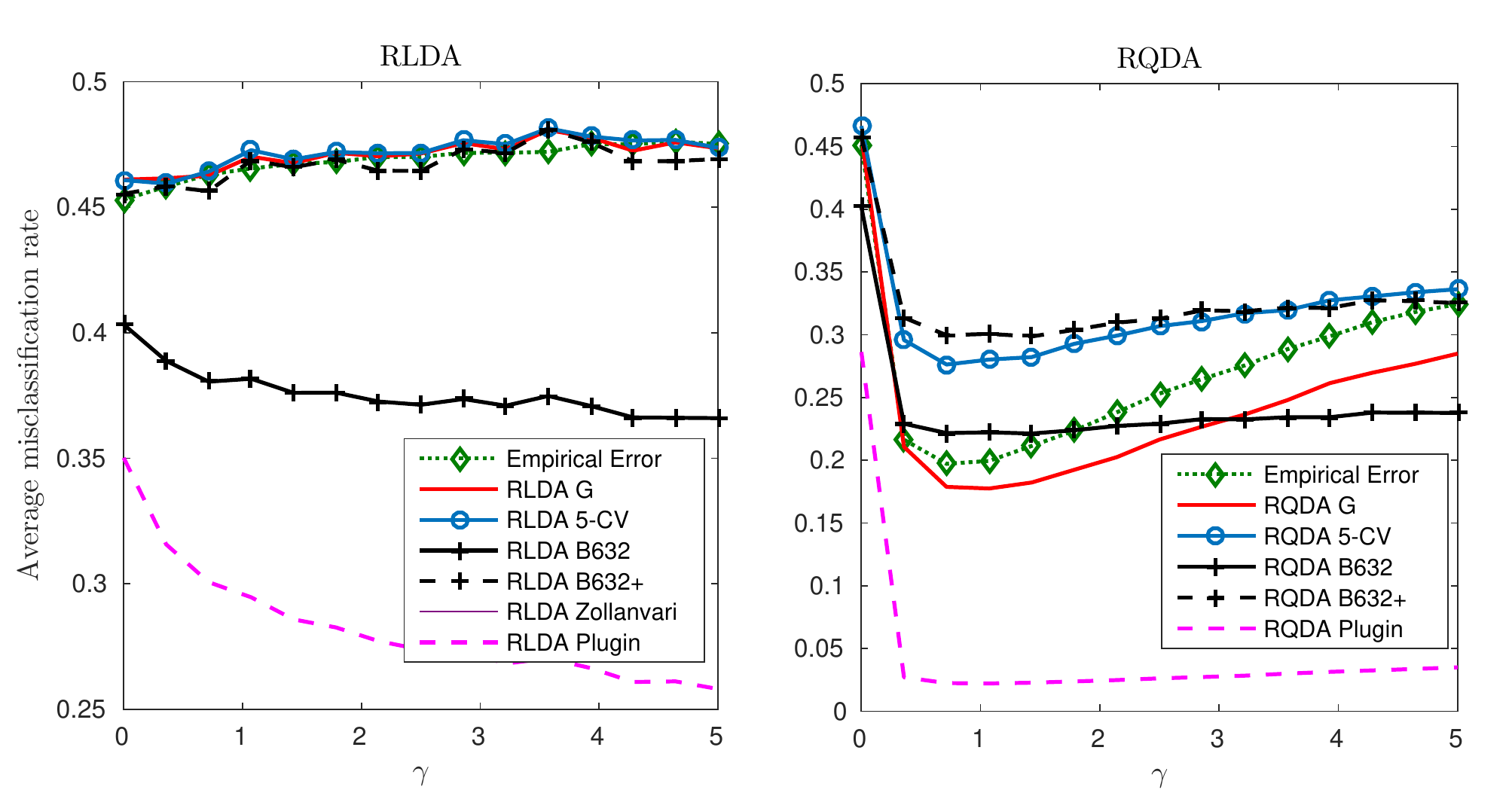}
	\caption{Average misclassification rate versus the regularization parameter $\gamma$. We consider $p=100$ features with equal training size ($n_0=n_1=p$), $\left[\mathbf{\Sigma}_0\right]_{i,j} = 0.6^{|i-j|}$, $\mathbf{\Sigma}_1= \mathbf{\Sigma}_0 + 3 \mathbf{S}_p$, $\bm{\mu}_0 = \left[1,\mathbf{0}_{1\times \left(p-1\right)}\right]^T$ and $\bm{\mu}_1 = \bm{\mu}_0 + \frac{0.8}{\sqrt{p}} \mathbf{1}_{p \times 1}$. The testing error is evaluated over a testing set of size $1000$ samples for both classes and averaged over $1000$ realizations.}
	\label{fig:estmates_vs_gamma}
\end{figure}
In Figure \ref{fig:RDA_RMS}, we observe that the naive plugin estimator has the worst RMS performance for both classifiers in most cases. This is simply explained by the fact that when $p$ and $n_i$ have the same order of magnitude, the sample estimates are inaccurate which leads to a medicocre RMS performance.  
On another front, it is clear for both settings ($n_i = p/2$ and $n_i = p$) that the proposed G-estimator achieves a suitable RMS performance beating 5-fold cross validation and the bootstrap. 
In Figure \ref{fig:estmates_vs_gamma}, we examine the performance of the different error estimators against the regularization parameter. As shown in the Figure \ref{fig:estmates_vs_gamma}, R-LDA is less vulnerable to the choice of $\gamma$ as compared to R-QDA where the choice of $\gamma$ tends to have a higher influence on the performance. Also, for both classifiers, the proposed G-estimator is able to track the empirical error and thus permits to predict the optimal regularizer with high accuracy.
\section{Experiments with real data}
\label{section:experiments}
\begin{figure}
	\centering
	\includegraphics[scale=.45]{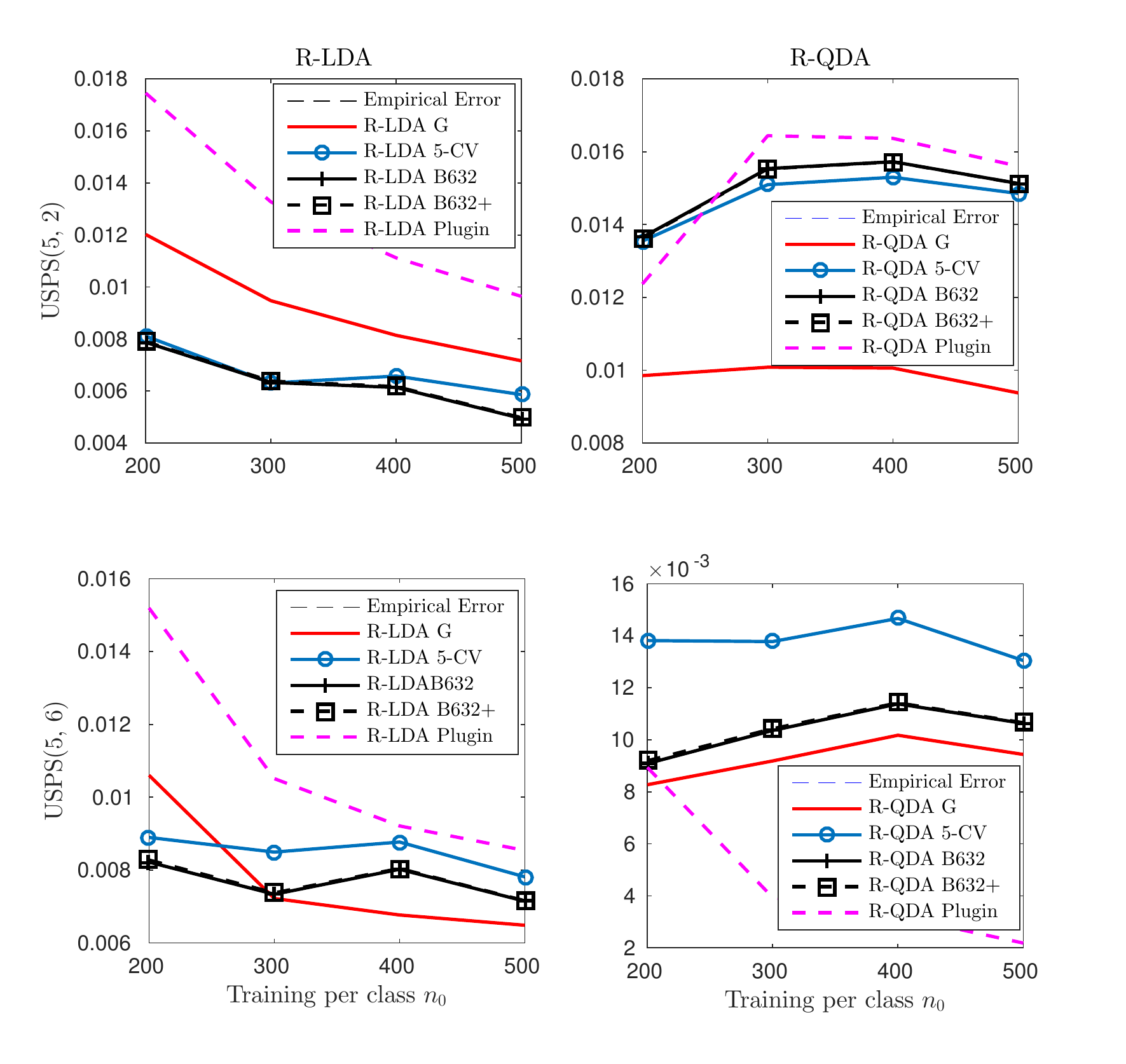}
	\caption{RMS performance of the proposed general consistent estimators (R-LDA G and R-QDA G) compared with the benchmark estimation techniques. We consider equal training size ($n_0=n_1$)  and $\gamma=1$. The first row gives the performance for the USPS data with digits (5, 2) whereas the second row considers the digits (5, 6).}
	\label{fig:USPS_RMS}
\end{figure}
In this section, we examine the performance of the proposed G estimator on the public USPS dataset of handwritten digits \cite{mnist}. The dataset consists of 7291 training samples of $16 \times 16$ grayscale images ($p=256$ features) and 2007 testing images \url{http://www.csie.ntu.edu.tw/~cjlin/libsvmtools/datasets/multiclass.html#usps} \footnote{All the results of this paper can be reproduced using our Julia codes available in \url{https://github.com/KhalilElkhalil/Large-Dimensional-Discriminant-Analysis-Classifiers-with-Random-Matrix-Theory}}.	
First, we examine the RMS performance of the different error estimators on the data for different values of the training size and for different class labels. The RMS is determined by averaging the error over a number of training sets randomly selected from the total training dataset. As shown in Figure \ref{fig:USPS_RMS}, the proposed G-estimator gives a good RMS performance especially for R-QDA where it can actually outperform state-of-the art estimators such as cross validation and Bootstrap. Moreover, it is clear that the plugin estimator has a higher RMS performance for most of the considered scenarios.

\begin{figure}
	\centering
	\includegraphics[scale=.45]{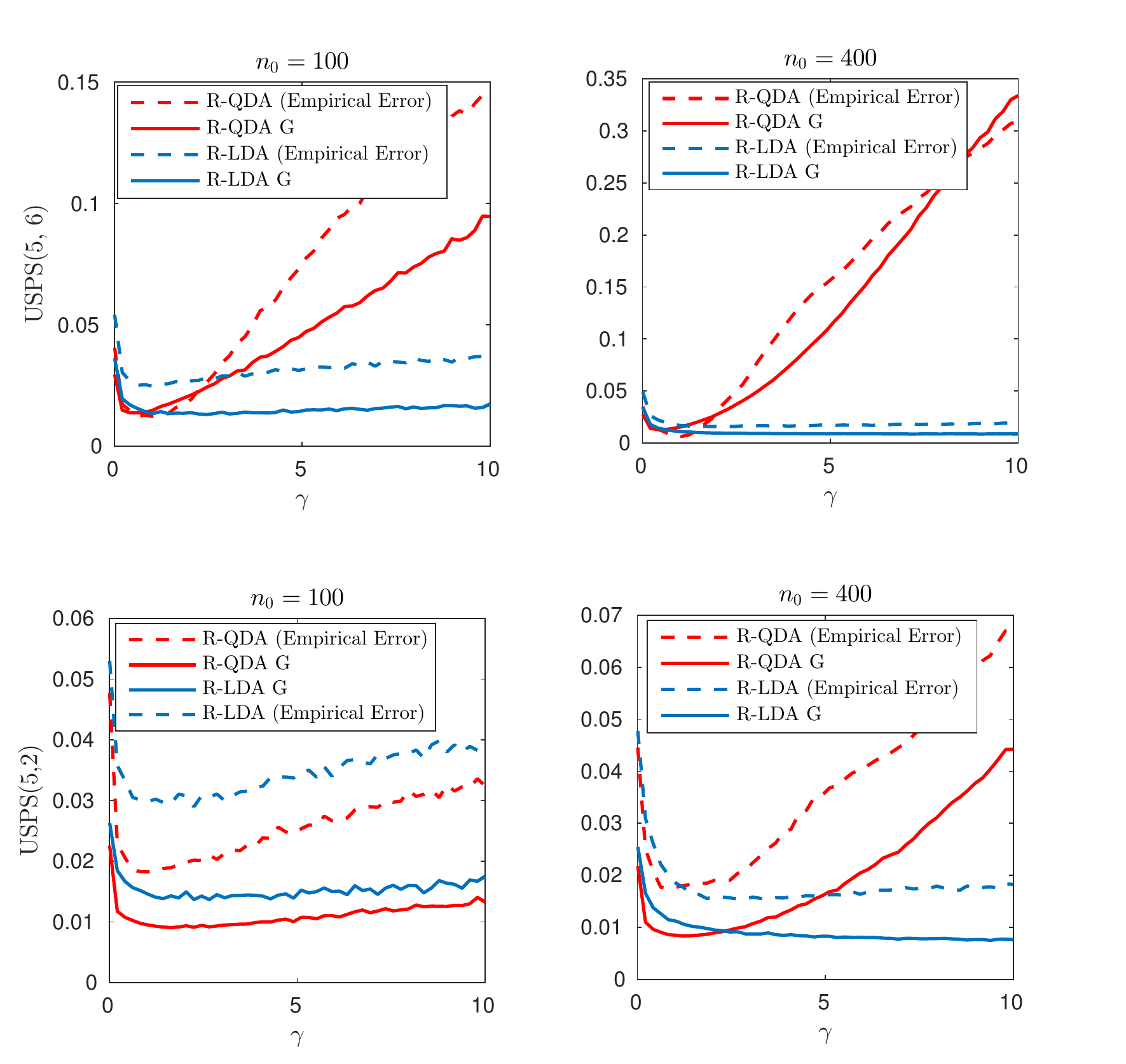}
	\caption{Average misclassification rate versus the regularization parameter $\gamma$ of the USPS dataset for different instances of digits and assuming equal training size ($n_0=n_1$). The solid red line refers to the performance of the proposed G-estimator whereas the dotted black line refers to the empirical error computed using the testing data.}
	\label{fig:MNIST_vs_gamma}
\end{figure}	
Now, we turn our attention to finding the \emph{optimal} $\gamma$ that results in the minimum testing error. 
Since the construction of the G-estimator is heavily based on the Gaussian assumption of the data, picking the regularizer that minimizes the estimated error using the G-estimator will not necessarily minimize the error computed on the testing data for USPS. One straightforward approach is  to compute the testing error for all possible value of $\gamma$ in the range $\left(0, \infty\right)$, then pick the regularizer resulting in the minimum error\footnote{Usually, we perform cross validation or Bootstrap to have an estimate of the error from the training set, but since we have enough testing data we rely on the testing error for the USPS dataset.}. Obviously, this approach is far from being practical and is simply unfeasible. Motivated by this issue, we propose a two-stage optimization explained as follows.
\subsection{Two-stage optimization}
Although real data are far from being Gaussian, the proposed G-estimator can be used to have a glimpse on the optimal regularizer. More specifically, we can use the G-estimator to determine the interval  in which  the optimal regularizer is likely to belong, then we perform cross validation (or testing if we have enough testing data) for multiple values of $\gamma$ inside that interval and finally pick the value that results on the minimum cross-validation error (or testing error). As seen in Figure \ref{fig:MNIST_vs_gamma}, both the R-LDA and the R-QDA G-estimators are able to mimic the real behavior of the testing error when $\gamma$ varies for both situations when $n_0 < p$ and $n_0 > p$. Similarly to synthetic data, Figure \ref{fig:MNIST_vs_gamma} also shows how R-QDA is vulnerable to the choice of $\gamma$ which justifies the need to find a \emph{good} regularization parameter $\gamma$. In Table \ref{table:optimal_gamma}, we provide numerical values for the output of the two-step optimization using a confidence interval $\left(\left(\widehat{\gamma}_G - \frac{2}{\sqrt{p}}\right)^+, \widehat{\gamma}_G + \frac{2}{\sqrt{p}}\right)$ \footnote{$x^{+} = \max(x, 0)$, for $x \in \mathbb{R}$.} with a uniform grid of 50 points where $\widehat{\gamma}_G$ is a minimizer of the G-estimator built based on the Gaussian assumption.

\begin{table}[]
	\centering
	\caption{Estimates of the optimal regularizer using the two-step optimization method with their corresponding testing error.}
	\label{table:optimal_gamma}
	\begin{tabular}{l|l|l||l|l|}
		\cline{2-5}
		&$\widehat{\gamma}^{R-LDA}$ &$\epsilon_{testing}^{R-LDA}$  &$\widehat{\gamma}^{R-QDA}$& $\epsilon_{testing}^{R-QDA}$ \\ \hline
		\multicolumn{1}{|l|}{USPS(5, 2), $n_0=100$} & 3.54 & 0.0307 & 0.896 & 0.0111  \\ \hline
		\multicolumn{1}{|l|}{USPS(5, 2), $n_0=400$} & 20.98 &  0.0251 & 1.049 & 0.0223 \\ \hline
		\multicolumn{1}{|l|}{USPS(5, 6), $n_0=100$} & 4.753  & 0.0272 & 0.567 & 0.0272 \\ \hline
		\multicolumn{1}{|l|}{USPS(5, 6), $n_0=400$} & 12.1572  & 0.01818  & 0.562  & 0.009 \\ \hline
	\end{tabular}
\end{table}
\section{Concluding remarks}
\label{section:conclusion}
In this work, we carried out a performance analysis of the asymptotic misclassification rate for R-LDA and R-QDA based classifiers in the regime where the dimension of the training data and their number grow large with the same pace. By leveraging results from random matrix theory, we identify the growth rate regimes in which R-LDA and R-QDA result in non trivial mis-classification rates. These latter are characterized in the asymptotic regime by closed-form expressions reflecting the
impact of the means and covariances of each class on the classification performance.  Several insights are drawn from our results, which can guide the practitioners to choose the best classifier according to the setting into consideration. Particularly, we highlight that R-LDA achieves perfect classification rates when the difference in the mean vectors is higher than $O(p^{\alpha})$ for $\alpha >0$. The R-QDA, on the other hand, results in  perfect classification when the
number of significant eigenvalues of the difference between both covariance matrices scales larger than $O(\sqrt{p})$ or the difference in means is higher than $O(\sqrt{p})$. Such findings reveal a fundamental difference in the way the information about the classes means and covariances are leveraged by both methods. Unlike the R-LDA which tends to leverage only the information about the means, the R-QDA exploits both discriminative statistics, but requires a higher order in the mean difference so that it is reflected in
the classification performances.            
\par 
This work can be extended to study more sophisticated classifiers such as kernel Fisher discriminant analysis \cite{kernel_fisher} or kernel discriminant analysis \cite{kernel_da}  where data is mapped into a feature space through a non-linear kernel prior to the application of the classifier. The extension is though not trivial since the non-linear mapping of data makes it difficult to examine the misclassification probability in closed-form. 

\bibliographystyle{IEEEtran}
\bibliography{References}
\begin{appendices}
	\section{Proof of Theorem \ref{DE_LDA}} 
	\label{appendix:theorem1}
	
	\subsection{Notations}
	Through this appendix, the following notations are used. For $i\in\left\{0,1\right\}$, we let ${\bf X}_i\in\mathbb{R}^{p\times n_i}$ the matrix of $n_i$ observations associated with class $i$. Thus, there exists ${\bf Y}_i= \mathbf{\Sigma}_i^{1/2} \mathbf{Z}_i$  such that ${\bf X}_i=\boldsymbol{\Sigma}_i^{\frac{1}{2}}{{\bf Z}_i}+ \boldsymbol{\mu}_i{\bf 1}_{n_i}^{T}$ where ${\bf 1}_{n_i}\in\mathbb{R}^{n_i}$  is the vector of all ones, 
	and  ${\bf Z}_i=\left[\mathbf{z}_{i,1},\cdots,\mathbf{z}_{i,n_i}\right] \in \mathbb{R}^{p \times n_i}$ where $\mathbf{z}_{i,j}$ are independent random vectors with standard multivariate Gaussian distribution. We define the following resolvent matrix:
	\begin{equation}
	\label{resolvent_matrix}
	\mathbf{Q}\left(z\right) = \left( \frac{\mathbf{Y}_0\mathbf{Y}_0^T}{p} + \frac{\mathbf{Y}_1\mathbf{Y}_1^T}{p} - z \mathbf{I}_p \right)^{-1}
	\end{equation}
	the behavior of which has  been extensively studied in \cite[Proposition 5]{gram_mixture}. Particularly, it was shown that under assumptions \ref{As:1L}, \ref{As:2L} and \ref{As:4L}, $\mathbf{Q}\left(z\right)$ is equivalent to a deterministic matrix $\bar{\mathbf{Q}}\left(z\right)$ or $\mathbf{Q}\left(z\right)\leftrightarrow \bar{\mathbf{Q}}\left(z\right)$ in the sense that 
	
	\begin{equation}
	\label{Q_bar}
	\begin{split}
	& \frac{1}{p} \tr \mathbf{M}\left(\mathbf{Q}\left(z\right)-\bar{\mathbf{Q}}\left(z\right)\right) \probto 0. \\
	&\mathbf{u}^T \left(\mathbf{Q}\left(z\right)-\bar{\mathbf{Q}}\left(z\right)\right) \mathbf{v} \probto 0,
	\end{split}
	\end{equation}
	for all deterministic matrices $\mathbf{M}$ of bounded spectral norms and all deterministic vectors $\mathbf{u}$ and $\mathbf{v}$ of bounded euclidean norms.
	Moreover, it has been shown in \cite[Proposition 6]{gram_mixture} that
	\begin{align}
	\label{Q_tilde}
	\mathbf{Q}\left(z\right)\mathbf{\Sigma}_i\mathbf{Q}\left(z\right) \leftrightarrow \widetilde{\mathbf{Q}}_i\left(z\right), \; \textnormal{for} \: i \in \{0,1\}.
	\end{align}
	where $\widetilde{\bf Q}_i$ is given in \eqref{eq:Qtildei}.
	Based on the results of (\ref{Q_bar}) and (\ref{Q_tilde}), we successively prove (\ref{G_i}) and (\ref{D_i}).
	\subsection{Proof of (\ref{G_i})}
	With the aforementioned notations at hand, it is easy to show that  $\widehat{\mathbf{\Sigma}}_i$ can be expressed as
	\begin{align*}
	\widehat{\mathbf{\Sigma}}_i = \frac{1}{n_i-1}\left( \mathbf{Y}_i\mathbf{Y}_i^T - \mathbf{Y}_i \frac{\mathbf{1}_{n_i}\mathbf{1}_{n_i}^T}{n_i} \mathbf{Y}_i^T\right).
	\end{align*}
	Let $ \frac{\mathbf{1}_i\mathbf{1}_i^T}{n_i} = \mathbf{O}_i \mathbf{E}_i\mathbf{O}_i^T$, be the eigenvalue decomposition of  $\frac{\mathbf{1}_i\mathbf{1}_i^T}{n_i}$ where $\mathbf{E}_i =\diag\left(\left[1, \mathbf{0}_{(n_i-1)\times 1} \right]\right)$ and $\mathbf{O}_i$ is a $n_i\times n_i$ orthogonal matrix with first column $\frac{1}{\sqrt{n_i}}{\bf 1}_{n_i}$. Let $\tilde{\bf Y}_i={\bf Y}_i{\bf O}_i$. Hence 
	\begin{equation}
	\label{Sigma_simplified}
	\begin{split}
	\widehat{\mathbf{\Sigma}}_i & = \frac{1}{n_i-1} \mathbf{Y}_i\mathbf{O}_i\mathbf{O}_i^T\mathbf{Y}_i^T - \frac{1}{n_i-1} {\mathbf{Y}_i\mathbf{O}_i}  \mathbf{E}_i\mathbf{O}_i^T \mathbf{Y}_i^T \\
	& = \frac{1}{n_i-1} \widetilde{\mathbf{Y}}_i\widetilde{\mathbf{Y}}_i^T - \frac{1}{n_i-1} \widetilde{\mathbf{y}}_{i,1}\widetilde{\mathbf{y}}_{i,1}^T,
	\end{split}
	\end{equation}
	where $\widetilde{\mathbf{y}}_{i,1}$ being the first column of $\widetilde{\mathbf{Y}}_i$. Since the Gaussian distribution is invariant to multiplication by a unitary matrix, $\tilde{\bf Y}_i$ has the same distribution as ${\bf Y}_i$.  
	and as such matrix ${\bf H}$ can be expressed as:
	\begin{equation}
	\label{H_new}
	\begin{split}
	\mathbf{H} & = \Biggl[\mathbf{I}_p + \frac{\gamma}{n-2}\widetilde{\mathbf{Y}}_0\widetilde{\mathbf{Y}}_0^T + \frac{\gamma}{n-2}\widetilde{\mathbf{Y}}_1\widetilde{\mathbf{Y}}_1^T   - \frac{\gamma}{n-2}\widetilde{\mathbf{y}}_{0,1}\widetilde{\mathbf{y}}_{0,1}^T  \\ & - \frac{\gamma}{n-2}\widetilde{\mathbf{y}}_{1,1}\widetilde{\mathbf{y}}_{1,1}^T
	\Biggr]^{-1}.
	\end{split}
	\end{equation}
	Then, the following relation holds for $z=-\frac{n}{p\gamma}$
	\begin{align*}
	\mathbf{H} = -z \mathbf{Q}\left(z\right) +O_{\|.\|}(p^{-1}),
	\end{align*}
	where $O_{\|.\|}(p^{-1})$ refers to a matrix whose spectral norm is $O(p^{-1})$. 
	We are now ready to handle the term $G\left(\bm{\mu}_i,\hat{\bm{\mu}}_0,\hat{\bm{\mu}}_1,\mathbf{H}\right)$. To start, we first express it as
	\begin{align*}
 	G\left(\bm{\mu}_i,\hat{\bm{\mu}}_0,\hat{\bm{\mu}}_1,\mathbf{H}\right)  &=\left(\frac{(-1)^i}{2}\boldsymbol{\mu}^{T}-\frac{1}{2n_0}{\bf 1}^{T}{\bf Y}_0-\frac{1}{2n_1}{\bf 1}_{n_1}^{T}{\bf Y}_1\right) \\ & \times {\bf H}(\frac{1}{n_0}{\bf Y}_0 {\bf 1}_{n_0}-\frac{1}{n_1}{\bf Y}_1{\bf 1}_{n_1}+\boldsymbol{\mu}).
	\end{align*}
	and thus can be expanded as
	\begin{align*}
	&G\left(\bm{\mu}_i,\hat{\bm{\mu}}_0,\hat{\bm{\mu}}_1,\mathbf{H}\right)  = \frac{\left(-1\right)^i}{2}\bm{\mu}^T \mathbf{H}\bm{\mu} + \frac{\left(-1\right)^i}{2n_0} \bm{\mu}^T \mathbf{H}\mathbf{Y}_0^T \mathbf{1}_{n_0} \\ &  + \frac{\left(-1\right)^{i+1}}{2n_1} \bm{\mu}^T \mathbf{H}\mathbf{Y}_1^T\mathbf{1}_{n_1} - \frac{1}{2n_0}\bm{\mu}^T \mathbf{H}\mathbf{Y}_0\mathbf{1}_{n_0} \\
	& -\frac{1}{2n_0^2}\mathbf{1}_{n_0}^T \mathbf{Y}_0^T \mathbf{HY}_0\mathbf{1}_{n_0} + \frac{1}{2n_0n_1}\mathbf{1}_{n_0}^T\mathbf{Y}_0^T\mathbf{HY}_1\mathbf{1}_{n_1} \\ & - \frac{1}{2n_1}\bm{\mu}^T \mathbf{H}\mathbf{Y}_1\mathbf{1}_{n_1}-\frac{1}{2n_0n_1}\mathbf{1}_{n_0}^T\mathbf{Y}_0^T \mathbf{HY}_1\mathbf{1}_{n_1}\\
	& + \frac{1}{2n_1^2}\mathbf{1}_{n_1}^T \mathbf{Y}_1^T \mathbf{HY}_1\mathbf{1}_{n_1}.
	\end{align*}
	It follows from \eqref{H_new}  that $\tilde{\bf y}_{0,1}=\frac{1}{\sqrt{n}_0}{\bf Y}_0{\bf 1}$ and $\tilde{\bf y}_{1,1}=\frac{1}{\sqrt{n}_1}{\bf Y}_1{\bf 1}$ are independent of ${\bf H}$. The following convergence holds thus true
	\begin{align*}
	&\frac{1}{n_0}\bm{\mu}^T \mathbf{H}\mathbf{Y}_0\mathbf{1}_{n_0} \asto 0. \\
	& \frac{1}{n_1}\bm{\mu}^T \mathbf{H}\mathbf{Y}_1\mathbf{1}_{n_1} \asto 0. \\
	& \frac{1}{n_0n_1}\mathbf{1}_{n_0}^T\mathbf{Y}_0^T\mathbf{HY}_1\mathbf{1}_{n_1} \asto 0.
	\end{align*}
	On the other hand, we have
	\begin{align}
	\bm{\mu}^T \mathbf{H}\bm{\mu}= -z \bm{\mu}^T \mathbf{Q}\left(z\right)\bm{\mu}+o(1).
	\end{align}
	and thus from (\ref{Q_bar})
	\begin{align}
	\bm{\mu}^T \mathbf{H}\bm{\mu} + z \bm{\mu}^T \bar{\mathbf{Q}}\left(z\right)\bm{\mu} \probto 0.
	\end{align}
	Moreover,
	\begin{align*}
	\frac{1}{n_i^2} \mathbf{1}_{n_i}^T\mathbf{Y}_i^T \mathbf{H}\mathbf{Y}_i\mathbf{1}_{n_i}  
	& =  \frac{1}{n_i} \widetilde{\mathbf{y}}_{i,1}^T \mathbf{H} \widetilde{\mathbf{y}}_{i,1}
	\end{align*}
	Again, from the independence of $\widetilde{\mathbf{y}}_{i}$ and $\mathbf{H}$ and the application of the trace Lemma \cite[Theorem 3.7]{couillet} it follows that: 
	\begin{align*}
	\frac{1}{n_i^2} \mathbf{1}_{n_i}\mathbf{Y}_i^T \mathbf{H}\mathbf{Y}_i\mathbf{1}_{n_i} - \frac{1}{n_i}\tr \mathbf{\Sigma}_i \mathbf{H} \asto 0.
	\end{align*}
	which gives using \eqref{Q_bar},
	\begin{align*}
	\frac{1}{n_i^2}  \mathbf{1}_{n_i}\mathbf{Y}_i^T \mathbf{H}\mathbf{Y}_i\mathbf{1}_{n_i} + \frac{z}{n_i}\tr \mathbf{\Sigma}_i\bar{\mathbf{Q}}\left(z\right) \probto 0.
	\end{align*}
	This completes the proof of (\ref{G_i}).
	\subsection{Proof of (\ref{D_i})}
	Using the notations employed in the proof of \eqref{G_i}, $D(\overline{\bf x}_0,\overline{\bf x}_1,{\bf H},\boldsymbol{\Sigma}_i)$ can be expressed as:
	\begin{equation}
	\begin{split}
	D(\hat{\bm{\mu}}_0,\hat{\bm{\mu}}_1,{\bf H},\boldsymbol{\Sigma}_i)& =\left(\boldsymbol{\mu}^{T}+{\bf 1}_{n_0}^{T}\frac{{\bf Y}_0}{n_0}-{\bf 1}_{n_1}^{T}\frac{{\bf Y}_1}{n_1}\right){\bf H}\boldsymbol{\Sigma}_i{\bf H}  \\ 
	& \times \left(\boldsymbol{\mu}+{{\bf Y}_0}\frac{{\bf 1}_{n_0}}{n_0}-{{\bf Y}_1}\frac{{\bf 1}_{n_1}}{n_1}\right).
		\end{split}
	\label{eq:D}
	\end{equation}
	As in the proof \eqref{G_i}, from the independence of $\frac{1}{n_1}{\bf Y}_1{\bf 1}$ and $\frac{1}{n_0}{\bf Y}_0{\bf 1}$
	of ${\bf H}$, it is easy to see that the cross-products in \eqref{eq:D} will converge to zero almost surely. We thus have:  		
	\begin{align*}
	D\left(\hat{\bm{\mu}}_0,\hat{\bm{\mu}}_1,\mathbf{H},\mathbf{\Sigma}_i\right) & =\bm{\mu}^T \mathbf{H\Sigma}_i\mathbf{H}\bm{\mu} + \frac{1}{n_0^2}\mathbf{1}_{n_0}^T \mathbf{Y}_0^T\mathbf{H\Sigma}_i\mathbf{H}\mathbf{Y}_0\mathbf{1}_{n_0} \\ & +\frac{1}{n_1^2} \mathbf{1}_{n_1}^T \mathbf{Y}_1^T\mathbf{H\Sigma}_i\mathbf{HY}_1\mathbf{1}_{n_1}.
	\end{align*}
	Finally, we use \eqref{Q_tilde} to obtain
	\begin{align*}
	&\bm{\mu}^T \mathbf{H\Sigma}_i\mathbf{H}\bm{\mu} - z^2 \bm{\mu}^T \widetilde{\mathbf{Q}}_i\left(z\right) \bm{\mu} \probto 0 \\
	&\frac{1}{n_j^2}\mathbf{1}_{n_j}^T\mathbf{Y}_j^T\mathbf{H\Sigma}_i\mathbf{H}\mathbf{Y}_j\mathbf{1}_{n_j}-\frac{z^2}{n_j}\tr \mathbf{\Sigma}_j\widetilde{\mathbf{Q}}_i\left(z\right) \probto 0, \: j=1-i.
	\end{align*}
	which completes the proof of (\ref{D_i}).

	\section{Proof of Proposition \ref{CLT_quadratic}}
	\label{appendix:clt}
	To reduce the amount of notations, we drop the class subscript $i$. In all the proof $\mathbf{B}$ plays the role of $\mathbf{B}_i$ and $\mathbf{y}$ plays the role of $\mathbf{y}_i$, for $i \in \{0,1\}$.
	To begin with, let $\mathbf{B} = \mathbf{U}_b \mathbb{B} \mathbf{U}_b^{T}$ be the eigenvalue decomposition of ${\bf B}$, so that ${\bm z}^{T}\mathbf{\bf B}{\bm z}+2{\bm z}^{T}{\bf r}$ has the same distribution as: 
	\begin{align*}
	g\left({\bf z}\right) \triangleq& 
	\sum_{j=1}^{p} {\left(\alpha_j z_j^2 + 2z_j \tilde{r}_j\right)},
	\end{align*}
	where $\mathbf{\tilde{y}} = \mathbf{U}_b \mathbf{r}$, $\alpha_i$ diagonal elements of $\mathbb{B}$ and $z_j$ and $\tilde{r}_j$ are respectively the $j$th entries of $\bm{\omega}$ and $\mathbf{\tilde{y}}$. Let $\mathbf{\Psi} = \left[\mathbf{X}_0,\mathbf{X}_1\right]$ be the observations associated with class $0$ and $1$. Then, conditioning on $\mathbf{\Psi}$, $g\left(\bm{ z}\right)$ is the sum of independent but not identically distributed r.v's. $q_j=\alpha_j z_j^2
	+ 2z_j \tilde{r}_j$. To prove the CLT, we
	resort to the Lyapunov CLT Theorem, \cite[Theorem 27.3]{Billingsley}.
	We first calculate the mean and the variance of $r_j$ conditioned on $\mathbf{\Psi}$
	\begin{align*}
	\E\left[q_j | \mathbf{\Psi}\right] & = \alpha_j \\
	\text{var}\left[q_j | \mathbf{\Psi}\right] & = \sigma_j^2= 2 \alpha_j^2+4\tilde{r}_j^2.
	\end{align*}
	Define the total variance $s_p^2$ as
	\begin{equation}
	s_p^2 = \sum_{j=1}^{p} \sigma_j^2 = 2 \tr \mathbf{B}^2 + 4 \mathbf{\tilde{r}}^T\mathbf{\tilde{r}}.
	\end{equation}
	To prove the CLT, it suffices to check the Lyapunov's condition. Under the setting of Proposition \ref{CLT_quadratic},
	\begin{align*}
	& \lim_{p \to \infty} \frac{1}{s_p^4} \sum_{j=1}^{p}\E \left[\left |q_j-\alpha_j\right|^4| |\boldsymbol{\Psi}\right] \\ & = \lim_{p \to \infty} \frac{\sum_{j=1}^{p} 60 \alpha_j^4+240 \alpha_j^2 \tilde{r}_j^2+48 \tilde{r}_j^4}{\left(2 \tr \mathbf{B}^2 + 4 \mathbf{\tilde{r}}^T\mathbf{\tilde{r}}\right)^2} \\                                                                                                  & \leq   \lim_{p
		\to \infty} \frac{60/p^2\tr {\bf B}^2+240/p^2 \tr {\bf B}^2\|\tilde{\bf r}\|_2^2+48/p^2\|\tilde{\bf r}\|_2^4}{\left(2/p\tr {\bf B}^2+4/p\|\tilde{\bf r}\|_2^2\right)^2}.
	\end{align*}	
	
	\section{Proof of Theorem \ref{DE_QDA}}
	\label{appendix:DE_QDA}

	The proof consists in showing the following convergences
	\begin{align}
	&	\label{xi}
	\frac{1}{\sqrt{p}}\xi_i - \overline{\xi}_i  \probto 0.
	\\
	&\label{Bi}
	\frac{1}{\sqrt{p}}\tr \mathbf{B}_i - \overline{b_i} \asto 0.
	\\
	\label{B2}
	&\frac{1}{p}\tr \mathbf{B}_i^2 - 	\overline{B_i} \asto 0.
	\\
	\label{yi}
	&\frac{1}{p} \mathbf{r}_i^T\mathbf{r}_i \asto 0.
	\end{align}
	and establishing that the condition in \ref{CLT_quadratic} holds with probability $1$. 
	We will prove sequentially equations (\ref{xi})-(\ref{yi}).
	\subsection{Proof of (\ref{xi})}
	Using the simplified expression of $\widehat{\mathbf{\Sigma}}_i$ in \eqref{Sigma_simplified}, we can write
	\begin{align*}
	\mathbf{H}_i = \left(\mathbf{I}_p + \frac{\gamma}{n_i-1} \mathbf{Y}_i\mathbf{Y}_i^T -\frac{\gamma}{n_i-1} \mathbf{y}_i\mathbf{y}_i^T\right)^{-1}.
	\end{align*}
	Recall that $\frac{1}{\sqrt{p}}\xi_i$ writes as
	\begin{align*}
	\frac{1}{\sqrt{p}}\xi_i & = -\frac{1}{\sqrt{p}}\log\frac{|{\bf H}_0|}{|{\bf H}_1|} +\frac{1}{\sqrt{p}}(\boldsymbol{\mu}_i-\hat{\bm{\mu}}_0)^{T}{\bf H}_0(\boldsymbol{\mu}_i-\hat{\bm{\mu}}_0) \\ & -\frac{1}{\sqrt{p}}(\boldsymbol{\mu}_i-\hat{\bm{\mu}}_1)^{T}{\bf H}_1(\boldsymbol{\mu}_i-\hat{\bm{\mu}}_1) +\frac{2}{\sqrt{p}}\log\frac{\pi_1}{\pi_0}.
	\end{align*}
	Under Assumption \ref{As:3},  Matrix $\mathbf{H}_i$ follows the model in \cite{walid_new}. According to  \cite[Theorem 1]{walid_new}, 
	
	
	\begin{align}
	\frac{1}{p} \log |\mathbf{H}_i|- \frac{1}{p}\left(\log |\mathbf{T}_i|-n_i \log \left(1+\gamma \delta_i\right)+   \gamma \frac{n_i \delta_i}{1+\gamma \delta_i}\right) \asto 0.
	\end{align}
	The convergence holds with  rate $O(p^{-1})$ hence,
	$$
	\frac{1}{\sqrt{p}} \log |\mathbf{H}_i|- \frac{1}{\sqrt{p}}\left(\log |\mathbf{T}_i|-n_i \log \left(1+\gamma \delta_i\right)+   \gamma \frac{n_i \delta_i}{1+\gamma \delta_i}\right) \probto 0.
	$$
	and
	\begin{align*}
	& \frac{1}{\sqrt{p}} \left( \left(\bm{\mu}_i-\hat{\bm{\mu}}_0\right)^T \mathbf{H}_0 \left(\bm{\mu}_i-\hat{\bm{\mu}}_0\right)
	- \left(\bm{\mu}_i-\hat{\bm{\mu}}_1\right)^T \mathbf{H}_1 \left(\bm{\mu}_i-\hat{\bm{\mu}}_1\right)  \right) \\ & 
	-\frac{\left(-1\right)^{i+1}}{\sqrt{p}}\bm{\mu}^T \mathbf{T}_{1-i}\bm{\mu} \probto 0.
	\end{align*}

	\subsection{Proof of (\ref{Bi})}
	From \cite{walid_new}, we know that
	\begin{equation}
	\frac{1}{p} \tr \mathbf{\Sigma}_i\mathbf{H}_i - \frac{1}{p} \tr \mathbf{\Sigma}_i\mathbf{T}_i \asto 0			\end{equation}
	where the above convergence holds with rate $O(p^{-1})$. 
	
	Thus,
	\begin{align*}
	\frac{1}{\sqrt{p}} \tr \mathbf{B}_i - \frac{1}{\sqrt{p}} \tr \mathbf{\Sigma}_i \left(\mathbf{T}_1-\mathbf{T}_0\right) \probto 0.
	\end{align*}
	
	\subsection{Proof of (\ref{B2})}
	To prove (\ref{B2}), we need the following lemma, the proof of which is omitted since it follows from the techniques established in \cite{walid_new}
	
	\begin{lemma}
		\label{lemma_trace_hachem}
		Let ${\bf A}$ be a matrix with uniformly bounded spectral norm. Then, for $i \in \{0,1\}$
		the following convergence 	holds true
		\begin{equation}
		\begin{split}
		& \frac{1}{p}\tr \mathbf{A}\mathbf{H}_i\mathbf{A}\mathbf{H}_i - \left[\frac{1}{p}\tr \mathbf{T}_i^2\mathbf{A}^2 + \frac{\gamma^2 \tilde{\phi}_i}{1-\gamma^2\phi_i\tilde{\phi}_i}\left( \frac{1}{n_i}\tr \mathbf{A}\mathbf{\Sigma}_i \mathbf{T}_i^2\right)^2 \right]
		\\ & \asto 0.
		\end{split}
		\end{equation}
	\end{lemma}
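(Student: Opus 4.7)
The plan is to derive the lemma via a perturbation-and-differentiation argument, reducing it to the first-order deterministic equivalent already established in \cite{walid_new}. Introduce the perturbed resolvent
\begin{equation*}
\mathbf{H}_i(\alpha) = \bigl(\mathbf{I}_p + \gamma \widehat{\mathbf{\Sigma}}_i - \alpha \mathbf{A}\bigr)^{-1},
\end{equation*}
so that $\mathbf{H}_i(0) = \mathbf{H}_i$ and $\tfrac{d}{d\alpha}\bigl[\tfrac{1}{p}\tr \mathbf{A}\mathbf{H}_i(\alpha)\bigr]_{\alpha=0} = \tfrac{1}{p}\tr \mathbf{A}\mathbf{H}_i \mathbf{A}\mathbf{H}_i$. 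The goal is thus to obtain a deterministic equivalent for $\tfrac{1}{p}\tr \mathbf{A}\mathbf{H}_i(\alpha)$ that is smooth in $\alpha$ and then differentiate at $\alpha=0$.

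First I would recall the representation \eqref{Sigma_simplified}, which writes $\widehat{\mathbf{\Sigma}}_i$ as a Wishart matrix $\tfrac{1}{n_i-1}\widetilde{\mathbf{Y}}_i \widetilde{\mathbf{Y}}_i^T$ (with i.i.d.\ Gaussian columns of covariance $\boldsymbol{\Sigma}_i$) minus a rank-one correction, the latter contributing only an $O(p^{-1})$ spectral-norm perturbation to $\mathbf{H}_i(\alpha)$ and therefore being negligible for normalized-trace functionals. Applying the first-order deterministic equivalent of \cite{walid_new} to $\mathbf{H}_i(\alpha)$ then yields, for $\alpha$ in a neighbourhood of $0$,
\begin{equation*}
\frac{1}{p}\tr \mathbf{A}\mathbf{H}_i(\alpha) - \frac{1}{p}\tr \mathbf{A}\mathbf{T}_i(\alpha) \asto 0,
\end{equation*}
where $\mathbf{T}_i(\alpha) = \bigl(\mathbf{I}_p - \alpha \mathbf{A} + \tfrac{\gamma}{1+\gamma\delta_i(\alpha)}\boldsymbol{\Sigma}_i\bigr)^{-1}$ and $\delta_i(\alpha)$ solves the perturbed fixed-point equation $\delta_i(\alpha) = \tfrac{1}{n_i}\tr \boldsymbol{\Sigma}_i \mathbf{T}_i(\alpha)$, with $\delta_i(0)=\delta_i$ and $\mathbf{T}_i(0)=\mathbf{T}_i$.

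Next I would differentiate both sides at $\alpha=0$. Implicit differentiation of the fixed-point equation produces
\begin{equation*}
\delta_i'(0)\bigl(1 - \gamma^2 \phi_i \tilde{\phi}_i\bigr) = \gamma \tilde{\phi}_i \cdot \tfrac{1}{n_i}\tr \mathbf{A}\boldsymbol{\Sigma}_i \mathbf{T}_i^2,
\end{equation*}
which is precisely what introduces the characteristic denominator $1/(1-\gamma^2\phi_i\tilde{\phi}_i)$. Differentiating $\tfrac{1}{p}\tr \mathbf{A}\mathbf{T}_i(\alpha)$ and substituting for $\delta_i'(0)$ then delivers exactly the bracketed expression in the lemma, namely $\tfrac{1}{p}\tr \mathbf{T}_i^2\mathbf{A}^2 + \tfrac{\gamma^2 \tilde{\phi}_i}{1-\gamma^2\phi_i\tilde{\phi}_i}\bigl(\tfrac{1}{n_i}\tr \mathbf{A}\boldsymbol{\Sigma}_i \mathbf{T}_i^2\bigr)^2$.

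The main obstacle will be justifying the interchange of the almost-sure limit with the $\alpha$-derivative. The standard remedy is to view $\alpha \mapsto \tfrac{1}{p}\tr \mathbf{A}\mathbf{H}_i(\alpha)$ and its deterministic counterpart as holomorphic functions on a common complex neighbourhood of $0$; using the uniform boundedness granted by Assumption \ref{As:3} and the spectral-norm bound on $\mathbf{A}$, one invokes Vitali's convergence theorem to upgrade pointwise almost-sure convergence to uniform almost-sure convergence on compact subsets of that neighbourhood, which transfers to derivatives by Cauchy's integral formula. This step, together with the rank-one perturbation argument handling the sample-mean correction in \eqref{Sigma_simplified}, completes the reduction to the already-available first-order theory of \cite{walid_new}.
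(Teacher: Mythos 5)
The paper does not actually prove this lemma --- it is stated with the remark that ``the proof is omitted since it follows from the techniques established in \cite{walid_new}'', and the closest the authors come to showing the technique is Appendix F, where analogous second-order traces such as $\frac{1}{p}\tr\widehat{\boldsymbol{\Sigma}}_1{\bf H}_1\widehat{\boldsymbol{\Sigma}}_1{\bf H}_1$ are handled by a direct double-sum expansion over the columns $\widetilde{\bf y}_{1,j}$, the Sherman--Morrison inversion lemma, and the trace lemma. Your route --- embedding the target as $\frac{d}{d\alpha}\frac{1}{p}\tr {\bf A}{\bf H}_i(\alpha)\big|_{\alpha=0}$ with ${\bf H}_i(\alpha)=({\bf I}_p+\gamma\widehat{\boldsymbol{\Sigma}}_i-\alpha{\bf A})^{-1}$, transferring the first-order deterministic equivalent ${\bf T}_i(\alpha)$ of \cite{walid_new}, differentiating the fixed point implicitly, and justifying the interchange of limit and derivative via Vitali --- is a genuinely different and entirely standard alternative in this literature. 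It buys a shorter derivation that mechanically produces the denominator $1-\gamma^2\phi_i\tilde{\phi}_i$, at the cost of having to verify that the perturbed model still falls within the hypotheses of \cite{walid_new} (in particular ${\bf I}_p-\alpha{\bf A}$ must remain Hermitian positive definite, so ${\bf A}$ should be symmetric --- which is consistent with the lemma writing $\tr{\bf T}_i^2{\bf A}^2$ and with its only uses, ${\bf A}=\boldsymbol{\Sigma}_i$ or $\boldsymbol{\Sigma}_{1-i}$) and of carrying out the analytic-continuation argument for the derivative.

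Two concrete points need repair. First, your implicit differentiation is off by a factor: from $\delta_i(\alpha)=\frac{1}{n_i}\tr\boldsymbol{\Sigma}_i{\bf T}_i(\alpha)$ and ${\bf T}_i'(0)={\bf T}_i\bigl({\bf A}+\gamma^2\tilde{\phi}_i\,\delta_i'(0)\boldsymbol{\Sigma}_i\bigr){\bf T}_i$ one gets $\delta_i'(0)\bigl(1-\gamma^2\phi_i\tilde{\phi}_i\bigr)=\frac{1}{n_i}\tr{\bf A}\boldsymbol{\Sigma}_i{\bf T}_i^2$, without the extra $\gamma\tilde{\phi}_i$ you wrote; substituting the corrected $\delta_i'(0)$ into $\frac{1}{p}\tr{\bf A}{\bf T}_i{\bf A}{\bf T}_i+\gamma^2\tilde{\phi}_i\,\delta_i'(0)\,\frac{1}{p}\tr{\bf A}\boldsymbol{\Sigma}_i{\bf T}_i^2$ is what recovers the stated bracket (up to the $n_i/p$ normalization of the squared-trace term, an ambiguity already present in the paper's own statement, cf.\ the factor $\frac{n_i}{p}$ in \eqref{eq:Bi}). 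Second, the rank-one correction $\frac{1}{n_i-1}\widetilde{\bf y}_{i,1}\widetilde{\bf y}_{i,1}^T$ is \emph{not} an $O(p^{-1})$ spectral-norm perturbation --- its norm is $\frac{1}{n_i-1}\|\widetilde{\bf y}_{i,1}\|^2=O(p/n_i)=O(1)$. It is negligible because it has rank one: the resolvent identity shows ${\bf H}_i(\alpha)$ changes by a bounded rank-one matrix, so normalized traces of products of such resolvents change by $O(p^{-1})$. This is the correct justification and should replace the norm-based one.
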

	With the above Lemma at hand, we are now ready to handle $\frac{1}{p}\tr \mathbf{B}_i^2$.
	\begin{align*}
	\frac{1}{p}\tr \mathbf{B}_i^2 & = \frac{1}{p}\tr \mathbf{\Sigma}_i \left(\mathbf{H}_1-\mathbf{H}_0\right)\mathbf{\Sigma}_i \left(\mathbf{H}_1-\mathbf{H}_0\right) \\
	& = \frac{1}{p}\tr \mathbf{\Sigma}_i\mathbf{H}_1\mathbf{\Sigma}_i\mathbf{H}_1 + \frac{1}{p}\tr \mathbf{\Sigma}_i\mathbf{H}_0\mathbf{\Sigma}_i\mathbf{H}_0 -\frac{2}{p}\tr \mathbf{\Sigma}_i\mathbf{H}_0\mathbf{\Sigma}_i\mathbf{H}_1.
	\end{align*}
	By product of Lemma \ref{lemma_trace_hachem}, we can easily get
	\begin{align}
	& \frac{1}{p}\tr \mathbf{\Sigma}_i\mathbf{H}_i\mathbf{\Sigma}_i\mathbf{H}_i - \frac{c\phi_i}{1-\gamma^2 \phi_i\tilde{\phi}_i} \asto 0  	\label{hachem_results_Hi2}
	\end{align}
	\begin{equation}
	\begin{split}
	&  \frac{1}{p}\tr \mathbf{\Sigma}_i\mathbf{H}_{1-i}\mathbf{\Sigma}_i\mathbf{H}_{1-i}  \\ & - \left[\frac{1}{p} \tr \mathbf{\Sigma}_i^2\mathbf{T}_{1-i}^2 + \frac{\gamma^2 c \tilde{\phi}_{1-i}\left(\frac{1}{n_i}\tr \mathbf{\Sigma}_{i}\mathbf{\Sigma}_{1-i}\mathbf{T}_{1-i}^2\right)^2}{1-\gamma^2 \phi_{1-i}\tilde{\phi}_{1-i}}\right] \\ &  \asto 0.
	\end{split}
	\end{equation}
	Finally, it is straightforward to obtain
	\begin{align}
	\label{hachem_results_Hi1}
	&	\frac{1}{p}\tr \mathbf{\Sigma}_i\mathbf{H}_1\mathbf{\Sigma}_i\mathbf{H}_0 - \frac{1}{p}\tr \mathbf{\Sigma}_i\mathbf{T}_1\mathbf{\Sigma}_i\mathbf{T}_0 \asto 0.
	\end{align}
	This completes the proof of (\ref{B2}).
	\subsection{Proof of (\ref{yi})}
	Let $i\in\left\{0,1\right\}$. Then, one can see that:
	\begin{align}
	\frac{1}{p}\mathbf{r}_i^T\mathbf{r}_i - \frac{1}{p} \bm{\mu}^T \mathbf{H}_{1-i}\mathbf{\Sigma}_i\mathbf{H}_{1-i}\bm{\mu} \probto 0,
	\end{align}
	where by Assumptions \ref{As:2} and \ref{As:3}
	\begin{align*}
	&	\frac{1}{p} \bm{\mu}^T \mathbf{H}_{1-i}\mathbf{\Sigma}_i\mathbf{H}_{1-i}\bm{\mu} = O\left(\frac{1}{\sqrt{p}}\right).
	\end{align*}
	Finally, by applying the continous mapping theorem \cite{serfling}, we complete the proof of Theorem \ref{DE_QDA}.
	
	Now, to conclude we need to check that the condition in Proposition \ref{CLT_quadratic} holds with probability 1. This can be easily seen by replacing in \eqref{cond}, $\frac{1}{p}\tr {\bf B}^2$ by its deterministic equivalent and noting that it has order $O(1)$. 
	
	\section{Proof of Proposition \ref{prop:order}}
	\label{app:order}
	To prove Proposition \ref{prop:order}, it suffices to show
	\begin{align}\label{req1}
	&\frac{1}{\sqrt{p}}\left(\frac{n_0 \delta_0}{1+\gamma \delta_0}-\frac{n_1 \delta_1}{1+\gamma \delta_1}\right) = O\left(1\right) \\
	\label{req2}
	&\frac{1}{\sqrt{p}} \left(n_1\log \left(1+\gamma \delta_1\right)-n_0\log \left(1+\gamma \delta_0\right)\right) = O\left(1\right) \\
	\label{req3}
	&\frac{1}{\sqrt{p}} \left(\log |\mathbf{T}_0| - \log |\mathbf{T}_1|\right) = O\left(1\right)
	\end{align}
Relying on Assumption \ref{As:1}, we can assume $n_0=n_1=\frac{n}{2}$.  To begin, we first note that
	\begin{align*}
	\frac{ \delta_0}{1+\gamma \delta_0}-\frac{ \delta_1}{1+\gamma \delta_1} = \frac{1}{\left(1+\gamma \delta_0\right)\left(1+\gamma \delta_1\right)} \left(\delta_0-\delta_1\right).
	\end{align*}
	On the other hand,
	\begin{align*}
	\delta_0-\delta_1 & = \frac{2}{n} \tr \mathbf{\Sigma}_0\mathbf{T}_0 - \frac{2}{n} \tr \mathbf{\Sigma}_1\mathbf{T}_1 	\\
	& = \frac{2}{n} \tr \mathbf{\Sigma}_0 \left(\mathbf{T}_0-\mathbf{T}_1\right) + \frac{2}{n} \tr \left(\mathbf{\Sigma}_0-\mathbf{\Sigma}_1\right)\mathbf{T}_1.
	\end{align*}
	Recall that for invertible square matrices $\mathbf{A}$ and $\mathbf{B}$, we have the \emph{resolvent identity} given by
	\begin{align}
	\label{resolvent_id}
	\mathbf{A}^{-1}-\mathbf{B}^{-1} = \mathbf{A}^{-1} \left(\mathbf{B}-\mathbf{A}\right)\mathbf{B}^{-1}.
	\end{align}
	Thus,
	\begin{align*}
	& \delta_0-\delta_1 \\  &  = \frac{2}{n} \tr \mathbf{\Sigma}_0 \mathbf{T}_0 \left(\mathbf{T}_1^{-1}-\mathbf{T}_0^{-1}\right)\mathbf{T}_1+ \frac{2}{n} \tr \left(\mathbf{\Sigma}_0-\mathbf{\Sigma}_1\right)\mathbf{T}_1 \\
	& = \frac{2}{n} \tr \mathbf{\Sigma}_0 \mathbf{T}_0 \left(\frac{\gamma}{1+\gamma \delta_1}\mathbf{\Sigma}_1 -\frac{\gamma}{1+\gamma \delta_0}\mathbf{\Sigma}_0 \right) \mathbf{T}_1 + \frac{2}{n} \tr \left(\mathbf{\Sigma}_0-\mathbf{\Sigma}_1\right)\mathbf{T}_1 \\
	& = \frac{2\gamma}{n} \tr \mathbf{\Sigma}_0 \mathbf{T}_0 \left(\frac{\mathbf{\Sigma}_1}{1+\gamma \delta_1}-\frac{\mathbf{\Sigma}_1}{1+\gamma \delta_0}+\frac{\mathbf{\Sigma}_1}{1+\gamma \delta_0}-\frac{\mathbf{\Sigma}_0}{1+\gamma \delta_0}\right) \mathbf{T}_1   \\ & + \frac{2}{n} \tr \left(\mathbf{\Sigma}_0-\mathbf{\Sigma}_1\right)\mathbf{T}_1 \\
	& = \frac{\gamma^2 \left(\delta_0-\delta_1\right)}{\left(1+\gamma \delta_0\right)\left(1+\gamma \delta_1\right)} \frac{2}{n} \tr \mathbf{\Sigma}_0 \mathbf{T}_0 \mathbf{\Sigma}_1 \mathbf{T}_1  \\ & - \frac{\gamma}{1+\gamma \delta_0} \frac{2}{n} \tr \mathbf{\Sigma}_0 \mathbf{T}_0 \left(\mathbf{\Sigma}_0-\mathbf{\Sigma}_1\right)\mathbf{T}_1
	 + \frac{2}{n} \tr \left(\mathbf{\Sigma}_0-\mathbf{\Sigma}_1\right)\mathbf{T}_1.
	\end{align*}
	Therefore,
	\begin{align*}
	& \left(\delta_0-\delta_1\right) \left[1-\frac{2\gamma^2 }{\left(1+\gamma \delta_0\right)\left(1+\gamma \delta_1\right)} \frac{1}{n} \tr \mathbf{\Sigma}_0 \mathbf{T}_0 \mathbf{\Sigma}_1 \mathbf{T}_1\right] \\ & = - \frac{\gamma}{1+\gamma \delta_0} \frac{2}{n} \tr \mathbf{\Sigma}_0 \mathbf{T}_0 \left(\mathbf{\Sigma}_0-\mathbf{\Sigma}_1\right)\mathbf{T}_1
	+ \frac{2}{n} \tr \left(\mathbf{\Sigma}_0-\mathbf{\Sigma}_1\right)\mathbf{T}_1.
	\end{align*}
	or equivalently
	\begin{align*}
	\delta_0-\delta_1   &= \left[1-\frac{2\gamma^2 }{\left(1+\gamma \delta_0\right)\left(1+\gamma \delta_1\right)} \frac{1}{n} \tr \mathbf{\Sigma}_0 \mathbf{T}_0 \mathbf{\Sigma}_1 \mathbf{T}_1\right] ^{-1} \\&\times
	\left[- \frac{2\gamma}{1+\gamma \delta_0} \frac{1}{n} \tr \mathbf{\Sigma}_0 \mathbf{T}_0 \left(\mathbf{\Sigma}_0-\mathbf{\Sigma}_1\right)\mathbf{T}_1
	+ \frac{2}{n} \tr \left(\mathbf{\Sigma}_0-\mathbf{\Sigma}_1\right)\mathbf{T}_1\right].
	\end{align*}
	All in all, we have
	\begin{equation}
	\label{ineq}
	\begin{split}
	& \frac{ \delta_0}{1+\gamma \delta_0}-\frac{ \delta_1}{1+\gamma \delta_1} \\ & =  \frac{1}{\left(1+\gamma \delta_0\right)\left(1+\gamma \delta_1\right)} \left[1-\frac{2\gamma^2 }{\left(1+\gamma \delta_0\right)\left(1+\gamma \delta_1\right)} \frac{1}{n} \tr \mathbf{\Sigma}_0 \mathbf{T}_0 \mathbf{\Sigma}_1 \mathbf{T}_1\right] ^{-1}\\&\times
	\left[- \frac{\gamma}{1+\gamma \delta_0} \frac{2}{n} \tr \mathbf{\Sigma}_0 \mathbf{T}_0 \left(\mathbf{\Sigma}_0-\mathbf{\Sigma}_1\right)\mathbf{T}_1
	+ \frac{1}{n} \tr \left(\mathbf{\Sigma}_0-\mathbf{\Sigma}_1\right)\mathbf{T}_1\right]
	\end{split}
	\end{equation}
	To guarantee that the left hand side of \eqref{ineq} does not blow up, we shall prove that
	\begin{align}
	\label{liminf} 
	\lim \inf_{p} \left( 1-\frac{2\gamma^2 }{\left(1+\gamma \delta_0\right)\left(1+\gamma \delta_1\right)} \frac{1}{n} \tr \mathbf{\Sigma}_0 \mathbf{T}_0 \mathbf{\Sigma}_1 \mathbf{T}_1 \right) > 0.
	\end{align}
	or equivalently
	\begin{align}
	\label{limsup}
	\lim \sup_{p} \frac{2\gamma^2 }{\left(1+\gamma \delta_0\right)\left(1+\gamma \delta_1\right)} \frac{1}{n} \tr \mathbf{\Sigma}_0 \mathbf{T}_0 \mathbf{\Sigma}_1 \mathbf{T}_1 < 1.
	\end{align}
	For that, recall that for a symmetric matrix $\mathbf{A}$ and non-negative definite matrix $\mathbf{B}$ \cite{matrix_ineq}, we have
	\begin{align}
	\tr \mathbf{AB} \leq \|\mathbf{A}\|\tr \mathbf{B}.
	\end{align}
	Thus,
	\begin{align*}
	& \frac{\gamma^2 }{\left(1+\gamma \delta_0\right)\left(1+\gamma \delta_1\right)} \frac{1}{n} \tr \mathbf{\Sigma}_0 \mathbf{T}_0 \mathbf{\Sigma}_1 \mathbf{T}_1 \\  & \leq
	\frac{\gamma^2 \|\mathbf{\Sigma}_0\mathbf{T}_0\| }{\left(1+\gamma \delta_0\right)\left(1+\gamma \delta_1\right)} \underbrace{\frac{1}{n} \tr \mathbf{\Sigma}_1 \mathbf{T}_1}_{\delta_1/2}  = \underbrace{\frac{\gamma \delta_1/2}{1+\gamma \delta_1}}_{<1/2} \frac{\gamma  \|\mathbf{\Sigma}_0\mathbf{T}_0\|}{1+\gamma \delta_0} \\ &
	< \frac{\gamma/2  \|\mathbf{\Sigma}_0\mathbf{T}_0\|}{1+\gamma \delta_0}.
	\end{align*}
	Since $\mathbf{\Sigma}_0$ and $\mathbf{T}_0$ share the same eigenvectors, there exists a $\lambda$ an eigenvalue of $\mathbf{\Sigma}_0$ such that
	\begin{align*}
	\|\mathbf{\Sigma}_0\mathbf{T}_0\| = \frac{\lambda}{1+\frac{\gamma \lambda}{1+\gamma \delta_0}}.
	\end{align*}
	Thus,
	\begin{align*}
		& \frac{2\gamma^2 }{\left(1+\gamma \delta_0\right)\left(1+\gamma \delta_1\right)} \frac{1}{n} \tr \mathbf{\Sigma}_0 \mathbf{T}_0 \mathbf{\Sigma}_1 \mathbf{T}_1
	 < \frac{\gamma  \|\mathbf{\Sigma}_0\mathbf{T}_0\|}{1+\gamma \delta_0}  \\ & = \frac{\frac{\gamma \lambda}{1+\frac{\gamma \lambda}{1+\gamma \delta_0}}}{1+\gamma \delta_0}	 = \frac{\frac{\gamma \lambda}{1+\gamma \delta_0}}{1+\frac{\gamma \lambda}{1+\gamma \delta_0}} < 1.
	\end{align*}
	Thus, (\ref{limsup}) holds. Using Assumptions \ref{As:1} and \ref{As:3} and by (\ref{limsup})
	\begin{align*}
	&\frac{1}{\left(1+\gamma \delta_0\right)\left(1+\gamma \delta_1\right)} \\ & \times  \left(1-\frac{2\gamma^2 }{\left(1+\gamma \delta_0\right)\left(1+\gamma \delta_1\right)} \frac{1}{n} \tr \mathbf{\Sigma}_0 \mathbf{T}_0 \mathbf{\Sigma}_1 \mathbf{T}_1\right) ^{-1}   = O\left(1\right),
	\end{align*}
	which implies using Assumption \ref{As:4} that
	\begin{align}
	\label{delta_gap}
	\delta_0-\delta_1 = O\left(\frac{1}{\sqrt{p}}\right).
	\end{align}
	This gives the claim of (\ref{req1}).
	For (\ref{req2}), and using the inequality $\log \left(x\right) \leq x-1$, for $x>0$ we can show that
	\begin{align*}
	& \begin{vmatrix}
	\log \left(1+\gamma \delta_1\right)-\log \left(1+\gamma \delta_0\right)
	\end{vmatrix}  = \begin{vmatrix}\log \frac{1+\gamma \delta_1}{1+\gamma \delta_0} \end{vmatrix}  \\ &
	 =  \begin{vmatrix}\log \left( 1+\gamma\frac{ \delta_1 - \delta_0}{1+\gamma \delta_0} \right )\end{vmatrix} 
	\leq \frac{\gamma \begin{vmatrix}
		\delta_0 - \delta_1
		\end{vmatrix}}{1+ \gamma \min \left(\delta_0,\delta_1\right)}.
	\end{align*}
	Following the result of (\ref{req1}), (\ref{req2}) also holds. \\
	As for \eqref{req3}, it suffices to notice that:
	\begin{align*}
	& \frac{1}{\sqrt{p}}\log\det {\bf T}_0{\bf T}_1^{-1} \\ &=\frac{1}{\sqrt{p}}\log\det \left({\bf I}_p+{\bf T}_0^{\frac{1}{2}}\left(\gamma\tilde{\delta_1}\boldsymbol{\Sigma}_1-\gamma\tilde{\delta_0}\boldsymbol{\Sigma}_0\right){\bf T}_0^{\frac{1}{2}}\right)\\
	&=\frac{1}{\sqrt{p}}\log\det \left[{\bf I}_p+\gamma \tilde{\delta}_1{\bf T}_0(\boldsymbol{\Sigma}_1-\boldsymbol{\Sigma}_0)+ \frac{\tilde{\delta}_1-\tilde{\delta}_0}{\tilde{\delta}_0}\left(\mathbf{I}_p - \mathbf{T}_0\right)\right]\\
	\end{align*}
	Define $\mathbf{\Phi}$ the matrix that has the same eigenvectors as $\mathbf{\Sigma}_1-\mathbf{\Sigma}_0$ with eigenvalues $\phi_i = |\lambda_i\left(\mathbf{\Sigma}_1-\mathbf{\Sigma}_0\right)|$, $i = 1, \cdots, p$. Then, since ${\bf T}_0\preceq {\bf I}_p$, we have the following  
	\begin{align*}
	& \left|	\frac{1}{\sqrt{p}}\log\det {\bf T}_0{\bf T}_1^{-1} \right| \\ &\leq\frac{1}{\sqrt{p}} \log \det \left[ \mathbf{I}_p + \gamma \tilde{\delta}_1 {\bf T}_0^{\frac{1}{2}}\mathbf{\Phi}{\bf T}_0^{\frac{1}{2}} + \frac{\left|\tilde{\delta}_1-\tilde{\delta}_0\right|}{\tilde{\delta}_0} \mathbf{I}_p\right]. 
	\end{align*}
	Given that $0 \preceq \mathbf{\Phi}$, we have 
	\begin{align*}
	& \frac{1}{\sqrt{p}} \log \det \left[ \mathbf{I}_p + \gamma \tilde{\delta}_1 {\bf T}_0^{\frac{1}{2}}\mathbf{\Phi}{\bf T}_0^{\frac{1}{2}} + \frac{\left|\tilde{\delta}_1-\tilde{\delta}_0\right|}{\tilde{\delta}_0} \mathbf{I}_p\right] \\ &  \leq \frac{1}{\sqrt{p}} \tr \left[\gamma \tilde{\delta}_1 \mathbf{\Phi} {\bf T}_0+ \frac{\left|\tilde{\delta}_1-\tilde{\delta}_0\right|}{\tilde{\delta}_0} \mathbf{I}_p\right]
	\end{align*}
	By Assumption \ref{As:4}, $\frac{1}{\sqrt{p}} \tr \left[\gamma \tilde{\delta}_1 \mathbf{\Phi}{\bf T}_0 + \frac{\left|\tilde{\delta}_1-\tilde{\delta}_0\right|}{\tilde{\delta}_0} \mathbf{I}_p\right] = O\left(1\right)$. This completes the proof.

	\section{Proof of Theorem \ref{LDA_estimator}}
	\label{appendix:GE_LDA}
	We start the proof by showing the following
	\begin{align*}
	G\left(\hat{\bm{\mu}}_i,\hat{\bm{\mu}}_0,\hat{{\bm{\mu}}}_1,\mathbf{H}\right) + \left(-1\right)^{i+1} \widehat{\theta}_i - G\left(\bm{\mu}_i,{\hat{\bm{\mu}}}_0,\hat{\bm{\mu}}_1,\mathbf{H}\right) \asto 0.
	\end{align*}
	To this end, note that
	\begin{align*}
	G\left(\hat{\bm{\mu}}_i,\hat{\bm{\mu}}_0,\hat{\bm{\mu}}_1,\mathbf{H}\right) = G\left(\bm{\mu}_i,\hat{\bm{\mu}}_0,\hat{\bm{\mu}}_1,\mathbf{H}\right) + \frac{\mathbf{1}_{n_i}^T}{n_i} \mathbf{Y}_i \mathbf{H}\left(\hat{\bm{\mu}}_0-\hat{\bm{\mu}}_1\right).
	\end{align*}
	Using the same arguments used to prove \eqref{G_i}, we can easily show that
	\begin{align*}
	G\left(\hat{\bm{\mu}}_i,\hat{\bm{\mu}}_0,\hat{\bm{\mu}}_1,\mathbf{H}\right) = G\left(\bm{\mu}_i,\hat{\bm{\mu}}_0,\hat{\bm{\mu}}_1,\mathbf{H}\right) +  \frac{\left(-1\right)^i}{n_i} \tr \mathbf{\Sigma}_i\mathbf{H}+o(1)
	\end{align*}
	It remains now to show that
	\begin{align}
	\label{theta_hat}
	\widehat{\theta}_i - \frac{1}{n_i} \tr \mathbf{\Sigma}_i\mathbf{H} \asto 0.
	\end{align}
	To this end, we examine the convergence of the quantity $\frac{1}{n_i} \tr \widehat{\mathbf{\Sigma}}_i\mathbf{H}$. Recall from \eqref{Sigma_simplified} that
	\begin{align*}
	\widehat{\mathbf{\Sigma}}_i & = \frac{1}{n_i-1} \widetilde{\mathbf{Y}}_i\widetilde{\mathbf{Y}}_i^T - \frac{1}{n_i-1} \widetilde{\mathbf{y}}_{i,1}\widetilde{\mathbf{y}}_{i,1}^T \\
	& = \frac{1}{n_i-1} \sum_{j=1}^{n_i} \widetilde{\mathbf{y}}_{i,j}\widetilde{\mathbf{y}}_{i,j}^T - \frac{1}{n_i-1} \widetilde{\mathbf{y}}_{i,1}\widetilde{\mathbf{y}}_{i,1}^T
	\end{align*}
	Let ${\bf H}_{[j]}=\left(\gamma\widehat{\boldsymbol{\Sigma}}-\frac{\gamma}{n-2}\widetilde{\bf y}_{i,j}\widetilde{\bf y}_{i,j}^{T}+{\bf I}_p\right)^{-1}$
	Thus,
	\begin{align*}
	\frac{1}{n_i} \tr \widehat{\mathbf{\Sigma}}_i\mathbf{H} = \frac{1}{n_i}\sum_{j=2}^{n_i}\frac{\frac{1}{n_i-1}\widetilde{\bf y}_{i,j}^{T}{\bf H}_{[j]}\widetilde{\bf y}_{i,j}}{1+\frac{\gamma}{n-2}\widetilde{\bf y}_{i,j}^{T}{\bf H}_{[j]}\widetilde{\bf y}_{i,j}}
	\end{align*}
	Thanks to the independence between $\mathbf{H}_j$ and $\widetilde{\mathbf{y}}_{i,j}$ and by simple application of the trace Lemma \cite{couillet}, we have
	\begin{align*}
	\frac{1}{n_i} \tr \widehat{\mathbf{\Sigma}}_i\mathbf{H} - \frac{1}{n_i} \tr \mathbf{\Sigma}_i\mathbf{H} \frac{1}{1+\frac{\gamma}{n-2}\tr \boldsymbol{\Sigma}_i{\bf H}}\to_{a.s.} 0.
	\end{align*}
	By simple manipulations, we have the convergence in \eqref{theta_hat}. \par
	Now, using the same tricks consisting in using the inversion Lemma along with the trace Lemma, we obtain
	\begin{align*}
	\widehat{\psi}_i^2	D\left({\hat{\bm{\mu}}}_0,\hat{\bm{\mu}}_1,\mathbf{H},\widehat{\mathbf{\Sigma}}_i\right) - D\left({\hat{\bm{\mu}}}_0,\hat{\bm{\mu}}_1,\mathbf{H},\mathbf{\Sigma}_i\right) \asto 0.
	\end{align*}
	\section{Proof of Theorem \ref{QDA_estimator}}
	\label{appendix:QDA_estimator}
	The proof consists in proving the following convergences
	\begin{align}
	\label{xi_estim}
	\widehat{\xi}_i - \frac{1}{\sqrt{p}}\xi_i \probto 0.
	\end{align}
	\begin{align}
	\label{bi_estim}
	\widehat{b}_i - \frac{1}{\sqrt{p}} \tr \mathbf{B}_i \probto 0.
	\end{align}
	\begin{align}
	\label{Bi_estim}
	\widehat{B}_i - \frac{1}{p} \tr \mathbf{B}_i^2 \probto 0.
	\end{align}
	The proof of \eqref{xi_estim} is straightforward and relies on the following facts
	\begin{align*}
	\frac{1}{\sqrt{p}} \left(\bm{\mu}_i -\hat{\bm{\mu}}_i\right)^T \mathbf{H}_i  \left(\bm{\mu}_i -\hat{\bm{\mu}}_i\right) \probto 0.
	\end{align*}
	\begin{align*}
	& \frac{1}{\sqrt{p}}	\left(\bar{\hat{\bm{\mu}}}_0-\hat{\bm{\mu}}_1\right)^T \mathbf{H}_{1-i} \left(\hat{\bm{\mu}}_0-\hat{\bm{\mu}}_1\right) \\ &  -  	\frac{1}{\sqrt{p}}	\left(\bm{\mu}_i-\hat{\bm{\mu}}_{1-i}\right)^T \mathbf{H}_{1-i} 	\left(\bm{\mu}_i-\hat{\bm{\mu}}_{1-i}\right) \probto 0.
	\end{align*}
	The proof of \eqref{bi_estim} relies on the fact that $\widehat{\delta}_i$ is a consistent estimator of $\delta_i$ as shown in \cite{zollanvari}, the variance of which can be shown to be of order $O(p^{-2})$.  Thus,
	\begin{align*}
	\frac{1}{\sqrt{p}} \tr \mathbf{\Sigma}_i\mathbf{T}_i -  \frac{n_i}{\sqrt{p}} \widehat{\delta}_i \probto 0.
	\end{align*}
	Also, we have, for $i\in\left\{0,1\right\}$,
	\begin{align*}
	\frac{1}{\sqrt{p}} \tr \widehat{\mathbf{\Sigma}}_i\mathbf{H}_{1-i} - \frac{1}{\sqrt{p}} \tr \mathbf{\Sigma}_i\mathbf{H}_{1-i} \probto 0.
	\end{align*}
	which gives the convergence in \eqref{bi_estim}. \\
	\subsection{Proof of \eqref{Bi_estim}}
	The proof of \eqref{Bi_estim} is a bit more involved than those of \eqref{xi_estim} and \eqref{bi_estim} as we will show in the following. The proof mainly relies on the application of the inversion lemma followed by the trace lemma \cite{couillet}. Recall that 
	\begin{align*}
	\frac{1}{p} \tr \mathbf{B}_i^2  &=\frac{1}{p}\tr \boldsymbol{\Sigma}_i {\bf H}_1\boldsymbol{\Sigma}_i{\bf H}_1-\frac{2}{p}\tr \boldsymbol{\Sigma}_i{\bf H}_0\boldsymbol{\Sigma}_i{\bf H}_1 \\ &  +\frac{1}{p}\tr \boldsymbol{\Sigma}_i{\bf H}_0\boldsymbol{\Sigma}_i{\bf H}_0.
	\end{align*}
	Without loss of generality, we can assume $i=1$, the other case follows naturally. We start by handling the term $\frac{1}{p}\tr \boldsymbol{\Sigma}_1 {\bf H}_1\boldsymbol{\Sigma}_1{\bf H}_1$. The common method here is to replace $\mathbf{\Sigma}_1$ by its sample estimate $\widehat{\mathbf{\Sigma}}_1$, then compute the limit of the obtained expression and perform the necessary corrections to obtain the estimate of interest. In fact, we have 
	\begin{align*}
	&\frac{1}{p}\tr \widehat{\boldsymbol{\Sigma}}_1{\bf H}_1\widehat{\boldsymbol{\Sigma}}_1{\bf H}_1=\frac{1}{p}\sum_{j=1}^{n_1-1}\sum_{k=1}^{n_1-1}\frac{1}{p}\tr \frac{\widetilde{\bf y}_{1,j}\widetilde{\bf y}_{1,j}^{T}}{n_1-1}{\bf H}_1\frac{\widetilde{\bf y}_{1,k}\widetilde{\bf y}_{1,k}^{T}}{n_1-1}{\bf H}_1\\
	&=\frac{1}{p}\sum_{j=1}^{n_1-1}\sum_{k\neq j}\tr \frac{\widetilde{\bf y}_{1,j}\widetilde{\bf y}_{1,j}^{T}}{n_1-1}{\bf H}_1\frac{\widetilde{\bf y}_{1,k}\widetilde{\bf y}_{1,k}^{T}}{n_1-1}{\bf H}_1 \\ & +\frac{1}{p}\sum_{j=1}^{n_1-1}\tr\frac{\widetilde{\bf y}_{1,j}\widetilde{\bf y}_{1,j}^{T}}{n_1-1}{\bf H}_1\frac{\widetilde{\bf y}_{1,j}\widetilde{\bf y}_{1,j}^{T}}{n_1-1}{\bf H}_1
	\end{align*}
	Using the inversion lemma, we handle the first term in the previous equation as follows
	\begin{align*}
	&\frac{1}{p}\sum_{j=2}^{n_1-1}\sum_{k\neq j}\tr \frac{\widetilde{\bf y}_{1,j}\widetilde{\bf y}_{1,j}^{T}}{n_1-1}{\bf H}_1\frac{\widetilde{\bf y}_{1,k}\widetilde{\bf y}_{1,k}^{T}}{n_1-1}{\bf H}_1 \\ & =\frac{1}{p}\sum_{j=2}^{n_1-1}\sum_{k\neq j}\frac{\left(\frac{1}{n_i-1}\widetilde{\bf y}_{1,j}^{T}{\bf H}_{1,j,k}\widetilde{\bf y}_{1,k}\right)^2}{(1+\frac{\gamma}{n_1-1}\widetilde{\bf y}_{1,j}^{T}{\bf H}_{1,j}\widetilde{\bf y}_{1,j})^2(1+\frac{\gamma}{n_i}\widetilde{\bf y}_{1,k}{\bf H}_{1,j,k}\widetilde{\bf y}_{1,k})^2}
	\end{align*}
	where $${\bf H}_{1,j}=\left({\bf I}_p+\gamma\widehat{\boldsymbol{\Sigma}}_i-\frac{\gamma}{n_1-1}\widetilde{\bf y}_{1,j}\widetilde{\bf y}_{1,j}^{T}\right)^{-1}$$
	and $${\bf H}_{1,j,k}=\left({\bf I}_p+\gamma\widehat{\boldsymbol{\Sigma}}_i-\frac{\gamma}{n_1-1}\widetilde{\bf y}_{1,j}\widetilde{\bf y}_{1,j}^{T}-\frac{\gamma}{n_1-1}\widetilde{\bf y}_{1,k}\widetilde{\bf y}_{1,k}^{T}\right)^{-1}.$$
	We now refer to the use of the trace lemma to 
	replace the denominator by its deterministic equivalents, thus we get
	\begin{align*}
	&\frac{1}{p}\sum_{j=2}^{n_1-1}\sum_{k\neq j}\tr \frac{\widetilde{\bf y}_{1,j}\widetilde{\bf y}_{1,j}^{T}}{n_1-1}{\bf H}_1\frac{\widetilde{\bf y}_{1,k}\widetilde{\bf y}_{1,k}^{T}}{n_1-1}{\bf H}_1 \\ & =\frac{n_i}{p}\frac{\left(\frac{1}{n_1-1}\tr \boldsymbol{\Sigma}{\bf H}_1\boldsymbol{\Sigma}{\bf H}_1\right)}{(1+\frac{\gamma}{n_1-1}\tr \boldsymbol{\Sigma}_1{\bf H}_1)^4} + o(1).
	\end{align*}
	Using similar steps, the second term can be approximated as follows
	$$ \frac{1}{p}\sum_{j=1}^{n_1-1}\tr\frac{\widetilde{\bf y}_{1,j}\widetilde{\bf y}_{1,j}^{T}}{n_1-1}{\bf H}_1\frac{\widetilde{\bf y}_{1,j}\widetilde{\bf y}_{1,j}^{T}}{n_1-1}{\bf H}_1 = 
	\frac{n_i}{p}\frac{\left(\frac{1}{n_i-1}\tr \boldsymbol{\Sigma}_1{\bf H}_1\right)^2}{(1+\frac{\gamma}{n_1-1}\tr \boldsymbol{\Sigma}_1{\bf H}_1)^2} + o(1).
	$$
	We thus obtain
	\begin{align*}
	\frac{1}{p}\tr \widehat{\boldsymbol{\Sigma}}_1{\bf H}_1\widehat{\boldsymbol{\Sigma}}_1{\bf H}_1& =\frac{n_i}{p}\frac{\left(\frac{1}{n_i-1}\tr \boldsymbol{\Sigma}_1{\bf H}_1\boldsymbol{\Sigma}_1{\bf H}_1\right)}{(1+\frac{\gamma}{n_i-1}\tr \boldsymbol{\Sigma}_1{\bf H}_1)^4}
	 \\ & +\frac{n_i}{p}\frac{\left(\frac{1}{n_i-1}\tr \boldsymbol{\Sigma}_1{\bf H}_1\right)^2}{(1+\frac{\gamma}{n_1-1}\tr \boldsymbol{\Sigma}_1{\bf H}_1)^2}+o(1).
	\end{align*}
	We will now handle the term $\frac{1}{p}\tr {\boldsymbol{\Sigma}_1}{\bf H}_0\boldsymbol{\Sigma}_1{\bf H}_0$. Again, we start by replacing $\boldsymbol{\Sigma}_1$ by $\widehat{\boldsymbol{\Sigma}}_1$. In doing so, we obtain: 
	\begin{align*}
	&\frac{1}{p}\tr \widehat{\boldsymbol{\Sigma}}_1{\bf H}_0\widehat{\boldsymbol{\Sigma}}_1{\bf H}_0=\frac{1}{p}\sum_{j=2}^{n_1-1}\sum_{k=2}^{n_1-1} \frac{\widetilde{\bf y}_{1,j}\widetilde{\bf y}_{1,j}^{T}}{n_1-1}{\bf H}_0\frac{\widetilde{\bf y}_{1,k}\widetilde{\bf y}_{1,k}}{n_1-1}{\bf H}_0\\
	&=\frac{1}{p}\sum_{j=2}^{n_1-1}\left(\frac{\widetilde{\bf y}_{1,j}^{T}{\bf H}_0\widetilde{\bf y}_{1,j}}{n_1-1}\right)^2+\frac{1}{p}\sum_{j=2}\sum_{k\neq j} \frac{\widetilde{\bf y}_{1,j}\widetilde{\bf y}_{1,j}^{T}}{n_1-1}{\bf H}_0\frac{\widetilde{\bf y}_{1,k}\widetilde{\bf y}_{1,k}}{n_1-1}{\bf H}_0\\
	&=\frac{n_i}{p}\left(\frac{1}{n_i-1}\tr \boldsymbol{\Sigma}_1{\bf H}_0\right)^2 +\frac{1}{p}\tr \boldsymbol{\Sigma}_1{\bf H}_0\boldsymbol{\Sigma}_1{\bf H}_0+o(1)
	\end{align*}
	It remains now to handle the term $\frac{1}{p}\tr \boldsymbol{\Sigma}_1{\bf H}_1\boldsymbol{\Sigma}_1{\bf H}_0$. Using the same reasoning, we have:
	\begin{align*}
	&\frac{1}{p}\tr \widehat{\boldsymbol{\Sigma}}_1{\bf H}_1\widehat{\boldsymbol{\Sigma}}_1{\bf H}_0=\frac{1}{p}\sum_{j=2}^{n_1-1}\sum_{k=2}^{n_1-1}\frac{\widetilde{\bf y}_{1,j}\widetilde{\bf y}_{1,j}^{T}}{n_1-1}{\bf H}_1\frac{\widetilde{\bf y}_{1,k}\widetilde{\bf y}_{1,k}^{T}}{n_1-1}{\bf H}_0\\
	&=\frac{1}{p}\sum_{j=2}^{n_1-1}\frac{1}{(n_i-1)^2}\widetilde{\bf y}_{1,j}^{T}{\bf H}_1\widetilde{\bf y}_{1,j}\widetilde{\bf y}_{1,j}^{T}{\bf H}_0\widetilde{\bf y}_{1,j}  \\ & +\frac{1}{p}\sum_{j=2}^{n_1-1}\sum_{k\neq j} \frac{1}{(n_1-1)^2}\tr \widetilde{\bf y}_{1,j}\widetilde{\bf y}_{1,j}^{T}{\bf H}_1\widetilde{\bf y}_{1,k}\widetilde{\bf y}_{1,k}^{T}{\bf H}_0
	\end{align*}
	Using the inversion Lemma along with the trace Lemma, we ultimately find:
	\begin{align*}
	\frac{1}{p}\tr \widehat{\boldsymbol{\Sigma}}_1{\bf H}_1\widehat{\boldsymbol{\Sigma}}_1{\bf H}_0 & =\frac{n_i}{p}\frac{1}{n_i}\tr \boldsymbol{\Sigma}_1{\bf H}_0 \frac{\frac{1}{n_i-1}\tr \boldsymbol{\Sigma}_1{\bf H}_1}{1+\frac{\gamma}{n_1-1}\tr \boldsymbol{\Sigma}_1{\bf H}_1} \\ & +\frac{\frac{1}{p}\tr \boldsymbol{\Sigma}_1{\bf H}_1\boldsymbol{\Sigma}_1{\bf H}_0}{(1+\frac{\gamma}{n_1-1}\tr \boldsymbol{\Sigma}_1{\bf H}_1)^2}+o(1).
	\end{align*}
	Now, we will put things together. 
	We have the following
	\begin{align*}
	& \frac{1}{p}\tr \boldsymbol{\Sigma}_1{\bf H}_1\boldsymbol{\Sigma}_1{\bf H}_1  =(1+\frac{\gamma}{n_1-1}\tr \boldsymbol{\Sigma}_1{\bf H}_1)^4\frac{1}{p}\tr \widehat{\boldsymbol{\Sigma}}_1{\bf H}_1\widehat{\boldsymbol{\Sigma}}_1{\bf H}_1 \\ &-\frac{n_1}{p}\left(\frac{1}{n_1}\tr \boldsymbol{\Sigma}_1{\bf H}_1\right)^2\left(1+\frac{\gamma}{n_1-1}\tr \boldsymbol{\Sigma}_1{\bf H}_1\right)^2+o(1).
	\end{align*}
	\begin{align*}
	\frac{1}{p}\tr \boldsymbol{\Sigma}_1{\bf H}_0\boldsymbol{\Sigma}_1{\bf H}_0 & =\frac{1}{p}\tr\widehat{\boldsymbol{\Sigma}}_1{\bf H}_0\widehat{\boldsymbol{\Sigma}}_1{\bf H}_0-\frac{n_1}{p}\left(\frac{1}{n_1}\tr \boldsymbol{\Sigma}_1{\bf H}_0\right)^2 \\ & +o(1).
	\end{align*}
	and
	\begin{align*}
	& \frac{1}{p}\tr \boldsymbol{\Sigma}_1{\bf H}_1\boldsymbol{\Sigma}_1{\bf H}_0 =(1+\frac{\gamma}{n_1-1}\tr\boldsymbol{\Sigma}_1{\bf H}_1)^2\frac{1}{p}\widehat{\boldsymbol{\Sigma}}_1{\bf H}_1\widehat{\boldsymbol{\Sigma}}_1{\bf H}_0 \\ &-\frac{n_1}{p}\frac{1}{n_1}\tr \boldsymbol{\Sigma}_1{\bf H}_0\frac{1}{n_1}\boldsymbol{\Sigma}_1{\bf H}_1(1+\frac{\gamma}{n_1-1}\tr \boldsymbol{\Sigma}_1{\bf H}_1)+o(1).
	\end{align*}
	A consistent estimator of $\frac{1}{p}\tr {\bf B}_1^2$ is thus given by
	\begin{align*}
	&	\frac{1}{p}\tr {\bf B}_1^2=\left(1+\frac{\gamma}{n_1-1}\tr \boldsymbol{\Sigma}_1{\bf H}_1\right)^4\frac{1}{p}\tr \widehat{\boldsymbol{\Sigma}}_1{\bf H}_1\widehat{\boldsymbol{\Sigma}}_1{\bf H}_1 \\ & -\frac{n_1}{p}\left(\frac{1}{n_1}\tr \boldsymbol{\Sigma}_1{\bf H}_1\right)^2\left(1+\frac{\gamma}{n_1-1}\tr \boldsymbol{\Sigma}_1{\bf H}_1\right)^2\\
	& +\frac{1}{p}\tr \widehat{\boldsymbol{\Sigma}}_1{\bf H}_0\widehat{\boldsymbol{\Sigma}}_1{\bf H}_0-\frac{n_1}{p}\left(\frac{1}{n_1}\tr \boldsymbol{\Sigma}_1{\bf H}_0\right)^2  \\ & -2\left(1+\frac{\gamma}{n_1-1}\tr \boldsymbol{\Sigma}_1{\bf H}_1\right)^2\frac{1}{p}\tr \widehat{\boldsymbol{\Sigma}}_1{\bf H}_1\widehat{\boldsymbol{\Sigma}}_1{\bf H}_0\\ &+\frac{2n_1}{p}\frac{1}{n_1}\tr \boldsymbol{\Sigma}_1{\bf H}_0\frac{1}{n_1}\tr \boldsymbol{\Sigma}_1{\bf H}_1\left(1+\frac{\gamma}{n_1-1}\tr \boldsymbol{\Sigma}_1{\bf H}_1\right).
	\end{align*}
	We replace $\frac{1}{n_1}\tr \boldsymbol{\Sigma}_1{\bf H}_1$ and $\frac{1}{n_1}\tr \boldsymbol{\Sigma}_1{\bf H}_0$ by their respective consistent estimates $\widehat{\delta}_1$ and $\frac{1}{n_1} \tr \widehat{\mathbf{\Sigma}}_1 \mathbf{H}_0$ to get the consistent estimate for $\frac{1}{p}\tr {\bf B}_1^2$. 
	By this, we achieve the proof of the theorem.
\end{appendices}
%
%
%
%
%
%


\end{document}